\documentclass{article}

\usepackage{PRIMEarxiv}

\usepackage{xcolor}
\usepackage[normalem]{ulem} 

\usepackage[ruled,vlined]{algorithm2e}

\usepackage[utf8]{inputenc} 
\usepackage[T1]{fontenc}    
\usepackage{hyperref}       
\usepackage{xr-hyper}
\usepackage{url}            
\usepackage{booktabs}       
\usepackage{amsfonts}       
\usepackage{nicefrac}       
\usepackage{microtype}      
\usepackage{xcolor}         
\usepackage{blindtext}
\usepackage{amsmath}
\usepackage[nameinlink,capitalize,noabbrev]{cleveref}
\usepackage[ruled,vlined]{algorithm2e}
\usepackage{amsthm}
\usepackage{graphicx}
\usepackage{subcaption}
\usepackage{mathtools}
\usepackage{dsfont}
\usepackage{thmtools,thm-restate}
\usepackage{natbib}
\usepackage{tikz}
\usepackage{pgf}
\usepackage{wrapfig}
\usepackage{amssymb}
\usepackage{verbatim}
\usepackage{etoc}
\usepackage{fontawesome}
\usepackage{threeparttable}

\usepackage{xcolor}

\newtheorem{theorem}{Theorem}
\newtheorem{lemma}[theorem]{Lemma}
\newtheorem{proposition}[theorem]{Proposition}

\theoremstyle{definition}

\newcommand{\iid}{\mathrel{\overset{\mathrm{iid}}{\sim}}}

\newcommand{\E}{\mathbb{E}}

\newcommand{\cO}{O}

\newcommand{\cF}{\mathcal{F}}
\newcommand{\R}{\mathbb{R}}

\usepackage{tikz}
\usetikzlibrary{external}
\usepackage[utf8]{inputenc} 
\usepackage[T1]{fontenc}    
\usepackage{hyperref}       
\usepackage{url}            
\usepackage{booktabs}       
\usepackage{amsfonts}       
\usepackage{nicefrac}       
\usepackage{microtype}      
\usepackage{lipsum}
\usepackage{fancyhdr}       
\usepackage{graphicx}       
\graphicspath{{media/}}     

\title{Efficient Logistic Regression with \\ Mixture of Sigmoids}

\authorA{%
\textbf{Federico Di Gennaro} \\
ETH Zürich \\
\texttt{fdigennaro@ethz.ch}
}

\authorB{%
\textbf{Saptarshi Chakraborty} \\
University of Michigan \\
\texttt{saptarsc@umich.edu}
}

\authorC{%
\textbf{Nikita Zhivotovskiy} \\
UC Berkeley \\
\texttt{zhivotovskiy@berkeley.edu}
}

\begin{document}
\maketitle

\begin{abstract}
This paper studies the Exponential Weights (EW) algorithm with an isotropic Gaussian prior for online logistic regression. We show that the near-optimal worst-case regret bound \(O(d\log(Bn))\) for EW, established by \citet{kakade_ng_bayesianalg} against the best linear predictor of norm at most \(B\), can be achieved with total worst-case computational complexity \(\tilde O(B^3 n^5)\). This substantially improves on the \(O(B^{18}n^{37})\) complexity of prior work achieving the same guarantee \citep{foster2018logistic}. Beyond efficiency, we analyze the large-\(B\) regime under linear separability: after rescaling by \(B\), the EW posterior converges as \(B\to\infty\) to a standard Gaussian truncated to the version cone. Accordingly, the predictor converges to a \emph{solid-angle vote} over separating directions and, on every fixed-margin slice of this cone, the mode of the corresponding truncated Gaussian is aligned with the hard-margin SVM direction. Using this geometry, we derive non-asymptotic regret bounds showing that once \(B\) exceeds a margin-dependent threshold, the regret becomes independent of \(B\) and grows only logarithmically with the inverse margin. Overall, our results show that EW can be both computationally tractable and geometrically adaptive in online classification.
\end{abstract}

\keywords{Online Logistic Regression \and Exponential Weights \and Sampling}

\section{Introduction}
\label{sec:introduction}

Logistic regression is arguably the most well-known method for probability estimation and binary classification. In this paper, we study the online version of the problem, where a learner interacts with an environment over $n$ rounds and updates its prediction (usually in the form of a logit score $\hat{z}_t$ induced by a linear predictor) \emph{on the fly} given the data $x_t$ at hand\footnote{Usually assumed to be normalized.} \citep{cesa2006prediction}. The true label $y_t\in\{\pm1\}$ is then revealed and the learner incurs the logistic loss $\ell_{\log}(z,y)=\log(1+\exp(-yz))$. Performance is measured by \emph{regret}, i.e., the excess cumulative loss relative to the best bounded-norm linear predictor chosen in hindsight:
\begin{equation}
\label{eq: regret definition}
R_n = \sum_{t=1}^n \ell_{\log}(\hat z_t,y_t)
      - \inf_{\theta\in\cF}\sum_{t=1}^n \ell_{\log}(\langle \theta,x_t\rangle,y_t),
\end{equation}
where $\cF=\{\theta\in\R^d:\|\theta\|\le B\}$. In general, the goal in adversarial online learning is to design sequential predictors with sublinear regret, namely $R_n=o(n)$. In online logistic regression, the mixability of the logistic loss allows one to achieve a regret rate of order $O(\log n)$, and several works have studied how this optimal dependence on $n$ can be obtained \citep{kakade_ng_bayesianalg,vovk2001competitive,JMLRvanerven15a}.

A deeper question is how the regret scales with the comparator radius $B$. A key difficulty is that, when $B=\Omega(\log n)$, any \emph{proper} algorithm --- an algorithm whose predictions correspond to a $\theta\in\cF$ --- can suffer $\Omega(\sqrt{n})$ regret in the worst case \citep{hazan2014logistic}. To overcome this lower bound, \citet{kakade_ng_bayesianalg, foster2018logistic} show the importance of using \emph{improper} strategies to achieve the near-minimax worst-case rate $O(d\log(Bn))$. However, the procedures achieving this optimal rate are computationally prohibitive, and existing efficient methods typically lose the logarithmic dependence on $B$ (see \cref{tab:algorithms}). This raises a first question:

\begin{center}
\begin{minipage}{0.9\columnwidth}
\centering
\textit{\textbf{Q1.} Can we achieve the optimal $O(d\log(Bn))$ regret with a more efficient predictor?}
\end{minipage}
\end{center}

Second, in some relevant scenarios, bounding the comparator norm is unnatural or even vacuous. For instance, when the data are linearly separable, the logistic loss has no finite minimizer: its infimum is zero and is approached only as $\|\theta\|\to\infty$ along separating directions \citep{albert1984existence,Soudry18}. In such a regime, it is therefore natural to ask whether one can allow an unbounded comparator norm. However, it is unclear how existing estimators behave in this regime, and whether they still yield non-vacuous guarantees (cf. \cref{tab:algorithms}). This leads to our second question:

\begin{center}
\begin{minipage}{0.9\columnwidth}
\centering
\textit{\textbf{Q2.} Can a single predictor be used to get non-vacuous regret bounds for both the case of bounded and unbounded comparator norm $B$?}
\end{minipage}
\end{center}

We address the above questions through a \emph{mixture-of-sigmoids} predictor arising from the \emph{Exponential Weights} (EW) algorithm with an isotropic Gaussian prior. Because the Gaussian prior makes the posterior potential globally strongly convex while keeping the parameter space unconstrained, we can draw on recent results for unconstrained sampling from strongly log-concave distributions \citep{ChewiBook24}. This yields an efficient and computable implementation of EW that still provably attains the $O(d\log(Bn))$ regret rate.

Moreover, we analyze how this \emph{mixture of sigmoids} behaves beyond the standard bounded-comparator regime. We show that EW naturally adapts (i.e. there is no need to modify the prediction rule) to the case of linearly separable data, when we naturally require unbounded comparator norm. In this scenario, as $B\to\infty$, the EW predictor converges to a \emph{solid-angle vote} over separating directions. Interestingly, such a limiting estimator has a clear connection with the hard-margin Support Vector Machine (SVM) solution and, intuitively, this motivates the fact that a margin-dependent (and thus non-vacuous) regret bound when data are linearly separable and $B=\infty$ is achievable.

\subsection{Summary of key contributions}
The contributions of this paper are twofold, and answer the questions raised above:
\begin{itemize}
    \item We show that Gaussian-prior EW achieves the near-optimal regret rate $O(d\log(Bn))$, and we provide an explicit implementation whose regret is provably of the same order, with total worst-case complexity $\tilde O(B^3 n^5)$. This substantially improves over the $O(B^{18}n^{37})$ computational complexity of \citep{foster2018logistic} for the same regret rate. The key enabling feature is that the Gaussian-prior EW posterior is unconstrained on $\mathbb{R}^d$, which allows us to leverage unconstrained sampling machinery rather than projection-based methods such as the \emph{Projected Langevin Algorithm} \citep{bubeck2018sampling} used in \citep[Appendix B]{foster2018logistic}.

    \item In the separable regime, we characterize the limit as $B\to\infty$: the EW predictive distribution converges to a solid-angle vote induced by a standard Gaussian truncated to the version cone. Moreover, on every fixed-margin slice of this cone, the mode of the limiting truncated Gaussian is aligned with the hard-margin SVM direction. We also prove non-asymptotic regret bounds showing that, once $B$ exceeds a margin-dependent threshold, the regret becomes independent of $B$ and depends only logarithmically on the inverse margin.
\end{itemize}

\subsection{Related Work}
\label{sec:related_work}

Logistic regression is a fundamental supervised learning framework for estimating conditional probabilities. In the classical batch setting, given data $\{(x_i,y_i)\}_{i=1}^n$ with labels $y_i\in\{\pm1\}$, the probability is modeled as $\mathbb{P}(Y=1\mid x;\theta)=\sigma(\langle\theta,x\rangle)$, where $\sigma(s)=(1+e^{-s})^{-1}$ is the sigmoid function. The unknown parameter vector $\theta\in\mathbb{R}^d$ is typically estimated via the \emph{Maximum Likelihood Estimation} (MLE) \citep{McCullaghNelder1989,HastieTibshiraniFriedman2009,Murphy2012}; the MLE selects $\hat\theta_{\text{MLE}}$ to maximize the likelihood $\prod_{i=1}^n \sigma(y_i\langle\theta,x_i\rangle)$ or, equivalently, to minimize the cumulative logistic loss. A natural extension of MLE to the sequential setup, is to compute the MLE on the history available at time $t$, a strategy known as \emph{Follow-The-Leader} (FTL) or, with $\ell_2$-regularization, \emph{Follow-The-Regularized-Leader} (FTRL) \citep{ShalevShwartz2012,Hazan2016}. Other standard methods for online logistic regression are first- or second-order algorithms from \emph{Online Convex Optimization} such as Online Gradient Descent \citep{zinkevich2003online} and Online Newton Step \citep{hazan2007logarithmic}. All these approaches are \emph{proper}, meaning they restrict predictions to correspond to a single parameter $\theta$ inside the comparator class $\cF$. However, proper prediction can be fundamentally suboptimal for online logistic regression, where a logarithmic regret with respect to the number of rounds $n$ is achievable. Indeed, \citet{hazan2014logistic} proved that when $B=\Omega(\log n)$, any proper algorithm can incur $\Omega(\sqrt{n})$ regret and this motivates \emph{improper} strategies that predict with mixtures rather than a single comparator \citep{foster2018logistic}.

\begin{table*}[t]
  \centering
  \small
  \setlength{\tabcolsep}{8pt} 
  \begin{tabular}{lcc}
    \toprule
    \textbf{Algorithm} & \textbf{Regret} & \textbf{Total complexity} \\
    \midrule
    OGD \citep{zinkevich2003online} & $B\sqrt{n}$ & $n$ \\
    ONS \citep{hazan2007logarithmic} & $de^{B}\log n$ & $n$ \\
    EW with uniform prior on bounded support \citep{foster2018logistic} & ${d\log(Bn)}$ & $B^{18}n^{37}$ \\
    AIOLI \citep{rudi_binary_efficient_logreg} & $dB\log(n)$ & $nB^2$ \\
    EW with Gaussian prior (Ours) & $d \log(Bn)$ & $B^3n^5$ \\
    \bottomrule
  \end{tabular}
  \caption{Regret bounds (in $O(\cdot)$ notation) and total complexity (in $\tilde{\cO}(\cdot)$ notation) on online (binary) logistic regression.}
  \label{tab:algorithms}
\end{table*}

\paragraph{Efficient algorithms.}
\citet{foster2018logistic} proposed an improper algorithm based on Vovk's aggregating strategy \citep{vovk1990aggregating} with uniform prior (constrained on the $\ell_2$ ball of radius $B$) that achieves the near-minimax regret rate $O(d\log(Bn))$. Related Bayesian approaches for online logistic regression were developed earlier by \citet{kakade_ng_bayesianalg}; more recently, \citet{shamir2020logistic} refined upper and lower bounds under different norm constraints ($\ell_2,\ell_1,\ell_\infty$) and scaling regimes, and \citet{wu2022precise} derived minimax bounds using constructive truncated Bayesian predictors with sharpened constants. Despite their statistical optimality, regret-optimal improper predictors are often difficult to implement efficiently. A key example is the algorithm by \citet{foster2018logistic}, whose prediction requires approximating a log-concave posterior supported on a bounded set; implementing this with projection-based samplers such as the \emph{Projected Langevin Algorithm} \citep{bubeck2018sampling} leads to a large worst-case complexity of order $\cO(B^{18}n^{37})$. This computational barrier has motivated a parallel line of work seeking algorithms with practical complexity, always at the expense of the optimal logarithmic dependence on $B$. The pioneering work in this area is by \citet{rudi_binary_efficient_logreg} that proposed \emph{AIOLI}, an algorithm with complexity of order $\cO\big(n(d^2+B^2\log(n))\big)$ achieving $\cO(dB\log(n))$ regret on online (binary) logistic regression. The idea underlying AIOLI is to use FTRL on a surrogate quadratic loss function with an additional improper regularization. While \citet{rudi_binary_efficient_logreg} established a computationally inexpensive polynomial time algorithm with $\log(n)$ regret, the logarithmic dependence of the regret with respect to the parameter $B$ is lost, and the regret is thus exponentially worse compared to the one in \citep{foster2018logistic}. While subsequent works focused on extending this result to $K$-class logistic regression \citep{agarwal2022efficient,jezequel2021mixability}, surprisingly the algorithm in \citep{foster2018logistic} remains the only one with logarithmic dependence with respect to the parameter $B$ (cf. \cref{tab:algorithms}).

\paragraph{Large-$B$ regime \& linear separability.}
The standard comparator framework imposes the constraint $\|\theta\|\le B$. However, this can become unnatural in the linearly separable regime: the infimum of the logistic loss is zero and is approached only as $\|\theta\|\to\infty$, while the unregularized maximum likelihood estimator may fail to exist \citep{albert1984existence}. Moreover, when the comparator norm is allowed to be arbitrarily large, the bounds of the algorithms in \cref{tab:algorithms} become vacuous. To the best of our knowledge, existing works do not analyze how these estimators behave in this regime. From the optimization perspective, \citet{Soudry18} showed that, on separable data, gradient descent on the logistic loss exhibits an implicit bias: the parameter norm diverges while its direction converges to the hard-margin SVM separator. While these results clarify the behavior of classical optimization dynamics, they do not directly provide an online-regret perspective on Bayesian aggregation methods in the $B\to\infty$ regime.

\section{Preliminaries}
\label{sec:preliminaries}

We consider online binary logistic regression in $\mathbb{R}^d$. At each round $t=1,\dots,n$, the learner observes $x_t \in \mathcal{X} \subseteq \mathbb{R}^d$ and outputs either a logit score $\hat z_t \in \mathbb{R}$ or a probability distribution $\hat{p}_t(x_t,y)$ over $y\in\{\pm 1\}$. Then, the true $y_t \in \{\pm 1\}$ is revealed and the learner incurs the logistic loss
\[
\ell_\mathrm{log}(\hat z_t,y_t) = \log\bigl(1+\exp(-y_t \hat z_t)\bigr) = -\log \sigma(y_t \hat z_t),
\]
where $\sigma(\cdot)$ is again the sigmoid function. Furthermore, a standard boundedness assumption $\|x_t\|\le R$ is made\footnote{cf. \citet{rudi_binary_efficient_logreg}.} for all $t$. We first start with the vanilla case of comparing the learner against the set of linear predictors $f_\theta(x)=\langle \theta, x\rangle$ with bounded norm $\|\theta\|\le B$. If the learner directly outputs a probability distribution $\hat{p}_t(x_t,y)$, we can rewrite the regret as
\begin{equation*}
\label{eq: regret definition EWA}
    R_n = -\sum_{t=1}^n \log\hat{p}_t(x_t,y_t)-\underset{\substack{\theta \in \mathbb{R}^d\\ \|\theta\|\leq B}}{\inf}\sum_{t=1}^n -\log\sigma(y_t\langle \theta,x_t\rangle).
\end{equation*}

\paragraph{Notation.} For a positive integer $k$, $[k]$ denotes the set $\{1,...,k\}$. We use $\log(\cdot)$ to denote the natural logarithm. For two distributions $\rho,\nu$ defined on the space $(\mathbb{R}^d, \mathcal{B}(\mathbb{R}^d))$, where $\mathcal{B}(\mathbb{R}^d)$ denotes the Borel-sigma algebra on $\mathbb{R}^d$, we use $d_{TV}(\rho,\nu)$ to denote their total variation distance. While not specified otherwise, we refer to $\|\cdot\|$ as the $\ell_2$ (Euclidean) norm in $\mathbb{R}^d$. We denote the Euclidean ball of radius $R>0$ as $B_R(\mathbb{R}^d)=\{x\in\mathbb{R}^d:\|x\|\le R\}\subset \mathbb{R}^d$. We adopt the usual asymptotic notations of $O(\cdot), \ \Omega(\cdot), \ \Theta(\cdot)$, where their corresponding \emph{tilde} versions just suppress extra poly-logarithmic factors. \footnote{In this work, both $d$ and $R$ are considered constant, so most of the times won't be displayed in the big-$O$ notation.}

\subsection{Exponential Weights Algorithm}
\label{subsec: EWA}
Exponential Weights (EW, cf. \cref{alg:EWA LogReg}) \citep{vovk1990aggregating} is a fundamental algorithm for sequential prediction that maintains a distribution over a set of \emph{experts} and updates it multiplicatively in response to the observed losses. In the context of online logistic regression, EW with prior $\pi_0$ on the parameter space and learning rate $\eta=1$ predicts at round $t$ according to the probability distribution on $y\in\{\pm1\}$ given by
\begin{equation}
\label{eq: estimators EWA}
\hat p_t(x_t,y)
=
\mathbb{E}_{\theta\sim\rho_t}\!\left[\sigma\bigl(y\langle\theta,x_t\rangle\bigr)\right],
\end{equation}
that is a \emph{mixture of sigmoids}. Here, $\rho_t(\theta) \propto \pi_0(\theta)\prod_{i=1}^{t-1} \sigma(y_i\langle x_i,\theta \rangle) $ is the \emph{posterior} obtained by reweighting the prior with the past log-loss. For infinite expert classes, $\pi_0$ is simply a continuous (probabilistic) prior on the parameter space. EW is also \emph{improper}: its prediction does not, in general, coincide with that of any bounded-norm linear predictor in the comparator class $\mathcal{F}$. Conceptually, the EW update can also be viewed as a Bayesian update over a mixture of experts \citep{kakade_ng_bayesianalg}. A common obstacle with such Bayesian predictors is their computational cost, since one must approximate the posterior distribution $\rho_t$ at each round of the sequential protocol. Accordingly, prior work on efficient online logistic regression has largely moved away from Bayesian algorithms because of this computational burden. Our key observation is that, with a Gaussian prior, the EW predictor is an \emph{unconstrained} mixture on $\mathbb{R}^d$. This makes the posterior amenable to unconstrained sampling methods, yielding a computationally tractable implementation while preserving the logarithmic dependence of the regret on both $n$ and $B$.

\section{Regret bounds and computational complexity}
\label{sec:ewa_regret_and_cheap_implementation}

In this section, we study the regret guarantee of EW on logistic loss and evaluate the computational complexity of its implementation. A standard regret analysis for EW leads to the regret bound in \cref{th: regret guarantee of traditional EWA}; the result is essentially present in \citep{kakade_ng_bayesianalg}, but we formulate and prove it for the sake of completeness.

\begin{restatable}{theorem}{EWArb}[Reformulated \citep{kakade_ng_bayesianalg}]
\label{th: regret guarantee of traditional EWA}
    The estimator in \cref{eq: estimators EWA} with an isotropic Gaussian prior $\pi_0(\theta)\sim\mathcal{N}(0,B^2I_d)$ satisfies a regret guarantee of order $\cO(d\log(Bn))$.
\end{restatable}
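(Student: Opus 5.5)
The plan is the standard mixture argument for log-loss. Since the EW predictor is a Bayesian mixture with learning rate $\eta=1$, the product of its predictions telescopes. Write $L_n(\theta)=\sum_{t=1}^n -\log\sigma(y_t\langle\theta,x_t\rangle)$. Using $\hat p_t(x_t,y_t)=\int \sigma(y_t\langle\theta,x_t\rangle)\,d\rho_t(\theta)$ and the definition of $\rho_t$, I will show that
\[
\sum_{t=1}^n -\log \hat p_t(x_t,y_t) \;=\; -\log \int_{\R^d} e^{-L_n(\theta)}\,d\pi_0(\theta).
\]
So the regret equals $-\log \E_{\pi_0}[e^{-L_n(\theta)}] - L_n(\theta^*)$, where $\theta^*$ is the (near-)minimizer over $\|\theta\|\le B$. (If the infimum is not attained, take a near-minimizer and pass to the limit.)

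To lower bound the integral, I restrict it to a small ball $\mathcal{B}=\{\theta:\|\theta-\theta^*\|\le \varepsilon\}$. Because $|\tfrac{d}{dz}\log\sigma(z)|\le 1$ and $\|x_t\|\le R$, each loss term is $R$-Lipschitz in $\theta$. Hence on $\mathcal{B}$ we have $L_n(\theta)\le L_n(\theta^*)+nR\varepsilon$. This gives
\[
R_n \le nR\varepsilon - \log \pi_0(\mathcal{B}).
\]

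It remains to lower bound the Gaussian mass of $\mathcal{B}$ under $\mathcal{N}(0,B^2I_d)$. On $\mathcal{B}$ we have $\|\theta\|\le B+\varepsilon$, so the density is at least $(2\pi B^2)^{-d/2}\exp\!\big(-(B+\varepsilon)^2/(2B^2)\big)$. With $\varepsilon\le B$ this is at least $(2\pi B^2)^{-d/2}e^{-2}$. The volume of $\mathcal{B}$ is $\varepsilon^d \pi^{d/2}/\Gamma(d/2+1)$. Therefore
\[
-\log\pi_0(\mathcal{B}) \le d\log(B/\varepsilon) + \tfrac d2\log 2 + \log\Gamma(d/2+1) + 2 \;=\; d\log(B/\varepsilon)+O(d\log d).
\]

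Finally, I choose $\varepsilon=\min\{B,\,1/(nR)\}$, which makes the Lipschitz term at most $1$. This yields $R_n\le d\log(BnR)+O(d\log d)+3 = O(d\log(Bn))$ for $B\gtrsim 1/(nR)$, treating $d,R$ as constants as the paper does. When $B<1/(nR)$ the choice is $\varepsilon=B$, and the regret is $O(d)$, which is still within the claim.

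I expect the only delicate point to be the prior-mass step. The Gaussian prior is not supported on the comparator ball, so I must check that the density penalty at $\|\theta^*\|\le B$ is only a constant, namely $e^{-(B+\varepsilon)^2/(2B^2)}$. This works precisely because the prior variance is matched to $B^2$. A mismatched prior scale would instead produce a $\|\theta^*\|^2/\sigma^2$ term.
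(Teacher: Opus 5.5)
Your proof is correct. It shares the paper's first step: your telescoping identity is exactly Vovk's Lemma~1 as used there. Where it differs is in how the marginal likelihood $\E_{\pi_0}[e^{-L_n(\theta)}]$ is lower bounded.

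The paper applies the Donsker--Varadhan formula with the comparison distribution $\varphi=\mathcal{N}(\theta^*,\mu^2 I_d)$. Because $u\mapsto-\log\sigma(u)$ has second derivative at most $1/4$, the first-order term vanishes in expectation under this symmetric Gaussian, leaving only $\mu^2 nR^2/8$. The Gaussian KL is then computed in closed form; there, $\|\theta^*\|^2/B^2\le 1$ plays the role of your $e^{-2}$ density penalty. Optimizing over $\mu$ gives the explicit bound $\frac d2\log\bigl(1+\frac{R^2B^2n}{4d}\bigr)+\frac12$.

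Your argument amounts to the special case $\varphi=\pi_0(\cdot\mid\mathcal{B})$, whose KL is exactly $-\log\pi_0(\mathcal{B})$, combined with a first-order (Lipschitz) control of the loss. It is more elementary, since it needs no variational formula and no KL computation, and it works for any Lipschitz loss without smoothness. The cost is in constants: Lipschitz control forces $\varepsilon\sim 1/(nR)$, whereas smoothness allows a Gaussian of width $\mu\sim\sqrt{d/(nR^2)}$.

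As a result you get $d\log(BnR)+O(d\log d)$. That has coefficient $d$ rather than $d/2$ in front of $\log n$, plus a $\log\Gamma(d/2+1)$ overhead from the ball volume. The paper's bound instead even benefits from the $1/d$ inside the logarithm. Under the paper's convention that $d$ and $R$ are constants, both routes give $\cO(d\log(Bn))$.

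Two cosmetic remarks. The infimum over the compact ball $\{\|\theta\|\le B\}$ is attained, so the near-minimizer caveat is unnecessary. In the regime $B<1/(nR)$ your bound is $O(d\log d)$ rather than $O(d)$, which is harmless here.
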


\begin{proof}[Sketch of proof]
    Using the fact that the log-loss is $1$-mixable, the result follows by combining two standard identities in the online learning literature. The first one is a result by \citet[Lemma 1]{vovk2001competitive} and the second one is a corollary of the Donsker-Varadhan variational formula \citep{donsker1975asymptotic}.
\end{proof}

Computing the EW predictors in \eqref{eq: estimators EWA} is nontrivial. As with other Bayesian algorithms, one must evaluate an expectation with respect to a posterior distribution $\rho_t$ that is known only up to its normalizing constant. Writing the posterior as $\rho_t(d\theta)\propto \exp\{-V_t(\theta)\}\,d\theta$, where $V_t$ denotes the posterior potential, the key point is that the Gaussian prior makes $V_t$ globally strongly convex while keeping the parameter space unconstrained on $\mathbb{R}^d$. This substantially simplifies posterior approximation and allows us to leverage recent non-asymptotic guarantees for unconstrained sampling from strongly log-concave distributions \citep{ChewiBook24}. Once an approximate posterior is available, we estimate the prediction by Monte Carlo averaging. Finally, to keep the resulting probabilities bounded away from zero, we apply a simple smoothing step. We now describe each of these ingredients in turn.

\textbf{1) Sampling.} As mentioned before, we cannot directly access samples from $\rho_t$ since its normalizing constant is unknown and difficult to compute. Thus, we show that one can instead sample from an approximate posterior $\tilde{\rho}_t$ without changing the regret rate, provided that $d_{TV}(\rho_t,\tilde{\rho}_t)\leq E_t$, where $E_t$ is the cumulative approximation error at round $t$, obtained from the fresh per-round budgets $\{\varepsilon_i\}_{i=1}^t$ and chosen so that using EW estimators based on $\tilde \rho_t$ only incurs an additional additive term in the regret. Now, notice that the posterior distribution
\[
\rho_t(d\theta)\ \propto\ \exp\Big\{\underbrace{\sum_{i=1}^{t-1} \log\sigma\big(y_i\langle x_i,\theta\rangle\big)
-\ \frac{1}{2B^2}\|\theta\|_2^2}_{-V_t(\theta)}\Big\}\,d\theta
\]
is strongly \emph{log-concave} for every $t\in[n]$. Indeed, thanks to the Gaussian prior, the potential $V_t(\theta)$ is $m$-strongly convex with $m=1/B^2$ and $L$-smooth with $L=\tfrac14 R^2(t-1)+1/B^2$ (see \cref{app-subsec: computational complexity}), so its condition number is $\kappa_t \coloneqq \tfrac{L}{m} = 1+\tfrac14 B^2R^2(t-1)=\cO\big(B^2R^2(t-1)\big)$. We therefore approximate $\rho_t$ by $\tilde{\rho}_t$ using the \emph{Metropolis-Adjusted Langevin Algorithm} (MALA) \citep{roberts2002langevin}, which has been shown \citep{metropolis_fast,chewi2021optimal,altschuler2024faster} to mix in a number of steps that depends only poly-logarithmically on the target TV accuracy when initialized from a warm start. To obtain such a warm start, we do not sample $\rho_t$ from scratch. Instead, we initialize from the previous-round approximation and bridge $\rho_{t-1}$ to $\rho_t$ through a sequence of intermediate tempered targets (similarly to \citep{neal2001annealed,del2006sequential}). The fresh round-$t$ accuracy budget $\varepsilon_t$ is split across the intermediate rungs as per-rung tolerances $(\varepsilon_j)_{j=1}^K$ with $\sum_{j=1}^K \varepsilon_j\le \varepsilon_t$. This yields $d_{\mathrm{TV}}(\rho_t,\tilde\rho_t)\le d_{\mathrm{TV}}(\rho_{t-1},\tilde\rho_{t-1})+\varepsilon_t$, and therefore $d_{\mathrm{TV}}(\rho_t,\tilde\rho_t)\le \sum_{i=1}^t \varepsilon_i$. 

\textbf{2) Monte Carlo Expectation.} Once we have access to the approximate distribution $\tilde{\rho}_t$, we also need to approximate the expectation $\mathbb{E}_{\theta\sim \tilde{\rho}_t}\bigl[\sigma\bigl(y\,\langle x_t,\theta\rangle\bigr)\bigr]$. To do so, we simply use a Monte Carlo average $\tfrac{1}{s_t}\sum_{i=1}^{s_t} \sigma(y\langle x_t, \theta_i\rangle)$ over $s_t$ samples $\theta_i \iid \tilde{\rho}_t$. To get those, we run $s_t$ independent copies of this bridged warm-start procedure, each initialized from its own previous-round state and using independent randomness; consequently, the resulting samples are i.i.d. with common law $\tilde\rho_t$. The number of samples $s_t$ required at time $t$ is again chosen so that the regret of EW is only increased by an additive constant.
    
\textbf{3) Smoothing.} Finally, we need to make sure that the Monte Carlo approximation of $\mathbb{E}_{\theta\sim \tilde{\rho}_t}\bigl[\sigma\bigl(y\,\langle x_t,\theta\rangle\bigr)\bigr]$ is bounded away from zero to control the log-loss perturbation via Chernoff's bound. To do so, we predict after \emph{smoothing} the Bernoulli parameter via the function $\mathrm{smooth}_{\alpha_t}(p) \colon \ p \mapsto (1-\alpha_t)p+\frac{\alpha_t}{2}, \ \alpha_t\in\left[0,\frac{1}{2}\right]$, with $p$ being a distribution on the binary outcome space $\{-1,1\}$ and $\alpha_t$ chosen such that the smoothing procedure does not affect the order of the regret. 

The resulting predictive estimator of the above described three-steps modification of the estimate in \cref{eq: estimators EWA} is
\begin{equation}
\label{eq: computable EWA estimators}
\tilde{p}^{\alpha_t,s_t}_t(x_t,y)
\;=\;
\mathrm{smooth}_{\alpha_t}\!\left(\frac{1}{s_t}\sum_{i=1}^{s_t}\sigma\bigl(y\langle x_t,\theta_{t,i}\rangle\bigr)\right),
\end{equation}
where $y\in\{\pm1\}$ and $\theta_{t,i} \stackrel{\text{iid}}{\sim} \tilde{\rho}_t$ with $d_{\mathrm{TV}}(\rho_t,\tilde{\rho}_t)\le E_t$ for $i=1,...,s_t$. If we denote $\varepsilon_t$ as the \emph{fresh} accuracy budget for the sampler at time step $t$, we can prove that with a suitable choice of parameters $(\alpha_t,\varepsilon_t,s_t)$ the regret remains of order $O\left(d\log(Bn)\right)$ with high probability. In addition, the total computational cost only depends on how fast we can get the approximated posterior $\tilde{\rho}_t$ and how many samples $s_t$ we use to approximate the expectation. We formalize these facts in the following result.

\begin{restatable}{theorem}{CEWAHighProbrb}
\label{th: regret guarantee of COMPUTABLE EWA}
Let $\delta>0$, let $(\alpha_t,s_t, \varepsilon_t, \delta_t)_{t=1}^n$ be the parameters of the estimator in \ref{eq: computable EWA estimators}, which satisfy
\[
\sum_{t=1}^n \delta_t \le \delta, \quad \alpha_t s_t \geq 16\log(1/\delta_t) \ \forall t=1,...,n.
\]
Then, with probability at least $1-\delta$ the regret bound of that estimator is
\begin{equation}
\begin{split}
\label{eq: regret computable EWA}
R_n \le O(&d\log(Bn)) + \sum_{t=1}^n 2\alpha_t + \sum_{t=1}^n \frac{2 \sum_{i=1}^t \varepsilon_i}{\alpha_t}
+ 4\sum_{t=1}^n \sqrt{\frac{\log(\nicefrac{1}{\delta_t})}{s_t\alpha_t}},
\end{split}
\end{equation}
with $\varepsilon_i$ being the new fresh accuracy budget at round $i$ and $\{\theta_{t,j}\}_{j=1}^{s_t}$ obtained from $s_t$ independent copies of the bridged warm-start MALA construction, each initialized from its own previous-round sample. Then, the per-round computational cost of computing $\tilde p_t^{\alpha_t,s_t}$ is of order $\tilde O(s_t \cdot Q_t)$ and MALA sampling strategy is such that $Q_t=\tilde{O}\left(\sqrt d\,\kappa_t\,RB\,\log\frac{1}{\varepsilon_t}\right)$.
\end{restatable}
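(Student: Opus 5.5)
We decompose the per-round log-loss of the computable predictor around the exact EW prediction $\hat p_t(x_t,y_t)$ of \cref{eq: estimators EWA}. Then
\[
R_n=\sum_{t=1}^n\bigl(-\log\hat p_t(x_t,y_t)\bigr)-\inf_{\|\theta\|\le B}L_n(\theta)+\sum_{t=1}^n\log\frac{\hat p_t(x_t,y_t)}{\tilde p^{\alpha_t,s_t}_t(x_t,y_t)},
\]
where $L_n(\theta)=\sum_t-\log\sigma(y_t\langle\theta,x_t\rangle)$. The first difference is at most $O(d\log(Bn))$ by \cref{th: regret guarantee of traditional EWA}, so everything reduces to bounding each log-ratio. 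Write $p=\hat p_t(x_t,y_t)$, $\tilde q=\mathbb{E}_{\theta\sim\tilde\rho_t}\sigma(y_t\langle x_t,\theta\rangle)$, $\hat q$ for the Monte Carlo average, and $\tilde p=(1-\alpha_t)\hat q+\alpha_t/2$. We split the log-ratio as
\[
\log\frac{p}{p_\alpha}+\log\frac{p_\alpha}{\tilde q_\alpha}+\log\frac{\tilde q_\alpha}{\tilde p},
\]
where $p_\alpha=\mathrm{smooth}_{\alpha_t}(p)$ and $\tilde q_\alpha=\mathrm{smooth}_{\alpha_t}(\tilde q)$. All three smoothed quantities are at least $\alpha_t/2$.

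\textbf{Smoothing term.} Since $p_\alpha\ge(1-\alpha_t)p$ and $\alpha_t\le 1/2$,
\[
\log\frac{p}{p_\alpha}\le-\log(1-\alpha_t)\le 2\alpha_t .
\]

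\textbf{Sampling (TV) term.} The integrand $\sigma\in[0,1]$, so $|p-\tilde q|\le d_{TV}(\rho_t,\tilde\rho_t)\le E_t\le\sum_{i\le t}\varepsilon_i$. The TV bound follows from the bridged warm-start recursion $d_{TV}(\rho_t,\tilde\rho_t)\le d_{TV}(\rho_{t-1},\tilde\rho_{t-1})+\varepsilon_t$, obtained from the triangle inequality along the rungs. Since $\log(a/b)\le(a-b)/b$ and $\tilde q_\alpha\ge\alpha_t/2$,
\[
\log\frac{p_\alpha}{\tilde q_\alpha}\le\frac{2E_t}{\alpha_t}.
\]

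\textbf{Monte Carlo term (main obstacle).} We need a multiplicative concentration of $\tilde p$ around $\tilde q_\alpha$. Both are affine images of $\hat q$ and $\tilde q$, and the summands $\sigma(\cdot)$ are i.i.d. in $[0,1]$ with mean $\tilde q$. I would apply the multiplicative Chernoff lower-tail bound to $\hat q$:
\[
\hat q\ge\tilde q-\sqrt{\frac{2\tilde q\log(1/\delta_t)}{s_t}}
\]
with probability $1-\delta_t$. Then
\[
\frac{\tilde q_\alpha-\tilde p}{\tilde q_\alpha}\le\frac{\sqrt{2\tilde q\log(1/\delta_t)/s_t}}{\tilde q+\alpha_t/2}\le\sqrt{\frac{2\log(1/\delta_t)}{s_t\alpha_t}}\cdot c,
\]
using $\sqrt{\tilde q}/(\tilde q+\alpha_t/2)\le 1/\sqrt{2\alpha_t}$ by AM–GM. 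The condition $\alpha_ts_t\ge16\log(1/\delta_t)$ makes this ratio at most $1/2$. Hence $-\log(1-u)\le 2u$ applies and gives the term $4\sqrt{\log(1/\delta_t)/(s_t\alpha_t)}$. A union bound over $t$ with $\sum_t\delta_t\le\delta$ yields the claim with probability $1-\delta$. The delicate points are getting the constants to match and handling the fact that only a lower deviation of $\hat q$ hurts.

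\textbf{Complexity.} Each round requires $s_t$ independent chains, and each rung of the bridge runs MALA from a warm start. I would invoke the warm-start MALA mixing bound (e.g. \citep{chewi2021optimal,altschuler2024faster}), which gives $\tilde O(\sqrt d\,\kappa\,\mathrm{polylog}(1/\varepsilon))$ gradient steps per rung; with condition number $\kappa_t$ this bounds the cost of each rung.

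The bridge from $\rho_{t-1}$ to $\rho_t$ tempers the new factor $\sigma(y_{t-1}\langle x_{t-1},\theta\rangle)^{\beta}$ over $\beta\in[0,1]$. The log-density ratio between consecutive rungs is bounded by $\Delta\beta\cdot|\langle x,\theta\rangle|$, which is $O(\Delta\beta\, R\|\theta\|)$. Posterior mass concentrates on $\|\theta\|\lesssim B\sqrt{d}$ up to logs, so $K=\tilde O(RB)$ rungs keep each rung's warmness constant. This gives
\[
Q_t=\tilde O\Bigl(\sqrt d\,\kappa_t\,RB\log\frac1{\varepsilon_t}\Bigr),
\]
and the per-round cost is $\tilde O(s_tQ_t)$. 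The warmness control on the high-probability norm region, with its tail handled inside the $\varepsilon_j$ budgets, is the second technical point I expect to need care.
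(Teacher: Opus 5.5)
\textbf{The regret half is fine; the complexity half has a gap.} The regret half follows the paper's decomposition (smoothing, TV shift, Monte Carlo) term by term. The only difference is in the Monte Carlo term. You apply multiplicative Chernoff to the raw sigmoids (mean $\tilde q$) and then use AM--GM. The paper applies Chernoff directly to the smoothed variables $(1-\alpha_t)\sigma(\cdot)+\alpha_t/2$, whose mean is at least $\alpha_t/2$. Your route works, with a slightly better constant, provided you keep the factor $(1-\alpha_t)$ in both numerator and denominator. The denominator is $\tilde q_\alpha=(1-\alpha_t)\tilde q+\alpha_t/2$, not $\tilde q+\alpha_t/2$. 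AM--GM then gives $\frac{(1-\alpha_t)\sqrt{\tilde q}}{(1-\alpha_t)\tilde q+\alpha_t/2}\le\sqrt{(1-\alpha_t)/(2\alpha_t)}$.

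\textbf{Where the warmness argument fails.} The gap is the claim that $K=\tilde O(RB)$ rungs keep adjacent rungs constant-warm. You bound the log-density ratio by $\Delta\,|\langle x_{t-1},\theta\rangle|\le\Delta R\|\theta\|$ and localize $\rho_{t-1}$ to $\|\theta\|\lesssim B\sqrt d$. Both steps fail.

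First, the normalizers $Z_v,Z_{v+\Delta}$ absorb the mean of $Y=\log\sigma(y_{t-1}\langle x_{t-1},\theta\rangle)$. So warmness is governed by the \emph{fluctuation} of $Y$, not its size.

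Second, $\rho_{t-1}$ is $B^{-2}$-strongly log-concave, so it has spread $O(B\sqrt d)$ around its mean, but that mean need not be near the origin. Take $d=2$ and past points in pairs $y_ix_i=(\epsilon,\pm\sqrt{R^2-\epsilon^2})$ with $\epsilon=1/(B\sqrt t)$. The pairs pin $\theta_2$ near $0$. The stationarity condition $(t-1)\epsilon\,(1-\sigma(\epsilon\theta_1))=\theta_1/B^2$ then puts the mode at $\theta_1\approx 0.4\,B\sqrt t$, with spread $O(B)$. If $y_{t-1}x_{t-1}=(-R,0)$, then $|Y|\approx R\theta_1$ on the bulk, and your bound gives $\Delta R\|\theta\|\asymp c\sqrt t$ rather than $O(1)$ when $\Delta=c/(RB)$.

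Third, even where your localization holds, $\Delta RB\sqrt d=O(1)$ forces $K\asymp RB\sqrt d$. That costs an extra $\sqrt d$ in $Q_t$ relative to the statement.

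\textbf{The missing idea.} This is the paper's \cref{lem:adjacent-renyi-reverse}: measure warmness in R\'enyi-2 divergence. One has the identity $D_2(\rho_{t,v}\|\rho_{t,v+\Delta})=\log\big(M_v(\Delta)M_v(-\Delta)\big)$, with $M_v$ the mgf of $Y$ under $\rho_{t,v}$, so the linear (mean) terms cancel exactly. Since $Y$ is concave, every rung stays $B^{-2}$-strongly log-concave. Since $Y$ is $R$-Lipschitz, $Y-\E Y$ is sub-Gaussian with proxy $R^2B^2$. This holds regardless of $d$ and of where the posterior sits, because $Y$ depends on $\theta$ only through a one-dimensional projection. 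Hence $D_2\le\Delta^2R^2B^2$, and $\Delta=c/(RB)$ gives a constant R\'enyi warm start. No localization or tail bookkeeping inside the $\varepsilon_j$ budgets is needed, and this is exactly the kind of warm start the MALA mixing bounds you cite take as input.
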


\begin{proof}[Sketch of proof]
Start from the standard EW regret bound, which is of order $\cO(d\log(Bn))$. Then show that replacing the exact EW predictor by $\tilde p_t^{\alpha_t,s_t}$ only adds three terms: one from smoothing, one from using an approximate posterior, and one from Monte Carlo approximation (cf. \cref{lemma: regret additive constant TV-shift,lemma: regret additive constant smoothing,lemma: regret additive constant multiplicative chernoff}). For what concerns computational complexity, at round $t$, we approximate $\rho_t$ by warm-starting from the previous round and bridging $\rho_{t-1}$ to $\rho_t$ through the power-tempered ladder $\rho_{t,v}(d\theta)\propto g_{t-1}(\theta)^v\,\rho_{t-1}(d\theta),\ v\in[0,1].$ By \cref{lem:adjacent-renyi-reverse}, adjacent rungs are Rényi-warm when $\Delta=\Theta((RB)^{-1})$, so MALA mixes on each rung in $\tilde O\big(\sqrt d\,\kappa_t\log(K/\varepsilon_j)\big)$ oracle calls when rung $j$ is targeted to TV accuracy $\varepsilon_j$. Since the ladder has $K=\Theta(RB)$ rungs, the per-sample cost is $Q_t=\tilde O\!\big(\sqrt d\,\kappa_t\,K\log(K/\varepsilon_t)\big)$ when the fresh round-$t$ budget $\varepsilon_t$ is split uniformly across rungs, and \cref{prop:per-sample-bridged} shows that if $\mathrm{err}_{t-1}=d_{\mathrm{TV}}(\rho_{t-1},\tilde\rho_{t-1})$ denotes the inherited warm-start error, then $d_{\mathrm{TV}}(\rho_t,\tilde\rho_t)\le \mathrm{err}_{t-1}+\varepsilon_t$. Iterating this recursion gives the bound used in the regret term.
\end{proof}

\begin{restatable}{corollary}{CorollaryRBandCOST}
\label{cor:anytime-cewa}
Under the assumptions of \cref{th: regret guarantee of COMPUTABLE EWA}, choose
\[
\alpha_t=\frac{1}{2n},
\
\varepsilon_t=\frac{1}{20n^3},
\
\delta_t=\frac{\delta}{n},
\
s_t=\left\lceil 32n^3\log\left(\frac{n}{\delta}\right) \right\rceil.
\]
Then, with probability at least $1-\delta$, the estimator in \cref{eq: computable EWA estimators} satisfies
\( R_n \le O(d\log(Bn)) \)
with a total computational cost of order
\( \tilde{O}\big(B^3\,n^5\big). \)
\end{restatable}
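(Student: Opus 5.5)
The plan is to plug the stated parameter choices into \cref{th: regret guarantee of COMPUTABLE EWA} and bound each term separately. Nothing beyond that theorem is needed.

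\textbf{Step 1: hypotheses.} With $\delta_t=\delta/n$ we get $\sum_t\delta_t=\delta$. With $\alpha_t=1/(2n)$ and $s_t\ge 32n^3\log(n/\delta)$,
\[
\alpha_t s_t\ge 16n^2\log(n/\delta)\ge 16\log(1/\delta_t),
\]
so the theorem applies with probability at least $1-\delta$.

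\textbf{Step 2: the three additive error terms.}
\begin{itemize}
\item Smoothing: $\sum_{t=1}^n 2\alpha_t=1$.
\item Posterior error: $\sum_{i=1}^t\varepsilon_i=t/(20n^3)\le 1/(20n^2)$. Hence
\[
\sum_{t=1}^n \frac{2\sum_{i\le t}\varepsilon_i}{\alpha_t}\le n\cdot\frac{2\cdot 2n}{20n^2}=\frac15.
\]
\item Monte Carlo: $s_t\alpha_t\ge 16n^2\log(n/\delta)$, so each summand satisfies
\[
4\sqrt{\frac{\log(n/\delta)}{s_t\alpha_t}}\le \frac{4}{4n}=\frac1n,
\]
and the sum is at most $1$.
\end{itemize}
All three are $O(1)$, so $R_n\le O(d\log(Bn))$.

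\textbf{Step 3: computational cost.} The per-round cost is $\tilde O(s_tQ_t)$ with $Q_t=\tilde O(\sqrt d\,\kappa_t RB\log(1/\varepsilon_t))$. Treat $d$ and $R$ as constants. Then $\kappa_t=O(B^2t)$ and $\log(1/\varepsilon_t)=O(\log n)$ is absorbed by the tilde, so $Q_t=\tilde O(B^3t)$. Multiplying by $s_t=\tilde O(n^3)$ gives a round-$t$ cost of $\tilde O(B^3n^3t)$. Summing over $t\le n$ gives $\tilde O(B^3n^3\cdot n^2)=\tilde O(B^3n^5)$.

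\textbf{Main obstacle.} The only delicate point is the accumulation of TV error in Step 2. The cumulative budget $E_t=\sum_{i\le t}\varepsilon_i$ grows linearly in $t$ and is divided by the small smoothing level $\alpha_t=1/(2n)$. This is why $\varepsilon_t$ must be of order $n^{-3}$ rather than $n^{-2}$. The cost of this stringent accuracy is only logarithmic through $\log(1/\varepsilon_t)$, so it leaves the total complexity bound intact.
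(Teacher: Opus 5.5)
Your proposal is correct and takes essentially the same route as the paper. You verify $\sum_t\delta_t=\delta$ and $\alpha_t s_t\ge 16\log(1/\delta_t)$, bound the smoothing, TV-shift and Monte Carlo terms by $1$, $1/5$ and $1$, and sum $s_tQ_t$ with $\kappa_t=O(B^2t)$ and $K=\Theta(RB)$ to reach $\tilde O(B^3n^5)$; your added remark on why $\varepsilon_t$ must scale as $n^{-3}$ rather than $n^{-2}$ is also correct.
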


\textbf{Remark.} The $\tilde{O}(B^{3}n^{5})$ complexity is a worst-case theoretical bound obtained under the choices of $(\alpha_t,\epsilon_t,s_t)$ needed for the analysis. In practice, as illustrated in the experiments of \cref{sec:experiments}, a single MALA chain with a constant number of samples per round is both computationally faster and performs well.

\section{Geometric limit and margin-based regret}
\label{sec:high_sep_regime}

Assuming bounded comparator norms is unnatural in some regimes: on linearly separable data, the optimal parameter $\theta^\star$ can have arbitrarily large norm (cf. \cite{qian2025refinedriskboundsunbounded}). Yet worst-case regret bounds for online logistic regression (\cref{tab:algorithms}) worsen with the prior scale $B$, encouraging conservative choices that may exclude large (or unbounded) $\|\theta^\star\|$. This raises some natural questions: \emph{how does EW (with isotropic Gaussian prior $\pi_0(\theta)\sim\mathcal{N}(0,B^2I_d)$) behave as $B$ grows and the data are linearly separable? Can the regret, in such a regime, become independent of $B$?} We show that, on separable data, EW admits a well-defined $B\to\infty$ limit and enjoys non-asymptotic bounds in terms of the margin $\gamma$, paralleling classical margin analyses \citep{freund1998large}. To make the dependence on the prior scale $B$ explicit throughout this section, we write in the following sections $\rho_t^B$ and $\hat p_t^B$ for the EW posterior and prediction respectively.

\subsection{$B\to\infty$, linear separability, and max-margin geometry.}
\label{subsec: Large B limit}
Assume the data are linearly separable, and define \(H_t \coloneqq \{\theta : y_i\langle x_i,\theta\rangle > 0,\ \forall i<t\}\) as the open version cone induced by the past data. The next propositions show that, as \(B\to\infty\), the EW predictive distribution converges to a directional majority vote under a standard Gaussian truncated to \(H_t\). In addition, on any fixed-margin slice, the corresponding truncated Gaussian has a unique mode whose direction coincides with the hard-margin SVM solution.

\begin{restatable}{proposition}{limitingEWA}
\label{prop:ewa-limit}
Fix $t\ge 2$ and assume the data $\{(x_i,y_i)\}_{i<t}$ are strictly linearly separable, meaning
\(H_t\neq\emptyset\). Then, for $(x,y)\in\R^d\times\{\pm1\}$ the EW prediction in \cref{eq: estimators EWA} has a limit as $B\to\infty$ given by
\begin{equation}
\begin{split}
\label{eq: limiting predictions of EWA}
\lim_{B\to\infty}\hat p_t^B(x,y)
=
\mathbb{P}&_{\theta\sim\rho_t^\infty} \big(y\langle x,\theta\rangle>0\big) +\frac12\,\mathbb{P}_{\theta\sim\rho_t^\infty} \big(y\langle x,\theta\rangle=0\big)
\eqqcolon \hat{p}_t^\infty(x,y),
\end{split}
\end{equation}
where the limiting distribution $\rho_t^\infty$ is the standard Gaussian truncated to $H_t$:
\[
\rho_t^\infty(d\theta)
:=
\frac{e^{-\|\theta\|^2/2}\mathbf 1\{\theta\in H_t\}\,d\theta}
{\int_{\R^d} e^{-\|u\|^2/2}\mathbf 1\{u\in H_t\}\,du}.
\]
\end{restatable}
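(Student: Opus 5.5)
The plan is to rescale the parameter by $B$ and pass to the limit with dominated convergence. Substituting $\theta = B u$ in the posterior $\rho_t^B$ gives
\[
\hat p_t^B(x,y)=\frac{\int_{\R^d}\sigma\bigl(B\,y\langle x,u\rangle\bigr)\,G_B(u)\,e^{-\|u\|^2/2}\,du}{\int_{\R^d}G_B(u)\,e^{-\|u\|^2/2}\,du},
\qquad G_B(u):=\prod_{i<t}\sigma\bigl(B\,y_i\langle x_i,u\rangle\bigr).
\]
The Gaussian factor is now independent of $B$. Every factor of $G_B$ and the sigmoid in the numerator lie in $[0,1]$. Both integrands are therefore dominated by the integrable function $e^{-\|u\|^2/2}$, so I will compute pointwise limits and apply dominated convergence to the numerator and the denominator separately.

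The pointwise limits come from $\sigma(Bs)\to \mathbf 1\{s>0\}+\tfrac12\mathbf 1\{s=0\}$ as $B\to\infty$. For $G_B(u)$ the limit equals $\mathbf 1\{u\in H_t\}$ except on the set where some $y_i\langle x_i,u\rangle=0$. This set is a finite union of hyperplanes $\{u:\langle x_i,u\rangle=0\}$, each of Lebesgue measure zero. A degenerate $x_i=0$ would make $H_t$ empty, which strict separability rules out. So $G_B\to\mathbf 1_{H_t}$ almost everywhere, and the denominator converges to $Z:=\int e^{-\|u\|^2/2}\mathbf 1\{u\in H_t\}du$. Here $Z>0$ because $H_t$ is a nonempty open set. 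This positivity is exactly what the separability hypothesis is needed for, and it lets us take the limit of the ratio.

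For the numerator, the integrand converges almost everywhere to $\bigl(\mathbf 1\{y\langle x,u\rangle>0\}+\tfrac12\mathbf 1\{y\langle x,u\rangle=0\}\bigr)\mathbf 1_{H_t}(u)e^{-\|u\|^2/2}$. The one subtle point is the set $\{y\langle x,u\rangle=0\}$. It is null when $x\neq0$, in which case the $\tfrac12$ term vanishes. When $x=0$ the integrand is identically $\tfrac12$, so the limit is $\tfrac12$, which agrees with the formula since then $\mathbb P_{\rho_t^\infty}(y\langle x,\theta\rangle=0)=1$. In both cases the pointwise limit holds on the set where it matters: for $x\neq 0$ the bad set is null, and for $x=0$ it holds everywhere. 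Dividing the limiting numerator by $Z$ then yields exactly $\hat p_t^\infty(x,y)$, with $\rho_t^\infty$ the truncated standard Gaussian.

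I expect no serious obstacle. The only care needed is in the measure-zero bookkeeping: the ties $y\langle x,u\rangle=0$ and the hyperplanes where $G_B$ does not converge to the indicator. The case $x=0$ should be handled separately so the $\tfrac12$ term in the statement is justified rather than silently dropped.
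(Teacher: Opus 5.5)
Your proposal is correct and follows the paper's proof essentially step for step. Like the paper, you rescale $\theta=B\tilde\theta$ so the Gaussian factor no longer depends on $B$, take pointwise limits of $\sigma(B\,\cdot)$ off a finite union of hyperplanes, apply dominated convergence with the bound $e^{-\|\tilde\theta\|^2/2}$ to numerator and denominator separately, and use that $H_t$ is nonempty and open to get $Z_\infty>0$. Your extra remarks on the $x_i\neq 0$ and $x=0$ cases are harmless refinements. The tie set needs no separate treatment, since $\sigma(Bs)$ converges for every $s$, including $s=0$.
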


\begin{proof}[Sketch of proof]
After the rescaling \(\tilde\theta=\theta/B\), the prior becomes standard Gaussian and the likelihood terms \(\sigma\big(B y_i\langle x_i,\tilde\theta\rangle\big)\) converge pointwise to \( \mathbf 1\{y_i\langle x_i,\tilde\theta\rangle>0\} + \tfrac12 \mathbf 1\{y_i\langle x_i,\tilde\theta\rangle=0\}\) as \(B\to\infty\). Under strict separability (\(H_t\neq\emptyset\)), dominated convergence allows us to pass the limit inside the integrals, which yields the limiting predictive law \(\hat p_t^\infty\) and shows that the (rescaled) posterior converges to the standard Gaussian conditioned on \(H_t\).
\end{proof}

Because the standard Gaussian is rotationally invariant and the constraints $y_i\langle x_i,\theta\rangle>0$ depend only on the direction of $\theta$, such a direction under $\rho_t^\infty$ is then uniform on the spherical polyhedral cone $H_t\cap\mathbb S^{d-1}$, independently of its radius. As a consequence, for every nonzero query $x$, the limiting predictive probability in \eqref{eq: limiting predictions of EWA} is a \emph{solid--angle vote}: it is exactly the fraction (with respect to surface measure) of directions in the version cone that would label $(x,y)$ correctly. Indeed, when $x\neq 0$, the boundary set $\{\theta:\langle x,\theta\rangle=0\}$ is a hyperplane and therefore has zero $\rho_t^\infty$-mass, so the additional tie term vanishes. Thus, while the MLE diverges in separable logistic regression, the Exponential Weights predictor remains well defined as $B\to\infty$ and, for nondegenerate queries, depends only on the solid angle of $H_t$ and the relative position of $x$, implying that EW also adapts to the geometry of data. Furthermore, we now show how such a limiting predictor connects with the hard-margin SVM solution, as also highlighted by the toy 2-D experiment of \cref{fig:voter-svm}.

\begin{wrapfigure}{r}{0.42\textwidth}
  \vspace{-0.8em}
  \centering
  \resizebox{0.45\textwidth}{!}{%
    \input{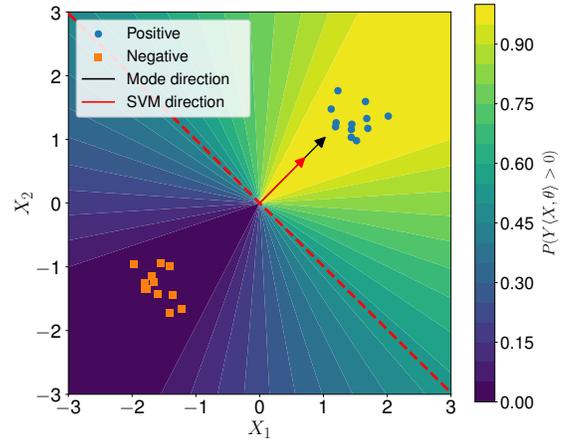}%
  }
  \caption{2-D example of solid-angle voter's positive class probability assignments on a margin slice with fixed $\gamma$.}
  \label{fig:voter-svm}
  \vspace{-0.8em}
\end{wrapfigure}

\begin{restatable}{proposition}{EWAsvm}
\label{prop:mode-svm}
Fix $t\ge 2$ and assume strict separability, i.e. $H_t\neq\emptyset$. For any margin $\gamma>0$, define the closed margin slice
\[
S_{t,\gamma}\ :=\ \{\theta\in\R^d:\ y_i\langle x_i,\theta\rangle\ge \gamma,\ \forall i<t\},
\]
and let $\rho_{t,\gamma}^\infty$ be the standard Gaussian truncated to $S_{t,\gamma}$
\[
\rho_{t,\gamma}^\infty(d\theta)
\ :=\
\frac{e^{-\|\theta\|^2/2}\,\mathbf 1\{\theta\in S_{t,\gamma}\}\,d\theta}
{\int_{\R^d}e^{-\|u\|^2/2}\,\mathbf 1\{u\in S_{t,\gamma}\}\,du}.
\]
Then, $\rho_{t,\gamma}^\infty$ has a \emph{unique mode} $\theta_{t,\gamma}\in S_{t,\gamma}$. Moreover, let $w_{t,\mathrm{svm}}$ denote the hard-margin SVM solution; then, the mode $\theta_{t,\gamma}$ satisfies $\theta_{t,\gamma}\ =\ \gamma\, w_{t,\mathrm{svm}}$. In particular, the mode direction $\theta_{t,\gamma}/\|\theta_{t,\gamma}\|$ is independent of $\gamma$
and equals $w_{t,\mathrm{svm}}/\|w_{t,\mathrm{svm}}\|$.
\end{restatable}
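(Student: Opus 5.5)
The plan is to identify the mode of $\rho_{t,\gamma}^\infty$ with the solution of a strongly convex projection problem, and then match that problem to the hard-margin SVM by a rescaling.

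\textbf{Step 1 (the mode is a projection).} The density of $\rho_{t,\gamma}^\infty$ with respect to Lebesgue measure is proportional to $e^{-\|\theta\|^2/2}$ on $S_{t,\gamma}$ and vanishes outside it. Any mode (maximizer of the density) must therefore be a maximizer of $e^{-\|\theta\|^2/2}$ over $S_{t,\gamma}$. Since $u\mapsto e^{-u/2}$ is strictly decreasing, this is equivalent to minimizing $\|\theta\|^2$ over $S_{t,\gamma}$.

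\textbf{Step 2 (well-posedness of the projection).} The set $S_{t,\gamma}$ is a finite intersection of closed half-spaces, so it is closed and convex. It is nonempty: strict separability gives some $\theta_0\in H_t$. Set $m_0=\min_{i<t} y_i\langle x_i,\theta_0\rangle>0$; this minimum is over finitely many terms, so it is positive. Then $(\gamma/m_0)\theta_0\in S_{t,\gamma}$. The map $\theta\mapsto\|\theta\|^2$ is strictly convex and coercive, so the Euclidean projection of $0$ onto $S_{t,\gamma}$ exists and is unique. Call it $\theta_{t,\gamma}$. By Step 1 it is the unique mode. Its density value is strictly positive, while the density is zero off $S_{t,\gamma}$, so no point outside the slice competes.

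\textbf{Step 3 (scaling to the SVM).} The hard-margin SVM solution is
\[
w_{t,\mathrm{svm}}=\arg\min\Big\{\tfrac12\|w\|^2:\ y_i\langle x_i,w\rangle\ge1\ \ \forall i<t\Big\},
\]
which is unique by the same argument as in Step 2. The map $\theta\mapsto\theta/\gamma$ is a bijection from $S_{t,\gamma}$ onto $S_{t,1}$. It scales the objective by $\|\theta\|^2=\gamma^2\|\theta/\gamma\|^2$. Hence minimizers correspond, and $\theta_{t,\gamma}=\gamma\,w_{t,\mathrm{svm}}$. Finally, $w_{t,\mathrm{svm}}\neq0$ because $0\notin S_{t,1}$. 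So the normalized direction $\theta_{t,\gamma}/\|\theta_{t,\gamma}\|=w_{t,\mathrm{svm}}/\|w_{t,\mathrm{svm}}\|$ is well defined and independent of $\gamma$.

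\textbf{Main obstacle.} The argument is short; the main care point is the definition of ``mode'' for a density with discontinuous support. I will define it as the maximizer of the displayed density (equivalently, of its continuous version on the closed set $S_{t,\gamma}$) and note that this maximizer lies in $S_{t,\gamma}$. I will also state the convention that $w_{t,\mathrm{svm}}$ is the canonical-margin (margin $1$) hard-margin SVM. If instead the unit-norm max-margin convention is intended, only the directional statement holds, with $\theta_{t,\gamma}=\gamma\,w/\gamma^\star$, where $\gamma^\star$ is the optimal margin.
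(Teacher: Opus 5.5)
Your proposal is correct and follows essentially the same route as the paper: maximizing the truncated density is identified with projecting $0$ onto the nonempty closed convex slice $S_{t,\gamma}$, and the bijection $S_{t,\gamma}=\gamma S_{t,1}$ then matches this projection with the margin-one hard-margin SVM. Your extra care points are sound refinements of details the paper leaves implicit: closedness plus coercivity for existence, $w_{t,\mathrm{svm}}\neq 0$, and the conventions for the mode and for the SVM normalization.
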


\begin{proof}[Sketch of proof]
The proof follows by noting that $\theta\in S_{t,\gamma}\Leftrightarrow \theta=\gamma w$ with $w\in S_{t,1}$ and that $S_{t,1}$ is nonempty (see Lemma~\ref{lem:Scalings slice margin} in Appendix). Moreover, maximizing the truncated density of $\rho_{t,\gamma}^\infty$ over $S_{t,\gamma}$ is equivalent to minimizing $\|\theta\|^2$ over $S_{t,\gamma}$; after the rescaling $\theta=\gamma w$ this becomes exactly the hard--margin SVM problem.
\end{proof}

\begin{restatable}{corollary}{ModeLimitSmallMargin}
\label{corollary:mode-svm_limit}
Under the assumptions of \cref{prop:mode-svm}, let $\rho_t^\infty$ and $\rho_{t,\gamma}^\infty$ be the standard Gaussians truncated to $H_t$ and $S_{t,\gamma}$ respectively.
Then: (i) as $\gamma\rightarrow 0$, $\rho_{t,\gamma}^\infty$ converges to $\rho_t^\infty$ in total variation; (ii) For every $\gamma>0$, the measure
$\rho_{t,\gamma}^\infty$ has a unique mode $\theta_{t,\gamma} =\gamma\,w_{t,\mathrm{svm}}$,
with all modes lying on the ray $\{\lambda w_{t,\mathrm{svm}}:\lambda>0\}$.
\end{restatable}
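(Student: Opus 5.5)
Part (ii) is a restatement of \cref{prop:mode-svm}, so I will invoke it directly and only argue the remaining phrase. The truncated density is $\propto e^{-\|\theta\|^2/2}$ on the closed convex set $S_{t,\gamma}$. It is strictly log-concave there, so its maximizers are the minimizers of $\|\theta\|^2$ over $S_{t,\gamma}$. This minimizer is the unique Euclidean projection of $0$ onto a nonempty closed convex set. Hence the mode $\theta_{t,\gamma}=\gamma w_{t,\mathrm{svm}}$ is unique. As $\gamma$ ranges over $(0,\infty)$, these modes trace out exactly the open ray $\{\lambda w_{t,\mathrm{svm}}:\lambda>0\}$. This gives the claim that all modes lie on that ray.

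For part (i), I will use the monotone structure of the slices. First, $S_{t,\gamma}$ is decreasing in $\gamma$, and $\bigcup_{\gamma>0}S_{t,\gamma}=H_t$. Indeed, $\theta\in H_t$ means $\min_{i<t}y_i\langle x_i,\theta\rangle>0$, a minimum over finitely many terms. So $\theta\in S_{t,\gamma}$ as soon as $\gamma$ is at most that minimum, and conversely each $S_{t,\gamma}\subset H_t$.

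Let $\mu$ denote the standard Gaussian measure. Write $Z=\mu(H_t)$, which is positive since $H_t$ is a nonempty open set, and $Z_\gamma=\mu(S_{t,\gamma})$. By continuity of measure from below along $\gamma\downarrow 0$ (any sequence works by monotonicity), $Z_\gamma\uparrow Z$. In particular $Z_\gamma>0$ for small $\gamma$; this is also guaranteed by Lemma~\ref{lem:Scalings slice margin}, since $S_{t,\gamma}=\gamma S_{t,1}$ is a nonempty polyhedron with nonempty interior.

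Next I will compute the total variation explicitly. For a Gaussian conditioned on $A\subset B$, the TV distance to the Gaussian conditioned on $B$ equals $1-\mu(A)/\mu(B)$. The densities are $\mathbf 1_A/\mu(A)$ and $\mathbf 1_B/\mu(B)$ relative to $\mu$. The positive part of their difference lives on $A$ and integrates to $1-\mu(A)/\mu(B)$. Therefore
\[
d_{TV}(\rho_{t,\gamma}^\infty,\rho_t^\infty)=1-\frac{Z_\gamma}{Z}\longrightarrow 0 .
\]

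The only genuinely delicate point is checking that $\bigcup_\gamma S_{t,\gamma}$ is exactly $H_t$ up to a null set. This holds because $H_t$ is open and we only need $\bigcup_\gamma S_{t,\gamma}\subseteq H_t\subseteq\bigcup_\gamma S_{t,\gamma}$, which the finite-minimum argument gives. The boundary of $H_t$ has measure zero anyway, so open versus closed cones make no difference.
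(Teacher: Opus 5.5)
Your proof is correct and follows the paper's route: the nested slices $S_{t,\gamma}$ exhaust $H_t$, so the normalizers increase to $\mu(H_t)>0$ by monotone convergence, and part (ii) is read off from \cref{prop:mode-svm}. The only difference is the last step of (i), where you compute $d_{\mathrm{TV}}(\rho_{t,\gamma}^\infty,\rho_t^\infty)=1-Z_\gamma/Z$ exactly instead of using pointwise convergence of the normalized densities plus dominated convergence; this is slightly cleaner and gives the TV distance explicitly.
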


These results provide a Bayesian perspective closely related to the implicit bias of gradient methods for logistic regression on separable data. Indeed, the optimization literature shows that gradient descent on the logistic loss drives $\|w_t\|\to\infty$ while its \emph{direction} converges to the hard-margin SVM direction, namely $\frac{w_t}{\|w_t\|}\to \frac{w_{\mathrm{svm}}}{\|w_{\mathrm{svm}}\|}$ (see, e.g., \citet{Soudry18,ji2018risk,wu2023implicit,zhang2025minimax}). In contrast, our analysis is probabilistic: as $B\to\infty$ under strict separability, the posterior converges to a standard Gaussian truncated to the version cone $H_t$, and on any fixed-margin slice $S_{t,\gamma}$ its mode is aligned with the hard-margin SVM direction $w_{t,\mathrm{svm}}$. This suggests a complementary geometric mechanism by which a Bayesian aggregator recovers the same max-margin direction in the separable regime.

\textbf{Remark.} In our current setup we incorporate the bias via the standard dummy-feature augmentation
$\tilde x=(x,1)$ and an isotropic Gaussian prior $\tilde\theta=(w,b)\sim\mathcal N(0,B^2 I)$. Consequently, the induced regularizer is $\|w\|^2+b^2$, so both our regret analysis and the large-$B$ geometric limit correspond to an SVM variant in which the intercept $b$ is \emph{regularized} as well, rather than the classical affine SVM where $b$ is left unpenalized.

\subsection{Large-$B$ regret and margin geometry}
\label{subsec: large B regret non-asymptotic}
The geometric convergence of EW predictions to a max-margin voter leads to the following question: \emph{can the above described geometry of EW in the linearly separable case be translated into a non-asymptotic regret guarantee that only depends on the margin $\gamma$ rather than on $B$?} 
We answer this affirmatively; in particular, once $B$ is sufficiently large, the regret becomes independent of $B$ and is controlled by margin-related quantities as shown in the following result.

\begin{restatable}{theorem}{geometricEWAallBsimple}
\label{thm:ewa-interpolation-simple}
Assume the data are linearly separable with margin $\gamma>0$, $\|x_t\|\le R$, and $d\geq2$. In addition, let $\bar\gamma\coloneqq\gamma/R$ and define 
$$L_n^B \coloneqq \sum_{t=1}^n -\log \hat p_t^B(x_t,y_t)$$
as the cumulative predictive loss of EW with Gaussian prior $\pi_0=\mathcal N(0,B^2I_d)$. Then, for every $B>\frac{2(2+\sqrt{2})\log(2n)}{\gamma\sqrt{d-1}} \eqqcolon B_\mathrm{critic}$,
it holds that
\[
L_n^B \le (d-1)\log\left(\frac{2}{\bar{\gamma}}\right) + c_1 \log d,
\]
with $c_1>0$ being an absolute constant.
\end{restatable}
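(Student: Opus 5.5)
The plan is to use the Bayesian mixture identity: with learning rate $1$, the EW cumulative log-loss telescopes into the negative log marginal likelihood,
\[
L_n^B \;=\; -\log \int_{\R^d} \prod_{t=1}^n \sigma\bigl(y_t\langle x_t,\theta\rangle\bigr)\,\pi_0(d\theta),
\]
which is the identity of \citet[Lemma 1]{vovk2001competitive} used in the proof of \cref{th: regret guarantee of traditional EWA}. After rescaling $\theta=Bu$ with $u\sim\mathcal N(0,I_d)$, it suffices to lower bound $\E_u\bigl[\prod_t \sigma(B y_t\langle x_t,u\rangle)\bigr]$. I will restrict the expectation to the good set $G:=\{u:\ y_t\langle x_t,u\rangle\ge \tfrac{\gamma}{R}\|u\|\cos\phi\ \forall t\}$, where $\phi$ is a fixed angle to be chosen. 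Concretely, let $w^\star$ be a unit vector with margin $\gamma$, so that $y_t\langle x_t,w^\star\rangle\ge\gamma$, and let $G$ be a spherical cap of directions around $w^\star$ of angular radius $\beta$ with $\sin\beta\approx\bar\gamma/2$. For $u$ in this cap, $y_t\langle x_t,u/\|u\|\rangle\ge \gamma-R\sin\beta\ge\gamma/2$. Intersecting with a radial shell $\|u\|\ge r_0\approx\sqrt{d-1}$ then gives, on $G$, $\prod_t\sigma(B y_t\langle x_t,u\rangle)\ge(1-e^{-B\gamma r_0/2})^n\ge 1/2$ as soon as $B\gamma r_0/2\ge\log(2n)$. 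This is where the threshold $B_{\mathrm{critic}}\propto \log(2n)/(\gamma\sqrt{d-1})$ arises, and I expect the constant $2(2+\sqrt2)$ to come from choosing $r_0=\sqrt{d-1}/(2+\sqrt2)$-type quantities together with $\sigma(s)\ge 1-e^{-s}$.

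The resulting bound is $L_n^B\le \log 2-\log\mathbb P(G)$, and by rotational invariance $\mathbb P(G)=\mathbb P(\|u\|\ge r_0)\cdot \mathrm{cap}(\beta)$, where $\mathrm{cap}(\beta)$ is the normalized surface measure of a cap of angular radius $\beta$ on $\mathbb S^{d-1}$. Two estimates are then needed. First, a chi-square lower tail bound giving $\mathbb P(\|u\|\ge r_0)\ge c$ for an absolute constant $c$ when $r_0$ is a constant fraction of $\sqrt{d-1}$; for example, Laurent--Massart or a direct median bound works, with care needed at small $d\ge2$. Second, the standard cap lower bound $\mathrm{cap}(\beta)\ge \frac{c'}{\sqrt d}\,(\sin\beta)^{d-1}$, obtained by comparing the cap area with the $(d-1)$-ball of radius $\sin\beta$ via $\Gamma$-function ratios. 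Using $\sin\beta=\bar\gamma/2$, this gives $-\log\mathrm{cap}\le (d-1)\log(2/\bar\gamma)+\tfrac12\log d+O(1)$, which is absorbed into $c_1\log d$ for $d\ge2$.

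The main obstacle is matching constants exactly: the cap must deliver exactly $(d-1)\log(2/\bar\gamma)$, which requires margin $\ge\gamma/2$ on the cap rather than a weaker fraction, while the radial cutoff must give the stated $B_{\mathrm{critic}}$ with its $\sqrt{d-1}$ and $2+\sqrt2$. I would first try to decouple these as above. If the product of constants does not come out right, I would instead fold the radial requirement into a Gaussian integral along the $w^\star$ axis: write $u=a w^\star+v$ with $v\perp w^\star$, require $a\ge r_0$ and $\|v\|\le a\tan\beta$, and integrate directly, which may explain the $\sqrt2$.
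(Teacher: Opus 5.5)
Your route is essentially the paper's: the telescoping identity $e^{-L_n^B}=\int\prod_t\sigma(y_t\langle x_t,\theta\rangle)\,\pi_0(d\theta)$, then restriction to a spherical cap around the separator times a radial tail. After that come the polar factorization of $\mathcal N(0,B^2I_d)$, the cap bound $c_d(1-t^2)^{(d-1)/2}$ with $c_d\asymp d^{-1/2}$, and a median bound for $\chi^2_d$.

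One inequality is wrong as written, though the fix is immediate. For a unit $v=a w^\star+b w$ in your cap you only get $y_t\langle x_t,v\rangle\ge a\gamma-bR\ge\gamma\cos\beta-R\sin\beta$, not $\gamma-R\sin\beta$. With $\sin\beta=\bar\gamma/2$ this equals $\gamma(\cos\beta-\tfrac12)$, which is at least $\tfrac{\sqrt3-1}{2}\gamma$ but strictly below $\gamma/2$.

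Your worry that the exact angular term ``requires margin $\ge\gamma/2$'' is also misplaced. The angular term depends only on the cap size. A margin fraction $c$ only enters the radial cut $r_0=\log(2n)/(cB\gamma)$. Under $B>B_{\mathrm{critic}}$ this gives $r_0<\sqrt{d-1}/(2(2+\sqrt2)c)\le\sqrt{d-1}$ whenever $c\ge 1/(2(2+\sqrt2))$. Then $\mathbb P(\|u\|\ge r_0)\ge\mathbb P(\chi^2_d\ge d-1)\ge\tfrac12$, using the median bound $M_d>d-\tfrac23$; no separate small-$d$ care is needed.

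This is exactly where the paper's constant comes from. It takes the cap $t_1=(1+t_0)/2$ with $t_0=1/\sqrt{1+\bar\gamma^2}$, checks $1-t_1^2\ge\bar\gamma^2/4$ (so its cap is at least as large as yours), and bounds its margin by $\alpha(t_1)=\gamma t_1-R\sqrt{1-t_1^2}\ge\gamma/(2(2+\sqrt2))$. It then uses $r_0=\sqrt{d-1}$ exactly, not any $(2+\sqrt2)$-dependent radius.

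Your cap's margin constant $(\sqrt3-1)/2$ exceeds $1/(2(2+\sqrt2))$. So with the corrected margin your argument goes through with room to spare, and the fallback integral along the $w^\star$ axis is unnecessary.
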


\begin{proof}[Sketch of proof]
    Write the EW cumulative predictive loss as $e^{-L_n^B}=\int \prod_{i\le n}\sigma(y_i\langle x_i,\theta\rangle)\,\pi_0(d\theta)$. Under separability, fix a unit separator $u$ with margin $\gamma$ and consider a \emph{cone of good parameters}: directions $v$ in a spherical cap around $u$ and radii $\lambda$ large enough.
    A simple geometry inequality shows that for every $\theta=\lambda v$ in this cone the margin is uniformly large, $\min_i y_i\langle x_i,\theta\rangle \gtrsim \lambda \,\alpha(\bar\gamma)$, so choosing $\lambda$ so that this is at least $m\coloneqq\log(2n)$ makes each likelihood term close to $1$ and hence $\prod_i \sigma(y_i\langle x_i,\theta\rangle)\ge e^{-1}$. Restricting the integral to this cone gives $e^{-L_n^B}\ge e^{-1}\pi_0(\text{cone})$. Finally, because $\pi_0=\mathcal N(0,B^2I)$ factorizes into \emph{angle} (uniform on $S^{d-1}$) and \emph{radius} ($\chi_d$), $\pi_0(\text{cone})$ splits into a cap probability and a radial tail: the cap term yields $(d-1)\log(2/\bar\gamma)+O(\log d)$, while for $B\ge B_{\mathrm{crit}}$ the radial tail is a constant (at least $1/2$), making the bound independent of $B$ beyond this threshold.
\end{proof}

\textbf{Remark.} The assumption $d\ge 2$ is only used to invoke the spherical-cap bound and the threshold
$B_{\mathrm{critic}}\asymp \log(2n)/(\gamma\sqrt{d-1})$. When $d=1$, the angular term disappears since
$\mathbb S^0=\{-1,+1\}$, so the proof reduces to controlling only the radial Gaussian tail. In that case,
the cumulative loss becomes independent of $B$ once $B\ge k\,\log(2n)/\gamma$ for some absolute constant $k>0$.

In other words, once the prior scale $B$ exceeds the margin-dependent threshold $B_{\mathrm{critic}}$, the cumulative predictive loss of EW becomes independent of $B$: the large-$B$ regime is governed by the angular geometry of the separating directions, rather than by the radial scale of the Gaussian prior. In the separable case, this immediately yields the same control for the regret, since
\[
R_n^B
=
L_n^B-\inf_{\|\theta\|\le B}\sum_{t=1}^n -\log \sigma \big(y_t\langle x_t,\theta\rangle\big)
\le L_n^B,
\]
because the comparator loss is nonnegative. Thus, although the best linear comparator drives its logistic loss to zero in the separable regime, EW itself remains uniformly controlled once $B\ge B_{\mathrm{critic}}$.

\paragraph{Consistency with the lower bound of \citet{foster2018logistic}.}
More generally, we can prove a margin-dependent upper bound on the regret that is valid for all $B>0$. In particular, for a generic $B$,
\begin{equation}
\label{eq: regret consistency LB}
R_n \le (d-1)\log\!\left(\tfrac{2}{\bar{\gamma}}\right) + \tfrac{1}{2}\left(\tfrac{\lambda_0}{B}\right)^2 + c_2 d\log d,
\end{equation}
where $\lambda_0$ depends on the cap opening $t_1$ (see \cref{thm:ewa-interpolation-simple-threshold} in the appendix) and $c_2>0$ is an absolute constant.
This is consistent with the lower-bound construction of \citet[Theorem 2]{foster2018logistic}, where the authors consider a separable instance with margin $\gamma(B)=\log(n)/B$. Plugging this $B$-dependent margin into \eqref{eq: regret consistency LB} recovers the same logarithmic dependence on $B$, namely through the term $\log(1/\gamma(B))=\log(B/\log n)$, while the radial term $(\lambda_0/B)^2$ remains bounded. Thus, for fixed data and sufficiently large $B$, the rate no longer exhibits a radial dependence on $B$; however, when the adversary changes the geometry by shrinking $\gamma$ with $B$, a $\log B$ term necessarily reappears through $\log(1/\gamma)$, in agreement with the lower-bound construction.

\textbf{Remark.} These results are statistical in nature. Computationally, the complexity still increases with the prior scale $B$, since our sampler targets a posterior whose condition number $\kappa_t$ grows with $B$. In our efficient implementation, the total cost is $\tilde O(\sqrt d\, B^3 R^3 n^5)$, up to polylogarithmic factors, so larger $B$ remains more expensive, as in other methods in the literature. Designing an implementation with $B$-robust complexity is an interesting direction for future work.

Observe that the result of \cref{thm:ewa-interpolation-simple} holds for arbitrary data sequences, provided they are linearly separable. In what follows, we specialize it to a benign i.i.d. setting, where it yields a regret bound of order $O(d \log(n))$.

\paragraph{Example: well-specified logistic regression with Gaussian design.}
Let us consider the following well-specified logistic model with true parameter $\theta^\star$, where $\|\theta^\star\|=B$ and $d\geq 2$ \footnote{We could treat separately the case of $d=1$ as we did for \cref{thm:ewa-interpolation-simple}.}. For all $t=1,...,n$, we set $x_t\iid N(0,I_d)$ and the labels $y_t\in\{\pm 1\}$ are such that
\[
\mathbb{P}(y_t=1\mid x_t)=\sigma(\langle x_t,\theta^\star\rangle),\qquad \sigma(t)=\frac{1}{1+e^{-t}}.
\]

\begin{restatable}{proposition}{EWArblargeBiidUnified}
\label{cor:ewa-largeB-delta-unified}
Consider the above-defined logistic model. Fix confidence parameter $\delta\in(0,1)$ and assume
\begin{equation}
\label{eq:unified-B}
B \;\ge\; \max\!\left\{
\frac{6n}{\sqrt{2\pi}\,\delta}\;,\;
\frac{12(2+\sqrt{2})\,n\,\log(2n)}{\delta\,\sqrt{\,d-1\,}}
\right\}.
\end{equation}
Let $u\coloneqq \theta^\star/\|\theta^\star\|$; then, with probability at least $1-\delta$, the dataset is linearly separable and 
\begin{equation*}
\gamma \coloneqq \min_{1\le t\le n} y_t\langle u,x_t\rangle \ge \frac{\delta}{6n}, \quad R\coloneqq \max_{1\le t\le n}\|x_t\|  \le \sqrt d+\sqrt{2\log\frac{3n}{\delta}}\,,
\end{equation*}
In particular, Theorem~\ref{thm:ewa-interpolation-simple} holds and
\begin{equation*}
\label{eq:ewa-delta-final-unified}
R_n\ \le \ (d-1)\,\log\left(
\frac{12nR}{\delta}
\right) +  c_1\log(d).
\end{equation*}
\end{restatable}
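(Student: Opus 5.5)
The plan is to intersect three high-probability events, each failing with probability at most $\delta/3$. On their intersection we check the hypotheses of \cref{thm:ewa-interpolation-simple} with $\gamma=\delta/(6n)$ and unit separator $u=\theta^\star/\|\theta^\star\|$. We then use the observation after that theorem that $R_n^B\le L_n^B$ in the separable case. Throughout, write $Z_t\coloneqq\langle u,x_t\rangle$. Since $x_t\sim\mathcal N(0,I_d)$ and $\|u\|=1$, each $Z_t\sim\mathcal N(0,1)$, and the $Z_t$ are i.i.d.

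\emph{Event 1 (anti-concentration of the margin).} Since the standard Gaussian density is at most $1/\sqrt{2\pi}$, we have $\mathbb P(|Z_t|<a)\le 2a/\sqrt{2\pi}$. With $a=\delta/(6n)$, a union bound over $t\le n$ gives total failure probability at most $\delta/(3\sqrt{2\pi})\le\delta/3$. Hence $|Z_t|\ge \delta/(6n)$ for all $t$.

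\emph{Event 2 (labels agree with the Bayes sign).} We want $y_t=\mathrm{sign}(Z_t)$ for all $t$. Conditionally on $x_t$, the probability of a flipped label is $\sigma(-B|Z_t|)\le e^{-B|Z_t|}$. Integrating against the Gaussian density, again bounded by $1/\sqrt{2\pi}$, gives
\[
\mathbb E\big[e^{-B|Z_t|}\big]\le \frac{2}{\sqrt{2\pi}}\int_0^\infty e^{-Bz}\,dz=\frac{2}{\sqrt{2\pi}\,B}.
\]
A union bound over $n$ rounds gives failure probability at most $2n/(\sqrt{2\pi}B)$. This is at most $\delta/3$ exactly when $B\ge 6n/(\sqrt{2\pi}\,\delta)$, which is the first term in \eqref{eq:unified-B}.

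\emph{Event 3 (norm bound).} The map $x\mapsto\|x\|$ is $1$-Lipschitz and $\mathbb E\|x_t\|\le\sqrt d$. Gaussian concentration therefore gives $\mathbb P\big(\|x_t\|>\sqrt d+s\big)\le e^{-s^2/2}$. Choosing $s=\sqrt{2\log(3n/\delta)}$ and taking a union bound over $t$ costs at most $\delta/3$, which yields the stated bound on $R$.

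\emph{Applying the theorem.} On the intersection of the three events, which has probability at least $1-\delta$, we have $y_t\langle u,x_t\rangle=|Z_t|\ge\delta/(6n)=\gamma>0$. So the sequence is linearly separable with margin $\gamma$ witnessed by the unit vector $u$, and $\|x_t\|\le R$. Substituting $\gamma=\delta/(6n)$ into $B_{\mathrm{critic}}=2(2+\sqrt2)\log(2n)/(\gamma\sqrt{d-1})$ gives exactly the second term of \eqref{eq:unified-B}, so $B\ge B_{\mathrm{critic}}$. If the theorem's strict inequality matters, we either require strictness in \eqref{eq:unified-B} or note that its proof only uses $B\ge B_{\mathrm{critic}}$.

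We now apply \cref{thm:ewa-interpolation-simple} conditionally on the realized data. Its conclusion is deterministic for any separable sequence, so conditioning causes no difficulty. It gives $R_n\le L_n^B\le (d-1)\log(2/\bar\gamma)+c_1\log d$, and $2/\bar\gamma=2R/\gamma=12nR/\delta$, which is the claimed bound.

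\emph{Main obstacle.} The delicate step is Event 2. A pointwise bound on the flip probability is useless near the hyperplane $Z_t=0$, so it must be integrated against the Gaussian density; this integration is what produces the $1/B$ rate behind the first condition. Beyond that, the work is bookkeeping: checking that the chosen $\gamma$ matches $B_{\mathrm{critic}}$ and that the random $R$ is correctly plugged into $\bar\gamma=\gamma/R$.
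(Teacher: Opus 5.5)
Your proof is correct and follows the paper's argument. It uses the same three events (no label flips, margin anti-concentration, Gaussian norm concentration), each at level $\delta/3$, then plugs $\gamma=\delta/(6n)$ into the $B$-independent branch of Theorem~\ref{thm:ewa-interpolation-simple} and finishes with $R_n\le L_n^B$; the only differences are cosmetic (you bound $\mathbb{E}[e^{-B|Z|}]$ by dropping the Gaussian factor instead of using the Mills-ratio tail bound, and you use the sharper $2a/\sqrt{2\pi}$ anti-concentration bound where the paper uses $2a$).
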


\begin{proof}[Sketch of proof]
With probability at least $1-\delta$, we have (i) noiseless labels aligned with $u$ (event \(\mathcal E_1\)), (ii) empirical margin \(\gamma\ge \delta/(6n)\) (event \(\mathcal E_2\)), and (iii) radius \(R\le \sqrt d+\sqrt{2\log(3n/\delta)}\) (event \(\mathcal E_3\)).
Furthermore, \cref{thm:ewa-interpolation-simple} holds because of \eqref{eq:unified-B}, so \(L_n^B\le (d-1)\log(2/\bar\gamma)+O(\log(d))\).
Substituting the bounds for \(\gamma\) and \(R\) on \(\mathcal E_1\cap\mathcal E_2\cap\mathcal E_3\) gives the stated expression; in particular \(R_n\le L_n^B\) in the separable case.
\end{proof}

\section{Experiments}
\label{sec:experiments}

\begin{figure*}[t]
  \centering
  \includegraphics[width=\textwidth]{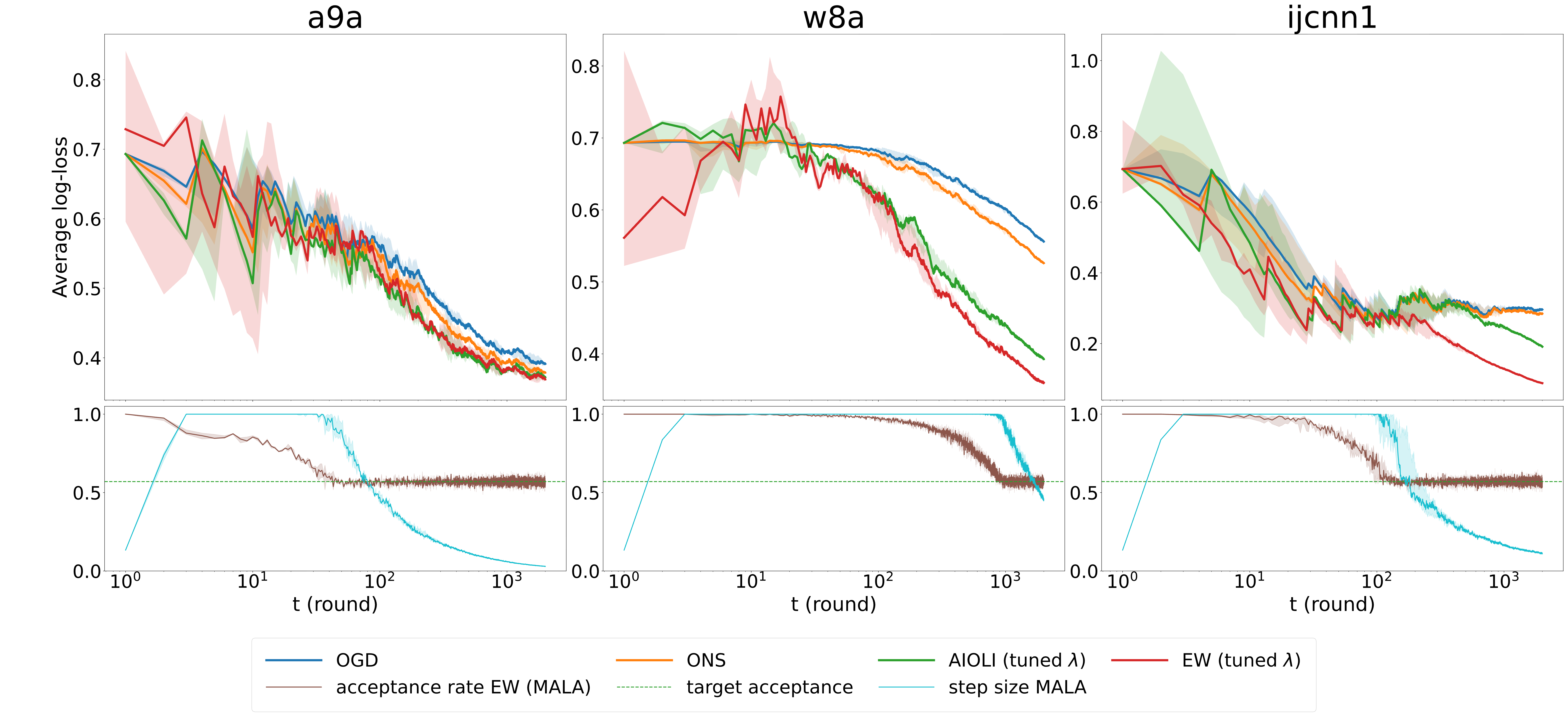} 
  \caption{\textbf{(Top):} Median (across $5$ repeats in which we modify the sequential order of the data) average log-loss with shaded interquartile range (25th--75th percentile). \textbf{(Bottom):} Acceptance rate and step size of MALA (used for EW) which is implemented to get a target acceptance rate of $0.57$. Both plots are in log-x scale.}
  \label{fig:performance_alg}
\end{figure*}

\begin{figure*}[t]
  \centering
  \includegraphics[width=\textwidth]{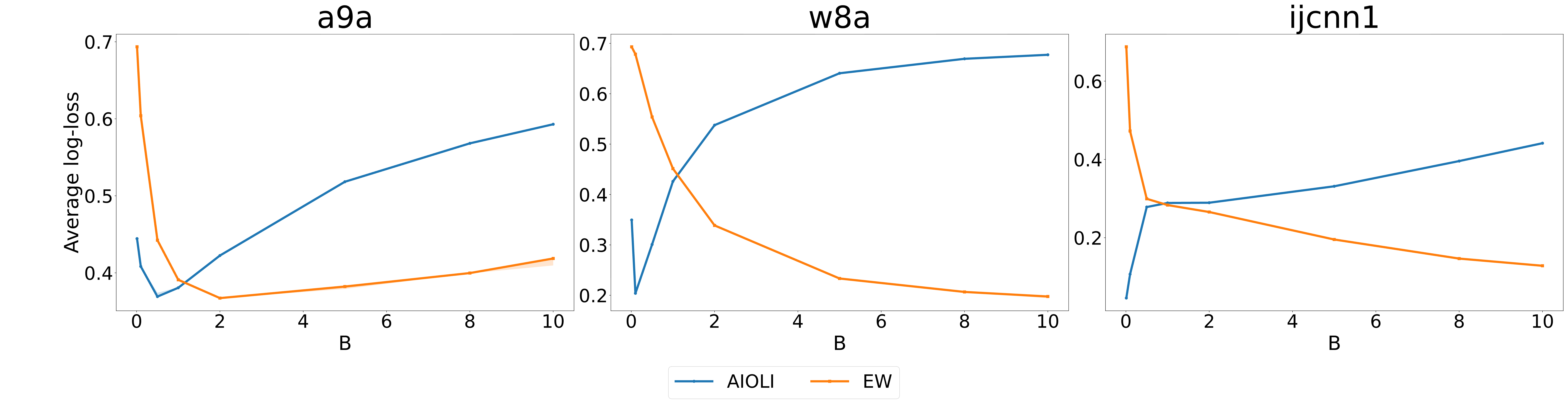} 
  \caption{Average log-loss at round \(n=1000\) versus different values of $B$ for AIOLI (\(\lambda=1/B^2\)) and EW (Gaussian prior scale \(B\)). Curves show the median over 5 random permutations of the sequential data with shaded regions are interquartile ranges (IQR, 25th--75th percentiles).}
  \label{fig:b_sweep_alg}
\end{figure*}

We use three LIBSVM\footnote{\texttt{https://www.csie.ntu.edu.tw/~cjlin/libsvm/}} binary classification datasets: \texttt{a9a}, \texttt{w8a}, and \texttt{ijcnn1}, where we keep the first $n=2000$ samples and make them \emph{sequential}. At each round $t$, each method predicts the logit score $\hat{z}_t$ for $x_t$ using only history up to $t-1$, then receives the true $y_t$, incurs the logistic loss $\ell_\mathrm{log}(\hat{z}_t,y_t)$, and updates \footnote{Data and code available \href{https://github.com/FedericoDiGennaro/Efficient-LogReg-with-Mixture-of-Sigmoids}{here \faGithub}.}.

\paragraph{Experiment 1.} First of all, we evaluate the performance of our EW with isotropic Gaussian prior by measuring the average log-loss
\[
\bar L_t = \frac{1}{t}\sum_{s=1}^t \ell_\mathrm{log}(\hat{z}_s,y_s)
\]
together with three baseline algorithms for online (binary) logistic regression: OGD, ONS, AIOLI (cf. \cref{tab:algorithms}). For both AIOLI and EW we tuned (grid-search) the parameter $\lambda$ (namely the regularization parameter in AIOLI and the variance of the Gaussian prior in EW) regardless of the tuning suggestions provided by the worst-case theoretical results for both of them. From \cref{fig:performance_alg} we can observe that these two algorithms have comparable (and always better compared to OGD and ONS) performance across the three datasets, with EW that tends to be slightly preferable. Notice that each experiment is repeated $5$ times.
For each repeat, we draw a fresh random permutation of the train split and run the full online process.

\paragraph{Experiment 2.} We then fix $\lambda$ to be $B$-dependent as the theory suggests for both AIOLI and EW with isotropic Gaussian prior. Then, we plot again the average logistic loss but this time only at time $n=1000$ and compared against different values of $B$. As expected, AIOLI is preferred once $B$ is very low (in that case EW is \emph{constrained} too much by the prior) but, as soon as $B$ starts to increase, EW is preferable. However, getting the best-of-both-worlds is left as an open question.

\paragraph{Remark on EW computation.} Although the parameter $s_t$ (number of independent MALA chains) we set to have a provable computational algorithm which does not affect the regret rate was $t$-dependent, in practice it turns out we can use a fixed (and constant) number of MALA chains $s$ (slightly adapted as $B$ increases in the case of \cref{fig:b_sweep_alg}). This makes the implementation of EW of order $O(n^2)$, far cheaper than the worst case complexity we derived for our provably regret-optimal computable EW in \cref{eq: computable EWA estimators}.

\section{Conclusions}
We studied Exponential Weights (EW) for online logistic regression, proved a worst-case regret bound of \(\cO\big(d\log(Bn)\big)\), and analyzed a practical MALA-based implementation that makes the method computationally viable. In the linearly separable regime, we showed that EW adapts to data geometry and that, in the large-\(B\) limit, its predictions admit a solid-angle interpretation connected to the hard-margin SVM. Promising future directions include: (i) removing or substantially reducing the dependence on \(B\) in the sampling step, for instance via preconditioning or geometry-aware proposals; (ii) extending the approach to the multiclass setting with \(K>2\), in the spirit of \citet{foster2018logistic}; and (iii) deriving high-probability guarantees using techniques such as those in \citep{van2023high}.

\section*{Acknowledgments}
Federico Di Gennaro gratefully acknowledges support from the Hasler-Stiftung Foundation for his visit to UC Berkeley, where this work was partially carried out.

\clearpage

\bibliographystyle{apalike}
\bibliography{references}

\begin{thebibliography}{}

\bibitem[Agarwal et~al., 2022]{agarwal2022efficient}
Agarwal, N., Kale, S., and Zimmert, J. (2022).
\newblock Efficient methods for online multiclass logistic regression.
\newblock In {\em International Conference on Algorithmic Learning Theory}, pages 3--33. PMLR.

\bibitem[Albert and Anderson, 1984]{albert1984existence}
Albert, A. and Anderson, J.~A. (1984).
\newblock On the existence of maximum likelihood estimates in logistic regression models.
\newblock {\em Biometrika}, 71(1):1--10.

\bibitem[Altschuler and Chewi, 2024]{altschuler2024faster}
Altschuler, J.~M. and Chewi, S. (2024).
\newblock Faster high-accuracy log-concave sampling via algorithmic warm starts.
\newblock {\em Journal of the ACM}, 71(3):1--55.

\bibitem[Auer et~al., 1995]{auer1995gambling}
Auer, P., Cesa-Bianchi, N., Freund, Y., and Schapire, R.~E. (1995).
\newblock Gambling in a rigged casino: The adversarial multi-armed bandit problem.
\newblock In {\em Proceedings of IEEE 36th annual foundations of computer science}, pages 322--331. IEEE.

\bibitem[Auer et~al., 2002]{auer2002nonstochastic}
Auer, P., Cesa-Bianchi, N., Freund, Y., and Schapire, R.~E. (2002).
\newblock The nonstochastic multiarmed bandit problem.
\newblock {\em SIAM journal on computing}, 32(1):48--77.

\bibitem[Bubeck et~al., 2012]{bubeck2012towards}
Bubeck, S., Cesa-Bianchi, N., and Kakade, S.~M. (2012).
\newblock Towards minimax policies for online linear optimization with bandit feedback.
\newblock In {\em Conference on Learning Theory}, pages 41--1. JMLR Workshop and Conference Proceedings.

\bibitem[Bubeck et~al., 2018]{bubeck2018sampling}
Bubeck, S., Eldan, R., and Lehec, J. (2018).
\newblock Sampling from a log-concave distribution with projected langevin monte carlo.
\newblock {\em Discrete \& Computational Geometry}, 59:757--783.

\bibitem[Cesa-Bianchi et~al., 2004]{cesa2004generalization}
Cesa-Bianchi, N., Conconi, A., and Gentile, C. (2004).
\newblock On the generalization ability of on-line learning algorithms.
\newblock {\em IEEE Transactions on Information Theory}, 50(9):2050--2057.

\bibitem[Cesa-Bianchi and Lugosi, 2006]{cesa2006prediction}
Cesa-Bianchi, N. and Lugosi, G. (2006).
\newblock {\em Prediction, learning, and games}.
\newblock Cambridge university press.

\bibitem[Chewi, 2024]{ChewiBook24}
Chewi, S. (2024).
\newblock {\em Log-Concave Sampling}.
\newblock Unfinished draft.

\bibitem[Chewi et~al., 2021]{chewi2021optimal}
Chewi, S., Lu, C., Ahn, K., Cheng, X., Le~Gouic, T., and Rigollet, P. (2021).
\newblock Optimal dimension dependence of the metropolis-adjusted langevin algorithm.
\newblock In {\em Conference on Learning Theory}, pages 1260--1300. PMLR.

\bibitem[Cutkosky, 2019]{pmlr-v97-cutkosky19a}
Cutkosky, A. (2019).
\newblock Anytime online-to-batch, optimism and acceleration.
\newblock In Chaudhuri, K. and Salakhutdinov, R., editors, {\em Proceedings of the 36th International Conference on Machine Learning}, volume~97 of {\em Proceedings of Machine Learning Research}, pages 1446--1454. PMLR.

\bibitem[Del~Moral et~al., 2006]{del2006sequential}
Del~Moral, P., Doucet, A., and Jasra, A. (2006).
\newblock Sequential monte carlo samplers.
\newblock {\em Journal of the Royal Statistical Society Series B: Statistical Methodology}, 68(3):411--436.

\bibitem[{Di Gennaro} et~al., 2025]{digennaro2026instancedependentregretboundsnonstochastic}
{Di Gennaro}, F., Eldowa, K., and Cesa-Bianchi, N. (2025).
\newblock Instance-dependent regret bounds for nonstochastic linear partial monitoring.
\newblock In {\em The Thirty-ninth Annual Conference on Neural Information Processing Systems}.

\bibitem[Donsker and Varadhan, 1975]{donsker1975asymptotic}
Donsker, M.~D. and Varadhan, S.~S. (1975).
\newblock Asymptotic evaluation of certain markov process expectations for large time, i.
\newblock {\em Communications on pure and applied mathematics}, 28(1):1--47.

\bibitem[Doodson, 1917]{bmkmedianGamma}
Doodson, A.~T. (1917).
\newblock Iii. relation of the mode, median and mean in frequency curves.
\newblock {\em Biometrika}, 11(4):425--429.

\bibitem[Dwivedi et~al., 2019]{metropolis_fast}
Dwivedi, R., Chen, Y., Wainwright, M.~J., and Yu, B. (2019).
\newblock Log-concave sampling: Metropolis-hastings algorithms are fast.
\newblock {\em Journal of Machine Learning Research}, 20(183):1--42.

\bibitem[Foster et~al., 2018]{foster2018logistic}
Foster, D.~J., Kale, S., Luo, H., Mohri, M., and Sridharan, K. (2018).
\newblock Logistic regression: The importance of being improper.
\newblock In {\em Conference on learning theory}, pages 167--208. PMLR.

\bibitem[Freund and Schapire, 1997]{freund1997decision}
Freund, Y. and Schapire, R.~E. (1997).
\newblock A decision-theoretic generalization of on-line learning and an application to boosting.
\newblock {\em Journal of computer and system sciences}, 55(1):119--139.

\bibitem[Freund and Schapire, 1998]{freund1998large}
Freund, Y. and Schapire, R.~E. (1998).
\newblock Large margin classification using the perceptron algorithm.
\newblock In {\em Proceedings of the eleventh annual conference on Computational learning theory}, pages 209--217.

\bibitem[Hastie et~al., 2009]{HastieTibshiraniFriedman2009}
Hastie, T., Tibshirani, R., and Friedman, J. (2009).
\newblock {\em The Elements of Statistical Learning}.
\newblock Springer, 2 edition.

\bibitem[Hazan, 2016]{Hazan2016}
Hazan, E. (2016).
\newblock {\em Introduction to Online Convex Optimization}.
\newblock Foundations and Trends in Optimization.

\bibitem[Hazan et~al., 2007]{hazan2007logarithmic}
Hazan, E., Agarwal, A., and Kale, S. (2007).
\newblock Logarithmic regret algorithms for online convex optimization.
\newblock {\em Machine Learning}, 69(2):169--192.

\bibitem[Hazan et~al., 2014]{hazan2014logistic}
Hazan, E., Koren, T., and Levy, K.~Y. (2014).
\newblock Logistic regression: Tight bounds for stochastic and online optimization.
\newblock In {\em Conference on Learning Theory}, pages 197--209. PMLR.

\bibitem[J{\'e}z{\'e}quel et~al., 2020]{rudi_binary_efficient_logreg}
J{\'e}z{\'e}quel, R., Gaillard, P., and Rudi, A. (2020).
\newblock Efficient improper learning for online logistic regression.
\newblock In Abernethy, J. and Agarwal, S., editors, {\em Proceedings of Thirty Third Conference on Learning Theory}, volume 125 of {\em Proceedings of Machine Learning Research}, pages 2085--2108. PMLR.

\bibitem[J{\'e}z{\'e}quel et~al., 2021]{jezequel2021mixability}
J{\'e}z{\'e}quel, R., Gaillard, P., and Rudi, A. (2021).
\newblock Mixability made efficient: Fast online multiclass logistic regression.
\newblock {\em Advances in Neural Information Processing Systems}, 34:23692--23702.

\bibitem[Ji and Telgarsky, 2018]{ji2018risk}
Ji, Z. and Telgarsky, M. (2018).
\newblock Risk and parameter convergence of logistic regression.
\newblock {\em arXiv preprint arXiv:1803.07300}.

\bibitem[Kakade and Ng, 2005]{kakade_ng_bayesianalg}
Kakade, S.~M. and Ng, A. (2005).
\newblock Online bounds for {B}ayesian algorithms.
\newblock In Saul, L., Weiss, Y., and Bottou, L., editors, {\em Advances in Neural Information Processing Systems}, volume~17. MIT Press.

\bibitem[Lattimore and Szepesv{\'a}ri, 2019]{lattimore2019cleaning}
Lattimore, T. and Szepesv{\'a}ri, C. (2019).
\newblock Cleaning up the neighborhood: A full classification for adversarial partial monitoring.
\newblock In {\em Algorithmic Learning Theory}, pages 529--556. PMLR.

\bibitem[Littlestone and Warmuth, 1994]{littlestone1994weighted}
Littlestone, N. and Warmuth, M.~K. (1994).
\newblock The weighted majority algorithm.
\newblock {\em Information and computation}, 108(2):212--261.

\bibitem[McCullagh and Nelder, 1989]{McCullaghNelder1989}
McCullagh, P. and Nelder, J.~A. (1989).
\newblock {\em Generalized Linear Models}.
\newblock Chapman and Hall, 2 edition.

\bibitem[McMahan and Streeter, 2009]{mcmahan2009tighter}
McMahan, H.~B. and Streeter, M.~J. (2009).
\newblock Tighter bounds for multi-armed bandits with expert advice.
\newblock In {\em COLT}.

\bibitem[Mourtada and Ga{\"\i}ffas, 2022]{mourtada2022improper}
Mourtada, J. and Ga{\"\i}ffas, S. (2022).
\newblock An improper estimator with optimal excess risk in misspecified density estimation and logistic regression.
\newblock {\em Journal of Machine Learning Research}, 23(31):1--49.

\bibitem[Murphy, 2012]{Murphy2012}
Murphy, K.~P. (2012).
\newblock {\em Machine Learning: A Probabilistic Perspective}.
\newblock MIT Press.

\bibitem[Neal, 2001]{neal2001annealed}
Neal, R.~M. (2001).
\newblock Annealed importance sampling.
\newblock {\em Statistics and computing}, 11(2):125--139.

\bibitem[Qian et~al., 2025]{qian2025refinedriskboundsunbounded}
Qian, J., Rakhlin, A., and Zhivotovskiy, N. (2025).
\newblock Refined risk bounds for unbounded losses via transductive priors.

\bibitem[Roberts and Stramer, 2002]{roberts2002langevin}
Roberts, G.~O. and Stramer, O. (2002).
\newblock Langevin diffusions and metropolis-hastings algorithms.
\newblock {\em Methodology and computing in applied probability}, 4(4):337--357.

\bibitem[Shalev-Shwartz, 2012]{ShalevShwartz2012}
Shalev-Shwartz, S. (2012).
\newblock Online learning and online convex optimization.
\newblock {\em Foundations and Trends in Machine Learning}, 4(2):107--194.

\bibitem[Shamir, 2020]{shamir2020logistic}
Shamir, G.~I. (2020).
\newblock Logistic regression regret: What’s the catch?
\newblock In {\em Conference on Learning Theory}, pages 3296--3319. PMLR.

\bibitem[Soudry et~al., 2018]{Soudry18}
Soudry, D., Hoffer, E., Nacson, M.~S., Gunasekar, S., and Srebro, N. (2018).
\newblock The implicit bias of gradient descent on separable data.
\newblock {\em Journal of Machine Learning Research}, 19(70):1--57.

\bibitem[van~der Hoeven et~al., 2023]{van2023high}
van~der Hoeven, D., Zhivotovskiy, N., and Cesa-Bianchi, N. (2023).
\newblock High-probability risk bounds via sequential predictors.
\newblock {\em arXiv preprint arXiv:2308.07588}.

\bibitem[van Erven et~al., 2015]{JMLRvanerven15a}
van Erven, T., Gr{{\"u}}nwald, P.~D., Mehta, N.~A., Reid, M.~D., and Williamson, R.~C. (2015).
\newblock Fast rates in statistical and online learning.
\newblock {\em Journal of Machine Learning Research}, 16(54):1793--1861.

\bibitem[Vovk, 2001]{vovk2001competitive}
Vovk, V. (2001).
\newblock Competitive on-line statistics.
\newblock {\em International Statistical Review}, 69(2):213--248.

\bibitem[Vovk, 1990]{vovk1990aggregating}
Vovk, V.~G. (1990).
\newblock Aggregating strategies.
\newblock In Fulk, M. and Case, J., editors, {\em Proceedings of the Third Annual Workshop on Computational Learning Theory}, COLT ’90, pages 371--383, San Mateo, CA, USA. Morgan Kaufmann.

\bibitem[Wu et~al., 2022]{wu2022precise}
Wu, C., Heidari, M., Grama, A., and Szpankowski, W. (2022).
\newblock Precise regret bounds for log-loss via a truncated bayesian algorithm.
\newblock {\em Advances in Neural Information Processing Systems}, 35:26903--26914.

\bibitem[Wu et~al., 2023]{wu2023implicit}
Wu, J., Braverman, V., and Lee, J.~D. (2023).
\newblock Implicit bias of gradient descent for logistic regression at the edge of stability.
\newblock {\em Advances in Neural Information Processing Systems}, 36:74229--74256.

\bibitem[Zhang et~al., 2025]{zhang2025minimax}
Zhang, R., Wu, J., Lin, L., and Bartlett, P.~L. (2025).
\newblock Minimax optimal convergence of gradient descent in logistic regression via large and adaptive stepsizes.
\newblock {\em arXiv preprint arXiv:2504.04105}.

\bibitem[Zhang, 2004]{zhang2004solving}
Zhang, T. (2004).
\newblock Solving large scale linear prediction problems using stochastic gradient descent algorithms.
\newblock In {\em Proceedings of the twenty-first international conference on Machine learning}, page 116.

\bibitem[Zinkevich, 2003]{zinkevich2003online}
Zinkevich, M. (2003).
\newblock Online convex programming and generalized infinitesimal gradient ascent.
\newblock In {\em Proceedings of the 20th international conference on machine learning (icml-03)}, pages 928--936.

\end{thebibliography}

\onecolumn
\appendix

\section*{Appendix}

\section{Exponential Weights Algorithm}
\label{app-sec: EW}
The Exponential Weights (EW) Algorithm \citep{vovk1990aggregating}, also known as the multiplicative weights or Hedge algorithm, is a fundamental online learning method for sequential decision problems. It maintains a weight distribution over a set of actions or experts and updates these weights multiplicatively in response to observed losses. Concretely, at each time step $t$, the weight of expert $i$ is scaled by a factor $\exp\{-\eta\ell_{t,i}\}$ ($\ell_{t,i}$ is the loss incurred by the $i^{th}$ expert at time $t$) and then the weights are re-normalized to form a probability distribution. Intuitively, this exponentially down-weights experts with higher loss (or error), and slightly up-weights those with lower loss, concentrating more probability on better-performing predictors over time. Exponential Weights inspired many algorithms in the Online Learning literature, particularly in the adversarial bandits problem \cite{auer1995gambling,bubeck2012towards}, in the prediction with expert advice problem \cite{auer2002nonstochastic,mcmahan2009tighter} and in adversarial partial monitoring \cite{lattimore2019cleaning,digennaro2026instancedependentregretboundsnonstochastic}. In the context of logistic loss, EW is summarized in \cref{alg:EWA LogReg} below.

\begin{algorithm}[h]
\caption{EW for Logistic Loss}\label{alg:EWA LogReg}
\KwIn{Learning rate $\eta$, prior $\pi_0(\theta)$.}
\For{$t=1,\dots,n$}{
  The learner receives a new observation $x_t\in\mathcal{X}$.
  
  The learner predicts according to a distribution
  $$
    \hat{p}_t(x_t,y)
    \;=\;
    \mathbb{E}_{\theta\sim \rho_t}\bigl[\sigma\bigl(y\,\langle x_t,\theta\rangle\bigr)\bigr]=
    \int_{\mathbb{R}^d} \rho_t(\theta)\,\sigma\bigl(y\,\langle x_t,\theta\rangle\bigr)\,d\theta,
    $$
    
    with $y\in\{-1,+1\}$ and 
    $$\rho_t(\theta)= \frac{\exp \left\{\sum_{i=1}^{t-1}-\eta\ell_\theta(x_i,y_i)\right\}\pi_0(\theta) }{\int_{\mathbb{R}^d}\exp \left\{\sum_{i=1}^{t-1}-\eta \ell_{\theta'}(x_i,y_i)\right\}\pi_0(\theta')\,\mathrm{d}\theta'}\quad \mathrm{(posterior)}.$$ 
    
    Nature reveals the true label $y_t\in\{-1,1\}$ and the learner suffers the logistic loss $-\log(\hat{p}_t(x_t,y_t)).$
    
    Finally, the learner updates the distribution $\rho_t(\theta)$ into $\rho_{t+1}(\theta).$
}
\end{algorithm}

\section{Proofs of \cref{sec:ewa_regret_and_cheap_implementation}}
\label{app-sec: proofs of Theorems 1-2}

\subsection{Proof of \cref{th: regret guarantee of traditional EWA}}
\label{app-subsec: regret bounds theorems 1-2}

First of all, we introduce the following two well-known lemmas in the context of logistic regression and Exponential Weights algorithm.
\begin{lemma}[\citet{vovk2001competitive}, Lemma 1]
\label{lemma: Vovk's lemma}
    Exponential Weights with a logistic loss $\ell_\theta$ and a prior distribution $\pi_0(\theta)$ satisfies the following identity:
    \begin{equation}
        \sum_{t=1}^{T}-\frac{1}{\eta}\log\left(\mathbb{E}_{\theta\sim\rho_t}\exp(-\eta \ell_{\theta}(x_t,y_t))\right) 
        = -\frac{1}{\eta}\log\left(\mathbb{E}_{\theta\sim\pi_0}\exp\left(-\sum_{t=1}^{T}\eta \ell_{\theta}(x_t,y_t)\right)\right).
    \end{equation}
\end{lemma}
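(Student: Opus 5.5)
The identity is a telescoping of normalizing constants, so the plan is to write each factor on the left as a ratio of consecutive partition functions. Define, for $t=1,\dots,T+1$,
\[
Z_t \coloneqq \int_{\mathbb{R}^d} \exp\Big\{-\eta\sum_{i=1}^{t-1}\ell_\theta(x_i,y_i)\Big\}\,\pi_0(d\theta),
\]
so that $Z_1=1$ (empty sum, $\pi_0$ a probability measure) and, by the definition of the posterior in \cref{alg:EWA LogReg}, $\rho_t(d\theta)=Z_t^{-1}\exp\{-\eta\sum_{i<t}\ell_\theta(x_i,y_i)\}\pi_0(d\theta)$.

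The key step is the one-round identity
\[
\mathbb{E}_{\theta\sim\rho_t}\exp(-\eta\ell_\theta(x_t,y_t)) = \frac{1}{Z_t}\int \exp\Big\{-\eta\sum_{i=1}^{t}\ell_\theta(x_i,y_i)\Big\}\pi_0(d\theta)=\frac{Z_{t+1}}{Z_t},
\]
which follows by substituting the density of $\rho_t$ and merging the exponentials. Taking $-\frac1\eta\log$ and summing over $t=1,\dots,T$, the sum telescopes:
\[
\sum_{t=1}^T -\frac1\eta\log\frac{Z_{t+1}}{Z_t} = -\frac1\eta\log\frac{Z_{T+1}}{Z_1} = -\frac1\eta\log Z_{T+1},
\]
and $Z_{T+1}$ is exactly $\mathbb{E}_{\theta\sim\pi_0}\exp(-\eta\sum_{t\le T}\ell_\theta(x_t,y_t))$, which is the right-hand side.

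The only points needing care are well-definedness. Every $Z_t$ must be finite and strictly positive so that the posteriors exist and the logarithms make sense. For the logistic loss, $\ell_\theta\ge 0$ gives $Z_t\le 1$. Positivity holds because the integrand is everywhere positive, since $\ell_\theta<\infty$ for every $\theta$. No interchange of limits is involved, as everything is a finite product of ratios. I expect no real obstacle here; the only thing to state explicitly is that the identity holds pathwise for any (even adversarially chosen) sequence $(x_t,y_t)$, since $\rho_t$ depends only on the past.
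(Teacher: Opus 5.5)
Your proof is correct. Writing $\mathbb{E}_{\theta\sim\rho_t}\exp(-\eta\ell_\theta(x_t,y_t))=Z_{t+1}/Z_t$ and telescoping is exactly the standard argument behind \citet[Lemma~1]{vovk2001competitive}, which the paper cites without reproving; your well-definedness checks ($0<Z_t\le 1$ from $\ell_\theta\ge 0$ and $\eta>0$, plus a strictly positive integrand against a probability measure) are the only ones needed.
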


\begin{lemma}[\citet{donsker1975asymptotic}]
\label{lemma: Donsker-Varadhan variational formula}
    In the setup of Lemma \ref{lemma: Vovk's lemma}, let $\varphi$ be a distribution over $\mathbb{R}^d$. Then, the following identity holds:
    \begin{equation}
        -\frac{1}{\eta}\log\left(\mathbb{E}_{\theta\sim\pi_0}\exp\left(-\sum_{t=1}^{T}\eta \ell_{\theta}(x_t,y_t)\right)\right)
        = \inf_{\varphi}\left(\mathbb{E}_{\theta\sim\varphi}\sum_{t=1}^{T}\ell_{\theta}(x_t,y_t) + \frac{\mathcal{KL}(\varphi\|\pi_0)}{\eta}\right).
        \end{equation}
\end{lemma}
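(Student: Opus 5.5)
The plan is to prove the identity by the Gibbs variational principle. We compare an arbitrary $\varphi$ with the Gibbs (posterior) measure, and read off both the inequality and the attainment of the infimum from nonnegativity of relative entropy.

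\textbf{Setup.} Write $L(\theta)\coloneqq\sum_{t=1}^{T}\ell_\theta(x_t,y_t)$ and $Z\coloneqq\mathbb{E}_{\theta\sim\pi_0}\exp(-\eta L(\theta))$. Since the logistic loss satisfies $\ell_\theta(x,y)=\log(1+e^{-y\langle\theta,x\rangle})\in[0,\infty)$ for every $\theta$, we have $0<e^{-\eta L}\le 1$ pointwise. Hence $Z\in(0,1]$, and the left-hand side $-\frac1\eta\log Z$ is finite and nonnegative. Define the Gibbs measure
\[
\varphi^\star(d\theta)\coloneqq Z^{-1}e^{-\eta L(\theta)}\,\pi_0(d\theta).
\]
It is a probability measure, mutually absolutely continuous with $\pi_0$ (the density is strictly positive), and it coincides with the EW posterior after $T$ rounds.

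\textbf{Key computation.} Fix $\varphi$. If $\varphi\not\ll\pi_0$, then $\mathcal{KL}(\varphi\|\pi_0)=+\infty$ and the objective is $+\infty$, so these $\varphi$ are irrelevant. If $\mathbb{E}_\varphi L=+\infty$, the objective is again $+\infty$ because $\mathcal{KL}\ge0$. Otherwise $\varphi\ll\varphi^\star$, and by the chain rule for densities
\[
\log\frac{d\varphi}{d\varphi^\star}=\log\frac{d\varphi}{d\pi_0}+\eta L+\log Z .
\]
Integrating against $\varphi$ gives
\[
\mathcal{KL}(\varphi\|\varphi^\star)=\mathcal{KL}(\varphi\|\pi_0)+\eta\,\mathbb{E}_\varphi L+\log Z .
\]
This rearranges to
\[
\mathbb{E}_\varphi L+\frac{\mathcal{KL}(\varphi\|\pi_0)}{\eta}=-\frac1\eta\log Z+\frac{\mathcal{KL}(\varphi\|\varphi^\star)}{\eta}.
\]
Nonnegativity of $\mathcal{KL}$ then yields ``$\ge$'' for every $\varphi$. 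Choosing $\varphi=\varphi^\star$ makes the last term vanish, so the infimum is attained and the identity holds.

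\textbf{Main obstacle.} The only delicate point is justifying the splitting of $\int\log\frac{d\varphi}{d\varphi^\star}\,d\varphi$ into three separate integrals, since one of them could a priori be $\infty-\infty$. I would handle it as follows.
\begin{itemize}
\item $\eta L+\log Z$ is bounded below by $\log Z>-\infty$.
\item In the case $\mathbb{E}_\varphi L<\infty$, this term is $\varphi$-integrable.
\item Adding an integrable function to $\log\frac{d\varphi}{d\pi_0}$ preserves the (possibly infinite) value of its integral.
\end{itemize}
For $\varphi=\varphi^\star$ itself, $\mathbb{E}_{\varphi^\star}L<\infty$ because $Le^{-\eta L}\le 1/(e\eta)$ is bounded. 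So $\varphi^\star$ is an admissible minimizer with finite objective.
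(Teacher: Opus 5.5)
Your proof is correct. The paper gives no proof of this lemma. It simply invokes it as the classical Donsker--Varadhan formula and uses it, with $\eta=1$, inside the proof of \cref{th: regret guarantee of traditional EWA}.

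Your route is the Gibbs variational principle via the exact identity $\mathbb{E}_\varphi L+\mathcal{KL}(\varphi\|\pi_0)/\eta=-\frac1\eta\log Z+\mathcal{KL}(\varphi\|\varphi^\star)/\eta$. This makes the statement self-contained and buys a few things the citation leaves implicit:
\begin{itemize}
\item Textbook statements of the variational formula often assume a bounded test function. Here the logistic loss is unbounded, so $-\eta L$ is only bounded above. Your argument needs only $L\ge 0$ (giving $0<Z\le 1$), the bookkeeping you did for the cases $\varphi\not\ll\pi_0$ and $\mathbb{E}_\varphi L=\infty$, and the bound $Le^{-\eta L}\le 1/(e\eta)$ to show the minimizer has finite objective. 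No truncation step is required.
\item It identifies the minimizer explicitly as the EW posterior after $T$ rounds, $\varphi^\star=\rho_{T+1}$. Its normalizer is exactly the $Z$ on the right-hand side of \cref{lemma: Vovk's lemma}.
\item It shows how little of the lemma the paper actually needs. The proof of \cref{th: regret guarantee of traditional EWA} only upper-bounds the infimum by plugging in $\varphi=\mathcal{N}(\theta^*,\mu^2 I_d)$. So it only uses $-\frac1\eta\log Z\le \mathbb{E}_\varphi L+\mathcal{KL}(\varphi\|\pi_0)/\eta$ for that one $\varphi$. In your proof this is just $\mathcal{KL}(\varphi\|\varphi^\star)\ge 0$; attainment of the infimum is never used.
\end{itemize}
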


\EWArb*

\begin{proof}
Let us define
\[
\theta^*\in\underset{\substack{\theta \in \mathbb{R}^d\\ \|\theta\|\leq B}}{\arg\min}\ \sum_{t=1}^n -\log\left(\sigma(y_t\langle x_t,\theta\rangle)\right).
\]
Then, by definition of regret $R_n$ and by $1$-mixability of the logistic loss, we can write
\begin{align*}
    R_n &= -\sum_{t=1}^n \log\left(\hat{p}_t(x_t,y_t)\right) + \sum_{t=1}^n \log\left(\sigma(y_t\langle x_t,\theta^*\rangle)\right) \\
    &= \sum_{t=1}^n-\log\left(\mathbb{E}_{\theta\sim\rho_t}\left[\exp\left\{\log(\sigma(y_t\langle x_t,\theta\rangle))\right\}\right]\right) - \sum_{t=1}^n -\log\left(\sigma(y_t\langle x_t,\theta^*\rangle)\right).
\end{align*}
Then, by Lemma \ref{lemma: Vovk's lemma} and Lemma \ref{lemma: Donsker-Varadhan variational formula},
\begin{align*}
    R_n 
    &= \sum_{t=1}^n -\log\left(\mathbb{E}_{\theta\sim\rho_t}\left[\exp\left\{ \log(\sigma(y_t\langle x_t,\theta\rangle))\right\}\right]\right) - \sum_{t=1}^n -\log\left(\sigma(y_t\langle x_t,\theta^*\rangle)\right)\\
    &= -\log\left(\mathbb{E}_{\theta\sim\pi_0}\left[\exp\left\{\sum_{t=1}^n \log(\sigma(y_t\langle x_t,\theta\rangle))\right\}\right]\right) - \sum_{t=1}^n -\log\left(\sigma(y_t\langle x_t,\theta^*\rangle)\right)\\
    &= \inf_{\varphi}\left(\mathbb{E}_{\theta\sim\varphi}\left[-\sum_{t=1}^n \log(\sigma(y_t\langle x_t,\theta\rangle))\right]+ \mathcal{KL}(\varphi\|\pi_0)\right) - \sum_{t=1}^n -\log\left(\sigma(y_t\langle x_t,\theta^*\rangle)\right).
\end{align*}
The infimum is upper bounded by the particular choice of $\varphi\sim\mathcal{N}(\theta^*,\mu^2 I_d)$, with $\mu$ to be determined later. Further, let us set the prior as an isotropic Gaussian $\pi_0=\mathcal{N}(0,\lambda^2I_d)$, with $\lambda$ to be determined too. By computing the $\mathcal{KL}$ divergence between two Gaussian distributions, we can upper bound the regret $R_n$ as 
\begin{align*}
    R_n &\leq \inf_{\varphi}\left(\mathbb{E}_{\theta\sim\varphi}\left[-\sum_{t=1}^n \log(\sigma(y_t\langle x_t,\theta\rangle))\right] + \mathcal{KL}(\varphi\|\pi_0)\right) - \sum_{t=1}^n -\log\left(\sigma(y_t\langle x_t,\theta^*\rangle)\right)\\
    &\leq \mathbb{E}_{\theta\sim\mathcal{N}(\theta^*,\mu^2I_d)}\left[-\sum_{t=1}^n \log(\sigma(y_t\langle x_t,\theta\rangle))\right] + \mathcal{KL}\left(\mathcal{N}(\theta^*,\mu^2I_d)\|\mathcal{N}(0,\lambda^2I_d)\right) - \sum_{t=1}^n -\log\left(\sigma(y_t\langle x_t,\theta^*\rangle)\right).
\end{align*}
Using the fact that the function $u\mapsto -\log(\sigma(u))$ has second derivative bounded by $1/4$, we obtain
\begin{align*}
    \mathbb{E}_{\theta\sim\mathcal{N}(\theta^*,\mu^2I_d)}\left[-\sum_{t=1}^n \log(\sigma(y_t\langle x_t,\theta\rangle))\right]
    \le
    \sum_{t=1}^n -\log\left(\sigma(y_t\langle x_t,\theta^*\rangle)\right)
    + \mu^2 \sum_{t=1}^n \frac{\|x_t\|^2}{8}.
\end{align*}
Therefore,
\begin{align*}
    R_n
    &\leq \mu^2 \sum_{t=1}^n \frac{\|x_t\|^2}{8}
    + \mathcal{KL}\left(\mathcal{N}(\theta^*,\mu^2I_d)\|\mathcal{N}(0,\lambda^2I_d)\right) \\
    &= \mu^2 \sum_{t=1}^n \frac{\|x_t\|^2}{8} + \frac{1}{2}\left(d\log\left(\frac{\lambda^2}{\mu^2}\right)-d+\frac{d\mu^2}{\lambda^2}+\frac{\|\theta^*\|^2}{\lambda^2}\right).
\end{align*}
Under the assumptions that $\|x_t\|\leq R$ for all $t=1,\dots,n$ and $\|\theta^*\|\leq B$,
\begin{align*}
    R_n &\leq \mu^2 \sum_{t=1}^n \frac{\|x_t\|^2}{8} + \frac{1}{2}\left(d\log\left(\frac{\lambda^2}{\mu^2}\right)-d+\frac{d\mu^2}{\lambda^2}+\frac{\|\theta^*\|^2}{\lambda^2}\right)\\
    &\leq \mu^2 n\  \frac{R^2}{8} + \frac{1}{2}\left(d\log\left(\frac{\lambda^2}{\mu^2}\right)-d+\frac{d\mu^2}{\lambda^2}+\frac{B^2}{\lambda^2}\right) \\
    &= \mu^2 n\ \frac{R^2}{8} + \frac{1}{2}\left(d\log\left(\frac{B^2}{\mu^2}\right)-d+\frac{d\mu^2}{B^2}+1\right),
\end{align*}
where the last equality follows from the choice $\lambda=B$. Finally, if we optimize the last expression with respect to $\mu$ we get
\[
R_n \leq \frac{d}{2}\ \log\left(1+\frac{R^2B^2n}{4d}\right) + \frac{1}{2}.
\]
\end{proof}

\subsection{\cref{th: regret guarantee of COMPUTABLE EWA}}

\subsubsection{Regret of the computable EW implementation}

We first prove (in separate lemmas) that each of the steps in the EW implementation --- TV-shift, Monte Carlo approximation, smoothing --- only introduce additive terms to the regret.

\begin{lemma}
\label{lemma: regret additive constant smoothing}
    Suppose that a strategy $\hat{p}_t(x_t,y_t)=\mathbb{E}_{\theta\sim\rho_t}[\sigma(y_t \langle x_t, \theta\rangle)]$ for $t=1,\dots,n$ satisfies a regret inequality 
    \begin{equation*}
        \sum_{t=1}^n -\log\left(\hat{p}_t(x_t,y_t)\right) -\inf_{\|\theta\|\leq B} \ \sum_{t=1}^n -\log\left(\sigma(y_t\langle x_t,\theta\rangle)\right) \leq R\left(\{\hat{p}_t\}_{t=1}^n\right).
    \end{equation*}
    Then, for $0\leq\alpha_t\leq\frac{1}{2}$, the strategy $\hat{p}_t^{\alpha_t}(x_t,y_t)=(1-\alpha_t)\hat{p}_t(x_t,y_t)+\frac{\alpha_t}{2}$, $t=1,\dots,n$ satisfies
    \begin{equation}
    \label{eq: equation regret additive constant smoothing}
        \sum_{t=1}^n -\log\left(\hat{p}^{\alpha_t}_t(x_t,y_t)\right) -\inf_{\|\theta\|\leq B} \ \sum_{t=1}^n -\log\left(\sigma(y_t\langle x_t,\theta\rangle)\right) \leq R\left(\{\hat{p}_t\}_{t=1}^n\right)+2 \sum_{t=1}^n\alpha_t.
    \end{equation}
\end{lemma}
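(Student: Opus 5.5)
The proof is a per-round comparison: I will show that each smoothed prediction costs at most $2\alpha_t$ extra log-loss relative to the unsmoothed one, and then sum over rounds. The comparator term and the assumed regret bound for $\{\hat p_t\}$ pass through unchanged.

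Fix $t$ and write $p=\hat p_t(x_t,y_t)\in(0,1]$. The smoothed prediction satisfies
\[
\hat p^{\alpha_t}_t(x_t,y_t)=(1-\alpha_t)p+\tfrac{\alpha_t}{2}\ \ge\ (1-\alpha_t)p ,
\]
because $\alpha_t/2\ge 0$. Since $-\log$ is decreasing, this gives
\[
-\log \hat p^{\alpha_t}_t(x_t,y_t)\ \le\ -\log p-\log(1-\alpha_t).
\]

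It remains to bound $-\log(1-\alpha)$ on $[0,\tfrac12]$. The function $\alpha\mapsto-\log(1-\alpha)$ is convex and vanishes at $0$, so on $[0,\tfrac12]$ it lies below its chord. The chord's slope is $2\log 2\le 2$, hence $-\log(1-\alpha)\le 2\log 2\cdot\alpha\le 2\alpha$. This is the only place where the assumption $\alpha_t\le\tfrac12$ is used.

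Summing the per-round inequality over $t=1,\dots,n$ gives
\[
\sum_{t=1}^n -\log \hat p^{\alpha_t}_t(x_t,y_t)\ \le\ \sum_{t=1}^n -\log \hat p_t(x_t,y_t)+2\sum_{t=1}^n\alpha_t .
\]
Subtracting the comparator infimum, which is the same on both sides, and invoking the assumed bound $R(\{\hat p_t\}_{t=1}^n)$ yields \eqref{eq: equation regret additive constant smoothing}.

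No step is a real obstacle. The only care needed is to drop the $\alpha_t/2$ term in the favorable direction and to check that the constant $2$ in the elementary inequality holds on the whole interval $[0,\tfrac12]$, which the chord argument does.
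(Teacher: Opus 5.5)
Your proof is correct and matches the paper's argument: the paper also bounds the per-round excess loss by $\log\frac{1}{1-\alpha_t}$ using $\hat p_t^{\alpha_t}\ge(1-\alpha_t)\hat p_t$, then applies $\log\frac{1}{1-\alpha_t}\le 2\alpha_t$ on $[0,\tfrac12]$. Your chord/convexity argument justifies that last inequality, which the paper leaves implicit, and your signs are cleaner than the paper's intermediate line, which writes $-\log(\hat p_t/\hat p_t^{\alpha_t})$ where $+\log$ is meant.
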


\begin{proof}
For $\alpha_t \in [0,\tfrac{1}{2}]$ it holds that
    \begin{align*}
        R_n &= \sum_{t=1}^n -\log\left(\hat{p}_t^{\alpha_t}(x_t,y_t)\right) -\inf_{\substack{\theta \in \mathbb{R}^d \\ \|\theta\|\leq B}} \ \sum_{t=1}^n -\log\left(\sigma(y_t\langle x_t,\theta\rangle)\right) \\
        &= \sum_{t=1}^n -\log\left(\hat{p}_t(x_t,y_t)\right) -\inf_{\substack{\theta \in \mathbb{R}^d \\ \|\theta\|\leq B}} \ \sum_{t=1}^n -\log\left(\sigma(y_t\langle x_t,\theta\rangle)\right) \\
        &\qquad + \sum_{t=1}^n -\log\left(\hat{p}_t^{\alpha_t}(x_t,y_t)\right) - \sum_{t=1}^n -\log\left(\hat{p}_t(x_t,y_t)\right) \\
        &\leq R\left(\{\hat{p}_t\}_{t=1}^n\right) + \sum_{t=1}^n -\log\left(\frac{\hat{p}_t(x_t,y_t)}{(1-\alpha_t)\hat{p}_t(x_t,y_t)+\frac{\alpha_t}{2}}\right)\\
        &\leq R\left(\{\hat{p}_t\}_{t=1}^n\right) + \sum_{t=1}^n \log\left(\frac{1}{1-\alpha_t}\right) \\
        &\leq R\left(\{\hat{p}_t\}_{t=1}^n\right) +2 \sum_{t=1}^n \alpha_t.
    \end{align*}
\end{proof}

\begin{lemma}
\label{lemma: regret additive constant TV-shift}
    Suppose that a strategy
    \[
    \hat{p}_t^{\alpha_t}(x_t,y_t)=(1-\alpha_t)\mathbb{E}_{\theta\sim\rho_t}[\sigma(y_t\langle x_t,\theta\rangle)] + \frac{\alpha_t}{2}, \quad t=1,\dots,n
    \]
    satisfies a regret inequality 
    \begin{equation*}
        \sum_{t=1}^n -\log\left(\hat{p}_t^{\alpha_t}(x_t,y_t)\right) -\inf_{\substack{\theta \in \mathbb{R}^d \\ \|\theta\|\leq B}} \ \sum_{t=1}^n -\log\left(\sigma(y_t\langle x_t,\theta\rangle)\right) \leq R\left(\{\hat{p}^{\alpha_t}_t\}_{t=1}^n\right).
    \end{equation*}
    Then, given an $E_t$-TV approximation $\tilde{\rho}_t$ of $\rho_t$, i.e.
    \[
    d_{\mathrm{TV}}(\rho_t,\tilde{\rho}_t)\leq E_t \qquad \text{for all } t=1,\dots,n,
    \]
    the strategy
    \[
    \tilde{p}_t^{\alpha_t}= (1-\alpha_t)\mathbb{E}_{\theta\sim\tilde{\rho}_t}[\sigma(y_t\langle x_t,\theta \rangle)] + \frac{\alpha_t}{2}, \quad t=1,\dots,n
    \]
    satisfies 
    \[
        \sum_{t=1}^n -\log\left(\tilde{p}_t^{\alpha_t}(x_t,y_t)\right) -\inf_{\substack{\theta \in \mathbb{R}^d \\ \|\theta\|\leq B}} \ \sum_{t=1}^n -\log\left(\sigma(y_t\langle x_t,\theta\rangle)\right) \leq R\left(\{\hat{p}^{\alpha_t}_t\}_{t=1}^n\right) + \sum_{t=1}^n\frac{2 E_t}{\alpha_t}.
    \]
\end{lemma}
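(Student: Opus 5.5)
The plan is to compare the two strategies round by round and sum the per-round differences. Write $p_t:=\mathbb{E}_{\theta\sim\rho_t}[\sigma(y_t\langle x_t,\theta\rangle)]$ and $q_t:=\mathbb{E}_{\theta\sim\tilde\rho_t}[\sigma(y_t\langle x_t,\theta\rangle)]$. Then
\[
\sum_t -\log \tilde p^{\alpha_t}_t = \sum_t -\log \hat p^{\alpha_t}_t + \sum_t \log\frac{\hat p_t^{\alpha_t}}{\tilde p_t^{\alpha_t}}.
\]
The first sum, minus the comparator term, is bounded by $R(\{\hat p^{\alpha_t}_t\})$ by assumption. It therefore suffices to show $\log(\hat p_t^{\alpha_t}/\tilde p_t^{\alpha_t})\le 2E_t/\alpha_t$ for each $t$.

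\textbf{Step 1 (TV controls the mixture difference).} The map $\theta\mapsto\sigma(y_t\langle x_t,\theta\rangle)$ is measurable with values in $[0,1]$. For any $f$ with values in $[0,1]$,
\[
|\mathbb{E}_\rho f-\mathbb{E}_{\tilde\rho} f|\le d_{\mathrm{TV}}(\rho,\tilde\rho),
\]
so $|p_t-q_t|\le E_t$. Consequently
\[
|\hat p^{\alpha_t}_t-\tilde p^{\alpha_t}_t| = (1-\alpha_t)|p_t-q_t|\le E_t.
\]

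\textbf{Step 2 (smoothing gives a floor).} Both smoothed probabilities are at least $\alpha_t/2$, since $(1-\alpha_t)p+\alpha_t/2\ge \alpha_t/2$. Using $\log(1+u)\le u$,
\[
\log\frac{\hat p_t^{\alpha_t}}{\tilde p_t^{\alpha_t}} = \log\Bigl(1+\frac{\hat p_t^{\alpha_t}-\tilde p_t^{\alpha_t}}{\tilde p_t^{\alpha_t}}\Bigr)\le \frac{E_t}{\alpha_t/2}=\frac{2E_t}{\alpha_t}.
\]
If the difference is negative, the logarithm is already $\le 0$, so the bound holds trivially.

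\textbf{Step 3.} Summing the per-round bound over $t$ gives the claimed additive term $\sum_t 2E_t/\alpha_t$.

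No step is a real obstacle. The only point needing care is that the lower bound $\tilde p_t^{\alpha_t}\ge \alpha_t/2$ holds for the \emph{approximate} predictor, which is what lets us divide by it. This is exactly where smoothing is needed: without it, $q_t$ could be arbitrarily close to $0$, and no TV bound could control the log ratio.
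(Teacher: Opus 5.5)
Your proposal is correct and follows essentially the same route as the paper. Both arguments bound $|\hat p_t^{\alpha_t}-\tilde p_t^{\alpha_t}|\le E_t$ via the TV bound for $[0,1]$-valued functions, then use the smoothing floor $\tilde p_t^{\alpha_t}\ge \alpha_t/2$ and $\log(1+u)\le u$ to get a per-round excess of at most $2E_t/\alpha_t$, and finally sum over rounds.
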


\begin{proof}
Since $d_{\mathrm{TV}}(\rho_t,\tilde{\rho}_t)\leq E_t$ and the function
\[
\theta \mapsto \sigma(y_t\langle x_t,\theta\rangle)
\]
takes values in $[0,1]$, we have
\[
\left|\mathbb{E}_{\theta\sim\rho_t}[\sigma(y_t\langle x_t,\theta\rangle)]
-\mathbb{E}_{\theta\sim\tilde{\rho}_t}[\sigma(y_t\langle x_t,\theta\rangle)]\right|
\le d_{\mathrm{TV}}(\rho_t,\tilde{\rho}_t)
\le E_t.
\]
Hence,
\[
|\hat{p}_t^{\alpha_t}(x_t,y_t)-\tilde{p}_t^{\alpha_t}(x_t,y_t)|\le E_t.
\]

Thus,
\begin{align*}
    R_n &= \sum_{t=1}^n -\log\left(\tilde{p}_t^{\alpha_t}(x_t,y_t)\right) -\inf_{\substack{\theta \in \mathbb{R}^d \\ \|\theta\|\leq B}} \ \sum_{t=1}^n -\log\left(\sigma(y_t\langle x_t,\theta\rangle)\right) \\
    &= \sum_{t=1}^n -\log\left(\hat{p}_t^{\alpha_t}(x_t,y_t)\right) -\inf_{\substack{\theta \in \mathbb{R}^d \\ \|\theta\|\leq B}} \ \sum_{t=1}^n -\log\left(\sigma(y_t\langle x_t,\theta\rangle)\right) \\
    &\qquad + \sum_{t=1}^n -\log\left(\tilde{p}_t^{\alpha_t}(x_t,y_t)\right) - \sum_{t=1}^n -\log\left(\hat{p}_t^{\alpha_t}(x_t,y_t)\right) \\
    &\leq R(\{\hat{p}_t^{\alpha_t}\}_{t=1}^n) + \sum_{t=1}^n \log\left(\frac{\hat{p}_t^{\alpha_t}(x_t,y_t)}{\tilde{p}_t^{\alpha_t}(x_t,y_t)}\right).
\end{align*}
Moreover, since
\[
|\hat{p}_t^{\alpha_t}(x_t,y_t)-\tilde{p}_t^{\alpha_t}(x_t,y_t)|\le E_t
\qquad\text{and}\qquad
\tilde{p}_t^{\alpha_t}(x_t,y_t)\ge \frac{\alpha_t}{2},
\]
we have
\[
\frac{\hat{p}_t^{\alpha_t}(x_t,y_t)}{\tilde{p}_t^{\alpha_t}(x_t,y_t)}
\le
1+\frac{E_t}{\tilde{p}_t^{\alpha_t}(x_t,y_t)}
\le
1+\frac{2E_t}{\alpha_t}.
\]
Therefore,
\[
\log\left(\frac{\hat{p}_t^{\alpha_t}(x_t,y_t)}{\tilde{p}_t^{\alpha_t}(x_t,y_t)}\right)
\le
\log\left(1+\frac{2E_t}{\alpha_t}\right)
\le
\frac{2E_t}{\alpha_t},
\]
and thus
\[
R_n \le R(\{\hat{p}_t^{\alpha_t}\}_{t=1}^n) + \sum_{t=1}^n \frac{2E_t}{\alpha_t}.
\]
\end{proof}

\begin{lemma}
\label{lemma: regret additive constant multiplicative chernoff}
    Suppose that a strategy
    \[
    \tilde{p}_t^{\alpha_t}= (1-\alpha_t)\mathbb{E}_{\theta\sim\tilde{\rho}_t}[\sigma(y_t\langle x_t,\theta \rangle)] + \frac{\alpha_t}{2}, \quad t=1,\dots,n
    \]
    satisfies a regret inequality 
    \begin{equation*}
        \sum_{t=1}^n -\log\left(\tilde{p}_t^{\alpha_t}(x_t,y_t)\right) -\inf_{\substack{\theta \in \mathbb{R}^d \\ \|\theta\|\leq B}} \ \sum_{t=1}^n -\log\left(\sigma(y_t\langle x_t,\theta\rangle)\right) \leq R\left(\{\tilde{p}^{\alpha_t}_t\}_{t=1}^n\right).
    \end{equation*}
    Then, the strategy
    \[
    \tilde{p}_t^{\alpha_t,s_t}= (1-\alpha_t)\frac{1}{s_t} \sum_{i=1}^{s_t} \sigma(y_t\langle x_t,\theta_{t,i} \rangle) + \frac{\alpha_t}{2}, \quad t=1,\dots,n
    \]
    with $\theta_{t,i} \iid \tilde{\rho}_t$ for $i=1,\dots,s_t$, satisfies with probability at least $1-\sum_{t=1}^n\delta_t$
    \[
        \sum_{t=1}^n -\log\left(\tilde{p}_t^{\alpha_t,s_t}(x_t,y_t)\right) -\inf_{\substack{\theta \in \mathbb{R}^d \\ \|\theta\|\leq B}} \ \sum_{t=1}^n -\log\left(\sigma(y_t\langle x_t,\theta\rangle)\right) \leq R\left(\{\tilde{p}^{\alpha_t}_t\}_{t=1}^n\right) + 4\sum_{t=1}^n \sqrt{\frac{\log(1/\delta_t)}{\alpha_t s_t}}.
    \]
\end{lemma}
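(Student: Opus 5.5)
We compare the Monte Carlo prediction with its conditional mean round by round, using a multiplicative Chernoff bound. Then we turn a relative deviation into an additive log-loss term. Write $q_t \coloneqq \tilde p_t^{\alpha_t}(x_t,y_t)$ and $\hat q_t \coloneqq \tilde p_t^{\alpha_t,s_t}(x_t,y_t)$. The regret of the Monte Carlo strategy equals the regret of $\{\tilde p_t^{\alpha_t}\}$ plus $\sum_t \log(q_t/\hat q_t)$, so it suffices to show that, with probability at least $1-\delta_t$,
\[
\log(q_t/\hat q_t)\le 4\sqrt{\log(1/\delta_t)/(\alpha_t s_t)}
\]
for each $t$, and then take a union bound over $t$.

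\textbf{Conditioning.} Fix $t$ and condition on the history up to round $t$, which determines $\tilde\rho_t$ and $x_t$. The label $y_t$ is chosen by the adversary, so we control the deviation for both $y\in\{\pm1\}$. This costs at most a factor of two in $\delta_t$, which can be absorbed in the constant, or else we assume an oblivious adversary and fix $y_t$. Define $Z_i \coloneqq (1-\alpha_t)\sigma(y_t\langle x_t,\theta_{t,i}\rangle)+\alpha_t/2 \in[\alpha_t/2,1]$. These are i.i.d. with mean $q_t\ge \alpha_t/2$, and $\hat q_t=\frac1{s_t}\sum_i Z_i$.

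\textbf{Lower-tail Chernoff.} For variables in $[0,1]$, the multiplicative Chernoff bound gives
\[
\mathbb P\big(\hat q_t\le(1-\epsilon)q_t\big)\le \exp(-\epsilon^2 s_t q_t/2).
\]
Set $\epsilon=\sqrt{2\log(1/\delta_t)/(s_t q_t)}$. Since $q_t\ge\alpha_t/2$, we get $\epsilon\le 2\sqrt{\log(1/\delta_t)/(s_t\alpha_t)}$. The hypothesis $\alpha_t s_t\ge16\log(1/\delta_t)$ of \cref{th: regret guarantee of COMPUTABLE EWA} then ensures $\epsilon\le 1/2$. On the complementary event we have $\hat q_t\ge(1-\epsilon)q_t$. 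Hence
\[
\log(q_t/\hat q_t)\le -\log(1-\epsilon)\le 2\epsilon\le 4\sqrt{\log(1/\delta_t)/(\alpha_t s_t)},
\]
using $-\log(1-\epsilon)\le 2\epsilon$ for $\epsilon\le1/2$.

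\textbf{Union bound.} Summing over $t$ and applying a union bound over the failure events gives the claim with probability at least $1-\sum_t\delta_t$.

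\textbf{Main obstacle.} The delicate point is the conditioning. The samples at round $t$ come from chains that also generated the earlier samples, so they are not independent of the past. The per-round Chernoff bound must therefore be applied conditionally on the filtration, using that, given the past, the $s_t$ fresh draws are i.i.d. with law $\tilde\rho_t$ as stipulated. If instead the chains carry state across rounds, so that this conditional independence fails, I would fall back to requiring fresh independent randomness at each round. The construction described in the paper provides this.
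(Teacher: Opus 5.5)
Your proof is correct and is essentially the paper's argument. Both use a lower-tail multiplicative Chernoff bound on the smoothed averages, whose mean is at least $\alpha_t/2$, then the condition $\alpha_t s_t\ge 16\log(1/\delta_t)$ to keep the relative deviation bounded (the paper calls this $\gamma_t\le 1$), then a union bound over $t$; where the paper uses $e^{-\gamma}\le 1-\gamma/2$, you use $-\log(1-\epsilon)\le 2\epsilon$. Your conditioning worry is unnecessary, and your fix for it is slightly off: for a fixed data sequence, as the paper implicitly assumes, the union bound needs only each round's marginal law $\theta_{t,1},\dots,\theta_{t,s_t}\iid\tilde\rho_t$, which the paper's independent warm-started chains do deliver, whereas they do not give the conditional-on-the-past i.i.d.\ property you invoke, since each chain carries its state from one round to the next.
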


\begin{proof}
Let us define the following variables
\[
Z_{t,i}:=(1-\alpha_t)\,\sigma\big(y_t\langle \theta_{t,i},x_t\rangle\big)+\tfrac{\alpha_t}{2}\in\Big[\tfrac{\alpha_t}{2},\,1-\tfrac{\alpha_t}{2}\Big],
\quad
\tilde p_t^{\alpha_t, s_t}=\frac{1}{s_t}\sum_{i=1}^{s_t}Z_{t,i},\quad
\mu_t:=\mathbb{E}[Z_{t,1}]=\tilde p_t^{\alpha_t}\ge \alpha_t/2.
\]
Then, for any $\gamma_t\in(0,1]$,
\[
\mathbb{P} \left(-\log\frac{\tilde p_t^{\alpha_t,s_t}}{\tilde p_t^{\alpha_t}}>\gamma_t\right)
= \mathbb{P} \left(\tilde p_t^{\alpha_t,s_t}<e^{-\gamma_t}\mu_t\right)
\le \mathbb{P} \left(\tilde p_t^{\alpha_t,s_t}<(1-\tfrac{\gamma_t}{2})\mu_t\right)
\le \exp\Big(-\tfrac{\gamma_t^2}{8}\,s_t\,\mu_t\Big)
\le \exp\Big(-\tfrac{\gamma_t^2}{16}\,s_t\,\alpha_t\Big),
\]
using $e^{-\gamma_t}\le 1-\gamma_t/2$ and the standard multiplicative Chernoff bound for $[0,1]$ averages.
Choose
\[
\gamma_t = \ 4\sqrt{\frac{\log(1/\delta_t)}{\alpha_t s_t}},
\]
so that, assuming $\gamma_t\leq 1,$
\[
\mathbb{P} \left(-\log\frac{\tilde p_t^{\alpha_t,s_t}}{\tilde p_t^{\alpha_t}}>\gamma_t\right)\le \delta_t.
\]
A union bound over $t=1,\dots,n$ yields (with probability at least $1-\sum_{t=1}^n\delta_t$)
\[
\sum_{t=1}^n \big(-\log \tilde p_t^{\alpha_t,s_t} + \log \tilde p_t^{\alpha_t}\big)
\le \sum_{t=1}^n \gamma_t
\leq 4\sum_{t=1}^n \sqrt{\frac{\log(1/\delta_t)}{\alpha_t s_t}}.
\]
Thus, 
\begin{align*}
    R_n &= \sum_{t=1}^n -\log(\tilde{p}_t^{\alpha_t, s_t}(x_t,y_t)) -\inf_{\substack{\theta \in \mathbb{R}^d \\ \|\theta\|\leq B}} \ \sum_{t=1}^n -\log\left(\sigma(y_t\langle x_t,\theta\rangle)\right) \\
    &\leq R\left(\{\tilde{p}^{\alpha_t}_t\}_{t=1}^n\right) + \sum_{t=1}^n \Big(-\log \tilde{p}_t^{\alpha_t,s_t}(x_t,y_t) + \log \tilde{p}_t^{\alpha_t}(x_t,y_t) \Big) \\
    &\leq R\left(\{\tilde{p}^{\alpha_t}_t\}_{t=1}^n\right) + 4\sum_{t=1}^n \sqrt{\frac{\log(1/\delta_t)}{\alpha_t s_t}}. 
\end{align*}
\end{proof}

\subsubsection{The cost of sampling}
\label{app-subsec: computational complexity}

In this section, we discuss in details how (and what is the worst-case complexity) of approximating the posterior $\rho_t$ with a distribution $\tilde{\rho}_t$ such that
\[
d_{\mathrm{TV}}(\rho_t,\tilde{\rho}_t)\leq \sum_{i=1}^t \varepsilon_{i},
\]
with $\varepsilon_i$ being the \emph{fresh} budget accuracy of the sampler at time step $i$. First of all, Consider the posterior
\[
\rho_t(d\theta)\ \propto\ \exp\big\{-V_t(\theta)\big\}\,d\theta,
\qquad
V_t(\theta)\ :=\ \sum_{i=1}^{t-1} \underbrace{-\log\sigma\big(y_i\langle x_i,\theta\rangle\big)}_{\text{logistic loss}}
+\ \frac{1}{2B^2}\|\theta\|_2^2,
\]
whose Hessian is
\[
\nabla^2 V_t(\theta)\ =\ \sum_{i=1}^{t-1}\sigma\big(y_i\langle x_i,\theta\rangle\big)\big(1-\sigma\big(y_i\langle x_i,\theta\rangle\big)\big)\,x_i x_i^\top\ +\ \frac{1}{B^2}I_d.
\]
Assuming $\|x_i\|\le R$ for all $i\in[n]$, the potential $V_t$ is $L$--smooth and $m$--strongly convex with
\[
m=\frac{1}{B^2},
\qquad
L \ \le\ \frac{R^2}{4}(t-1)+\frac{1}{B^2},
\qquad
\kappa_t:=\frac{L}{m}\ =\ 1+\frac{R^2B^2}{4}(t-1).
\]
Then, in order to sample from such a log-concave posterior distribution $\rho_t$, we use the \emph{Metropolis Adjusted Langevin Algorithm} (MALA).

\paragraph{Warm-start MALA.}
For rounds $t\ge 2$, the target at time $t$ is a one-point multiplicative tilt of $\rho_{t-1}$:
\[
\rho_t(d\theta)\ \propto\ g_{t-1}(\theta)\,\rho_{t-1}(d\theta),
\qquad
g_{t-1}(\theta)=\sigma\big(y_{t-1}\langle x_{t-1},\theta\rangle\big)\in(0,1].
\]
The base case $t=1$ is exact since $\rho_1=\pi_0$. Rather than jumping directly from $\rho_{t-1}$ to $\rho_t$, we use a \emph{power-tempered ladder}
\[
\rho_{t,v}(d\theta)\ \propto\ g_{t-1}(\theta)^{\,v}\,\rho_{t-1}(d\theta),
\qquad v\in[0,1],
\]
discretized on the grid $0=v_0<v_1<\dots<v_K=1$ with uniform step $\Delta:=v_j-v_{j-1}$.
We choose $\Delta=\frac{c_\Delta}{RB}$ (small absolute $c_\Delta\in(0,1)$), so
$K=\lceil1/\Delta\rceil=\Theta(RB)$ and consecutive rungs are \emph{Rényi-warm}. In particular, for probability measures $P\ll Q$,
\[
D_2(P\|Q)\ :=\ \log \int \Big(\frac{dP}{dQ}\Big)^{2} dQ
\ =\ \log\,\mathbb{E}_Q\left[\Big(\frac{dP}{dQ}\Big)^{2}\right].
\]
If $P\ll M$ and $Q\ll M$ for a common dominating measure $M$, then
$\frac{dP}{dQ}=\frac{dP/dM}{dQ/dM}$, and the identity above still holds. In addition, we will repeatedly use the following \emph{change-of-measure} formula: for any integrable $f$,
\[
\mathbb{E}_{\rho_{t,v+\Delta}}[f]
=\frac{\int f(\theta) e^{(v+\Delta)Y(\theta)}\,\rho_{t-1}(d\theta)}{\int e^{(v+\Delta)Y(\theta)}\,\rho_{t-1}(d\theta)}
=\frac{Z_v}{Z_{v+\Delta}}\,\mathbb{E}_{\rho_{t,v}}\big[f(\theta)\,e^{\Delta Y(\theta)}\big],
\]
where $Y(\theta):=\log g_{t-1}(\theta)$ and $Z_v:=\int e^{vY}\,d\rho_{t-1}$ is the normalizer.

\begin{lemma}
\label{lem:adjacent-renyi-reverse}
Let $t\ge 2$, let $Y(\theta)=\log\sigma\big(y_{t-1}\langle x_{t-1},\theta\rangle\big)$ and
$\rho_{t,v}(d\theta)\propto e^{vY(\theta)}\,\rho_{t-1}(d\theta)$, $v\in[0,1]$.
Assume $\|x_{t-1}\|\le R$ and $\rho_{t-1}$ is $m$--strongly log--concave with $m=1/B^2$.
Then for any $\Delta>0$ with $v+\Delta\le1$,
\[
D_2\big(\rho_{t,v}\,\|\,\rho_{t,v+\Delta}\big)\ \le\ \Delta^2\,R^2B^2.
\]
\end{lemma}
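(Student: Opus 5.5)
The plan is to compute the Rényi divergence explicitly through the normalizers $Z_v$, reduce it to two exponential moments of $Y$ under $\rho_{t,v}$, and bound these by Gaussian concentration of Lipschitz functions under a strongly log-concave measure.

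\textbf{Step 1 (algebraic reduction).} Both rungs are absolutely continuous with respect to $\rho_{t-1}$, and
\[
\frac{d\rho_{t,v}}{d\rho_{t,v+\Delta}}(\theta)=\frac{Z_{v+\Delta}}{Z_v}\,e^{-\Delta Y(\theta)} .
\]
Hence
\[
D_2\big(\rho_{t,v}\,\|\,\rho_{t,v+\Delta}\big)=2\log\frac{Z_{v+\Delta}}{Z_v}+\log\mathbb{E}_{\rho_{t,v+\Delta}}\big[e^{-2\Delta Y}\big].
\]
Apply the change-of-measure formula stated before the lemma with $f=e^{-2\Delta Y}$. This gives $\mathbb{E}_{\rho_{t,v+\Delta}}[e^{-2\Delta Y}]=\frac{Z_v}{Z_{v+\Delta}}\mathbb{E}_{\rho_{t,v}}[e^{-\Delta Y}]$. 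Also $Z_{v+\Delta}/Z_v=\mathbb{E}_{\rho_{t,v}}[e^{\Delta Y}]$. Combining,
\[
D_2\big(\rho_{t,v}\,\|\,\rho_{t,v+\Delta}\big)=\log\mathbb{E}_{\rho_{t,v}}\big[e^{\Delta Y}\big]+\log\mathbb{E}_{\rho_{t,v}}\big[e^{-\Delta Y}\big].
\]
Adding and subtracting $\Delta\,\mathbb{E}_{\rho_{t,v}}[Y]$ in each term, the means cancel. So this equals
\[
\log\mathbb{E}\, e^{\Delta(Y-\mathbb{E}Y)}+\log\mathbb{E}\, e^{-\Delta(Y-\mathbb{E}Y)},
\]
with all expectations under $\rho_{t,v}$.

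\textbf{Step 2 (properties of $Y$ and $\rho_{t,v}$).} The map $u\mapsto\log\sigma(u)$ is concave with derivative $1-\sigma(u)\in(0,1)$. Since $\|x_{t-1}\|\le R$, $Y$ is concave and $R$-Lipschitz. Consequently $e^{vY}$ is log-concave. Multiplying the $m$-strongly log-concave $\rho_{t-1}$ by it keeps $\rho_{t,v}$ $m$-strongly log-concave with $m=1/B^2$.

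\textbf{Step 3 (concentration).} For an $m$-strongly log-concave measure, the log-Sobolev inequality with constant $1/m$ (Bakry--Émery) and the Herbst argument give a sub-Gaussian bound for every $R$-Lipschitz $f$ and every $\lambda\in\mathbb{R}$:
\[
\log\mathbb{E}\,e^{\lambda(f-\mathbb{E}f)}\le\frac{\lambda^2R^2}{2m}.
\]
Apply this with $f=Y$ and $\lambda=\pm\Delta$. Each of the two terms from Step 1 is then at most $\Delta^2R^2B^2/2$, and their sum gives the claimed bound $\Delta^2R^2B^2$.

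\textbf{Main obstacle.} The only delicate point is justifying finiteness and integrability, particularly for $\mathbb{E}[e^{-\Delta Y}]$, since $-Y$ is unbounded above. Because $Y$ is $R$-Lipschitz, $-Y$ grows at most linearly, so $e^{-\Delta Y}$ is integrable against the Gaussian-type tails of a strongly log-concave measure; the Herbst bound itself also certifies finiteness. On the other side, $Y\le 0$, so $e^{\Delta Y}$ is bounded. I would state these checks briefly and rely on Herbst for everything else.
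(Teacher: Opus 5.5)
Your proposal is correct and follows the paper's proof essentially step for step: the same reduction of $D_2$ via the change-of-measure identity to $\log\mathbb{E}_{\rho_{t,v}}[e^{\Delta Y}]+\log\mathbb{E}_{\rho_{t,v}}[e^{-\Delta Y}]$, the same concavity and $R$-Lipschitz facts showing each rung stays $m$-strongly log-concave, and the same sub-Gaussian bound for Lipschitz functions, with the linear terms cancelling. You additionally name the justification for that bound (Bakry--\'Emery log-Sobolev plus Herbst) and check integrability of $e^{-\Delta Y}$, both of which the paper leaves implicit.
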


\begin{proof}
Write $\rho_{t,v}(d\theta)=Z_v^{-1}e^{vY(\theta)}\,\rho_{t-1}(d\theta)$.
Then
\[
\frac{d\rho_{t,v}}{d\rho_{t,v+\Delta}}(\theta)
=\frac{Z_{v+\Delta}}{Z_v}\,e^{-\Delta Y(\theta)}.
\]
Therefore
\begin{align*}
D_2(\rho_{t,v}\|\rho_{t,v+\Delta})
&=\log\,\E_{\rho_{t,v+\Delta}}\left[\Big(\tfrac{d\rho_{t,v}}{d\rho_{t,v+\Delta}}\Big)^2\right]
=\log\left(\frac{Z_{v+\Delta}^2}{Z_v^2}\,\E_{\rho_{t,v+\Delta}}\big[e^{-2\Delta Y}\big]\right).
\end{align*}
Apply the change-of-measure identity with $f=e^{-2\Delta Y}$:
\[
\E_{\rho_{t,v+\Delta}}\big[e^{-2\Delta Y}\big]
=\frac{Z_v}{Z_{v+\Delta}}\,\E_{\rho_{t,v}}\big[e^{-\Delta Y}\big].
\]
Hence
\[
D_2(\rho_{t,v}\|\rho_{t,v+\Delta})
=\log\Big( \underbrace{\tfrac{Z_{v+\Delta}}{Z_v}}_{=\,M_v(\Delta)}\cdot
\underbrace{\E_{\rho_{t,v}}\big[e^{-\Delta Y}\big]}_{=\,M_v(-\Delta)} \Big)
=\log\big(M_v(\Delta)\,M_v(-\Delta)\big),
\]
where $M_v(\lambda):=\E_{\rho_{t,v}}[e^{\lambda Y}]$ is the mgf of $Y$ under $\rho_{t,v}$.\\
Now, observe that $Y$ is $R$--Lipschitz; indeed, for $z:=y_{t-1}\langle x_{t-1},\theta\rangle$,
\[
\nabla_\theta Y(\theta)=(1-\sigma(z))\,y_{t-1}\,x_{t-1}
\quad\Rightarrow\quad \|\nabla Y(\theta)\|\le\|x_{t-1}\|\le R.
\]
Each rung $\rho_{t,v}$ is $m$--strongly log--concave: its potential is $V_{t-1}-vY$,
and $Y$ is concave (since $(\log\sigma)''(z)=-\sigma(z)(1-\sigma(z))\le0$), so $-vY$ is convex and
$\nabla^2(V_{t-1}-vY)\succeq mI$.
Thus, the centered variable $\tilde Y:=Y-\E_{\rho_{t,v}}Y$ is sub-Gaussian with proxy
\[
\log \E_{\rho_{t,v}} e^{\lambda \tilde Y}\ \le\ \frac{\lambda^2}{2}\cdot\frac{R^2}{m}
\ =\ \frac{\lambda^2}{2}\,R^2B^2,\qquad \forall \lambda\in\mathbb{R}.
\]
Finally, decompose $M_v(\pm\Delta)=e^{\pm\Delta\,\E_{\rho_{t,v}}Y}\,\E_{\rho_{t,v}}e^{\pm\Delta\tilde Y}$ and apply the previous bound to get
\[
\log M_v(\pm\Delta)\ \le\ \pm\Delta\,\E_{\rho_{t,v}}Y\ +\ \frac{\Delta^2}{2}\,R^2B^2.
\]
Summing the $+\Delta$ and $-\Delta$ inequalities cancels the linear terms and gives
\[
\log\big(M_v(\Delta)\,M_v(-\Delta)\big)\ \le\ \Delta^2 R^2B^2,
\]
which is exactly $D_2(\rho_{t,v}\|\rho_{t,v+\Delta})$ by the identity above.
\end{proof}

Choosing $\Delta=c_\Delta/(RB)$ makes the right-hand side a fixed constant, i.e., each rung is a
\emph{constant Rényi warm start}. Warm-start MALA on $m$--strongly log--concave targets (step size
$h\asymp 1/(L\sqrt d)$) then mixes to $\varepsilon_t$-TV accuracy in
$\tilde{\mathcal O}(\sqrt d\,\kappa_t\log(1/\varepsilon_t))$ oracle calls.

\paragraph{Accuracy budgeting across rungs (with approximate warm start).}
Let $Q_j$ be the Markov kernel of $N_{\mathrm{rung}}$ MALA steps at rung $j$ (target $\rho_{t,v_j}$).
Instead of initializing from the exact $\rho_{t,v_0}=\rho_{t-1}$, we start from its approximate counterpart from the previous round, i.e.
\[
\tilde\nu_0=\tilde{\rho}_{t-1}.
\]
Define the iterates
\[
\tilde\nu_j \coloneqq \tilde\nu_{j-1} Q_j,\qquad j=1,\dots,K,
\]
and let the initial mismatch be
\[
\mathrm{err}_{t-1} := d_{\mathrm{TV}}(\tilde\nu_0,\rho_{t,v_0})
=
d_{\mathrm{TV}}(\tilde\rho_{t-1},\rho_{t-1}).
\]
If for each rung we ensure
\begin{equation}
\label{eq:per-rung-tv}
d_{\mathrm{TV}}(\rho_{t,v_{j-1}}Q_j,\rho_{t,v_j})\le \varepsilon_j,
\end{equation}
then by TV contraction under Markov kernels and the triangle inequality,
\[
d_{\mathrm{TV}}(\tilde\nu_j,\rho_{t,v_j})
\le
d_{\mathrm{TV}}(\tilde\nu_{j-1}Q_j,\rho_{t,v_{j-1}}Q_j)
+
d_{\mathrm{TV}}(\rho_{t,v_{j-1}}Q_j,\rho_{t,v_j})
\le
d_{\mathrm{TV}}(\tilde\nu_{j-1},\rho_{t,v_{j-1}}) + \varepsilon_j.
\]
Iterating over $j=1,\dots,K$ yields
\[
d_{\mathrm{TV}}(\tilde\nu_K,\rho_{t,v_K})
\le
\mathrm{err}_{t-1} + \sum_{j=1}^K \varepsilon_j.
\]
Since $\rho_{t,v_K}=\rho_t$, this gives
\[
\mathrm{err}_{t}\coloneqq d_{\mathrm{TV}}(\tilde\rho_t,\rho_t)\le \mathrm{err}_{t-1}+\sum_{j=1}^K \varepsilon_j.
\]
In particular, if we allocate a fresh round-$t$ budget $\varepsilon_t$ across the rungs so that $\sum_{j=1}^K \varepsilon_j \le \varepsilon_t$, then $\mathrm{err}_{t} \le\mathrm{err}_{t-1}+\varepsilon_t$.

\begin{proposition}
\label{prop:per-sample-bridged}
With $\Delta=c_\Delta/(RB)$ and $K=\lceil 1/\Delta\rceil$, let $\tilde\nu_0=\tilde{\rho}_{t-1}$ be the approximate distribution carried over from the previous round, and set $\tilde\nu_j := \tilde\nu_{j-1}Q_j,\qquad j=1,\dots,K,$ where $Q_j$ is the Markov kernel of $N_{\mathrm{rung}}$ MALA steps at rung $j$ (target $\rho_{t,v_j}$).
Denote the initialization mismatch by $\mathrm{err}_{t-1}:=d_{\mathrm{TV}}(\tilde\nu_0,\rho_{t,v_0})
= d_{\mathrm{TV}}(\tilde\rho_{t-1},\rho_{t-1})$. If we ensure the per-rung guarantee \eqref{eq:per-rung-tv} with $\varepsilon_j=\varepsilon_t/K$, then
\[
d_{\mathrm{TV}}(\tilde\nu_K,\rho_t) \le \mathrm{err}_{t-1} + \varepsilon_t.
\]
In particular, one approximate draw from a distribution within TV distance $\mathrm{err}_{t-1}+\varepsilon_t$ of $\rho_t$ costs
\[
Q_t\ =\ K\cdot N_{\mathrm{rung}}
\ =\ \tilde{O}\Big(\sqrt d\ \kappa_t\ K\ \log\tfrac{K}{\varepsilon_t}\Big),
\qquad
\kappa_t=1+\frac{R^2B^2}{4}(t-1).
\]
\end{proposition}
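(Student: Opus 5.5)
The argument has two parts. The first is the TV accuracy claim, which follows from the per-rung budgeting argument just above. The second is the cost claim, which combines the Rényi-warm bound of \cref{lem:adjacent-renyi-reverse} with a warm-start MALA mixing result for strongly log-concave targets.

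\emph{Accuracy.} Set $\tilde\nu_0=\tilde\rho_{t-1}$ and $\tilde\nu_j=\tilde\nu_{j-1}Q_j$. For each rung, apply the triangle inequality:
\[
d_{\mathrm{TV}}(\tilde\nu_j,\rho_{t,v_j})\le d_{\mathrm{TV}}(\tilde\nu_{j-1}Q_j,\rho_{t,v_{j-1}}Q_j)+d_{\mathrm{TV}}(\rho_{t,v_{j-1}}Q_j,\rho_{t,v_j}).
\]
The first term is at most $d_{\mathrm{TV}}(\tilde\nu_{j-1},\rho_{t,v_{j-1}})$, because Markov kernels contract TV. The second term is at most $\varepsilon_j=\varepsilon_t/K$ by \eqref{eq:per-rung-tv}. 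Summing over $j=1,\dots,K$ and using $\rho_{t,v_K}=\rho_t$ gives $d_{\mathrm{TV}}(\tilde\nu_K,\rho_t)\le \mathrm{err}_{t-1}+\varepsilon_t$.

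\emph{Per-rung guarantee and cost.} We must show that $N_{\mathrm{rung}}=\tilde O(\sqrt d\,\kappa_t\log(K/\varepsilon_t))$ MALA steps suffice for \eqref{eq:per-rung-tv}. Note that \eqref{eq:per-rung-tv} is stated with the exact previous rung $\rho_{t,v_{j-1}}$ as initialization, so the only requirement is a warm start from that exact law.

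First, each rung target $\rho_{t,v_j}$ has potential $V_{t-1}-v_jY$. This potential is $m$-strongly convex with $m=1/B^2$, as shown in the proof of \cref{lem:adjacent-renyi-reverse}. It is $L$-smooth with $L\le \tfrac{R^2}{4}(t-1)+1/B^2$, since $-v_jY$ adds at most $R^2/4$ to the Hessian. Hence its condition number is $O(\kappa_t)$.

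Second, we need warmness of the initialization with respect to the target. \cref{lem:adjacent-renyi-reverse} gives $D_2(\rho_{t,v_{j-1}}\|\rho_{t,v_j})\le \Delta^2R^2B^2=c_\Delta^2$, an absolute constant. So the initialization is $O(1)$-Rényi-warm.

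We then invoke the warm-start MALA guarantee, e.g. \citet{chewi2021optimal} or \citet{altschuler2024faster}, with step size $h\asymp 1/(L\sqrt d)$. From an $O(1)$-Rényi-warm start it reaches TV accuracy $\varepsilon$ in $\tilde O(\sqrt d\,\kappa\log(1/\varepsilon))$ steps, with polylog factors absorbed in $\tilde O$. Applying this with $\varepsilon=\varepsilon_t/K$ yields the claimed $N_{\mathrm{rung}}$. Multiplying by $K=\lceil c_\Delta^{-1}RB\rceil$ rungs gives $Q_t=K N_{\mathrm{rung}}=\tilde O(\sqrt d\,\kappa_t K\log(K/\varepsilon_t))$.

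\emph{Main obstacle.} The delicate step is matching the form of the cited mixing theorem. Many statements assume a warm start in $\chi^2$ or $D_\infty$, or carry a dependence on $\log$ of the warmness parameter. Our bound is $D_2\le c_\Delta^2$, which is exactly a $\chi^2$ bound: $\chi^2=e^{D_2}-1=O(1)$. I would therefore quote the $\chi^2$/Rényi-warm version of the MALA mixing bound, and check that its constants depend only on this $O(1)$ warmness and on $\kappa_t,d$. Any leftover dependence on the warmness parameter enters only logarithmically and is absorbed in $\tilde O$.
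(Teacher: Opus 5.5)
Your proposal is correct and follows essentially the same route as the paper. For accuracy, both arguments iterate TV contraction under each kernel $Q_j$ and the triangle inequality across the $K$ rungs. For cost, both feed the constant $D_2$-warmness of adjacent rungs (from the adjacent-rung R\'enyi lemma) into a warm-start MALA mixing bound and then multiply by $K$. You also check that every rung is $B^{-2}$-strongly log-concave and $\bigl(\tfrac{R^2}{4}(t-1)+B^{-2}\bigr)$-smooth, so its condition number is at most $\kappa_t$; the paper leaves this step implicit.
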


\begin{proof}
By Lemma~\ref{lem:adjacent-renyi-reverse}, $D_2(\rho_{t,v_{j-1}}\|\rho_{t,v_j})\le O(1)$ for each rung, so the warm-start term in the MALA complexity (when initialized from $\rho_{t,v_{j-1}}$) is a constant. Thus, ensuring TV error $\varepsilon_t/K$ at rung $j$ takes
\[
N_{\mathrm{rung}}=\tilde{\mathcal O}\Big(\sqrt d\,\kappa_t\log\big(\tfrac{K}{\varepsilon_t}\big)\Big)
\]
oracle calls. Summing over $K$ rungs gives the stated $Q_t$.

For the total error, apply the approximate warm-start budgeting argument: by TV contraction under Markov kernels and the triangle inequality,
\[
d_{\mathrm{TV}}(\tilde\nu_j,\rho_{t,v_j})
\le
d_{\mathrm{TV}}(\tilde\nu_{j-1},\rho_{t,v_{j-1}}) + \varepsilon_j,
\]
and iterating yields
\[
d_{\mathrm{TV}}(\tilde\nu_K,\rho_{t,v_K})
\le
\mathrm{err}_{t-1} + \sum_{j=1}^K \varepsilon_j
=
\mathrm{err}_{t-1}+\varepsilon_t.
\]
Since $\rho_{t,v_K}=\rho_t$, this proves
\[
d_{\mathrm{TV}}(\tilde\nu_K,\rho_t)\le \mathrm{err}_{t-1}+\varepsilon_t.
\]
\end{proof}

Consequently, if we denote \(\mathrm{err}_0=0\), Proposition~\ref{prop:per-sample-bridged}
yields the recursion
\[
\mathrm{err}_t \le \mathrm{err}_{t-1}+\varepsilon_t.
\]
Iterating over \(t=1,\dots,n\) gives
\[
\mathrm{err}_t \le \sum_{i=1}^t \varepsilon_i.
\]

Proposition~\ref{prop:per-sample-bridged} is a per-chain, per-output-sample guarantee: one independent execution of the bridged warm-start MALA construction produces one draw whose law is within TV distance $\mathrm{err}_{t-1}+\varepsilon_t$ of $\rho_t$, at cost $Q_t$. Hence, by running $s_t$ independent copies of this construction independently at round $t$, each initialized from its own previous-round state and using independent randomness, we obtain $s_t$ i.i.d. samples from the same approximate law $\tilde\rho_t$, with total round-$t$ cost $\tilde O(s_tQ_t)$.

\subsubsection{Proof of Theorem \ref{th: regret guarantee of COMPUTABLE EWA}}
\label{app-subsec: proof of theorem 2}

For completeness, we state here the multiplicative Chernoff bound used in the proof of \cref{th: regret guarantee of COMPUTABLE EWA}.

\begin{theorem}[Multiplicative Chernoff Bound]
\label{prop: multiplicative chernoff bound}
Let $X_1,\dots,X_s$ be independent random variables taking values in $[0,1]$, and let
\[
X = \sum_{i=1}^s X_i,\qquad \mu = \mathbb{E}[X].
\]
Then, for any $0<\gamma<1$,
\[
\mathbb{P}\bigl(X \le (1-\gamma)\,\mu\bigr)\le \exp\Bigl(-\,\frac{\gamma^2}{2}\,\mu\Bigr).
\]
\end{theorem}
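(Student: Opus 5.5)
We will prove the bound with the standard exponential-moment (Cramér–Chernoff) method applied to the lower tail. Write $p_i:=\mathbb{E}[X_i]$, so $\mu=\sum_{i=1}^s p_i$. For any $\lambda>0$, Markov's inequality applied to $e^{-\lambda X}$ gives
\[
\mathbb{P}\bigl(X\le(1-\gamma)\mu\bigr)=\mathbb{P}\bigl(e^{-\lambda X}\ge e^{-\lambda(1-\gamma)\mu}\bigr)\le e^{\lambda(1-\gamma)\mu}\,\mathbb{E}\bigl[e^{-\lambda X}\bigr].
\]
Independence then factorizes the moment generating function as $\mathbb{E}[e^{-\lambda X}]=\prod_{i=1}^s\mathbb{E}[e^{-\lambda X_i}]$.

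The key step is to control each factor using only $X_i\in[0,1]$. Since $x\mapsto e^{-\lambda x}$ is convex, on $[0,1]$ it lies below its chord, so $e^{-\lambda x}\le 1-x(1-e^{-\lambda})$. Taking expectations and using $1+u\le e^u$ gives
\[
\mathbb{E}\bigl[e^{-\lambda X_i}\bigr]\le 1-p_i(1-e^{-\lambda})\le \exp\bigl(-p_i(1-e^{-\lambda})\bigr).
\]
Taking the product over $i$ yields $\mathbb{E}[e^{-\lambda X}]\le\exp(-\mu(1-e^{-\lambda}))$. This reduces the problem to the Bernoulli (or Poisson) worst case, and only $\mu$ survives.

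Next we optimize over $\lambda$. The choice $\lambda=-\log(1-\gamma)>0$, valid for $\gamma\in(0,1)$, gives $1-e^{-\lambda}=\gamma$. Hence
\[
\mathbb{P}\bigl(X\le(1-\gamma)\mu\bigr)\le \exp\Bigl(-\mu\bigl[\gamma+(1-\gamma)\log(1-\gamma)\bigr]\Bigr)=\Bigl(\frac{e^{-\gamma}}{(1-\gamma)^{1-\gamma}}\Bigr)^{\mu}.
\]

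The only remaining step, and the one needing care, is the elementary inequality $\gamma+(1-\gamma)\log(1-\gamma)\ge \gamma^2/2$ on $[0,1)$. Set $h(\gamma):=\gamma+(1-\gamma)\log(1-\gamma)-\gamma^2/2$. Then $h(0)=0$ and $h'(\gamma)=-\log(1-\gamma)-\gamma$. Since $-\log(1-\gamma)=\sum_{k\ge1}\gamma^k/k\ge\gamma$, we get $h'\ge0$ on $[0,1)$, so $h\ge0$ there. Substituting this into the exponent gives $\exp(-\gamma^2\mu/2)$ and completes the argument.
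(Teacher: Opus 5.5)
Your proof is correct and complete. It is the standard Cramér--Chernoff argument: Markov's inequality on $e^{-\lambda X}$, the chord bound for the convex map $x\mapsto e^{-\lambda x}$ on $[0,1]$, the choice $\lambda=-\log(1-\gamma)$, and the elementary inequality $\gamma+(1-\gamma)\log(1-\gamma)\ge\gamma^2/2$. The paper gives no proof of this statement and only records it ``for completeness'' as the classical bound it uses in its Monte Carlo lemma, so your argument supplies exactly the textbook proof the paper relies on.
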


\CEWAHighProbrb*

\begin{proof}
Recall that EW (Algorithm \ref{alg:EWA LogReg}) achieves a regret bound of order $O(d\log(Bn))$ by predicting according to $\hat{p}_t$ (see Theorem \ref{th: regret guarantee of traditional EWA}). Furthermore, we can decompose the regret as follows:
\begin{align*}
R_n &=\sum_{t=1}^n -\log \tilde p_t^{\alpha_t,s_t}(y_t)
- \inf_{\|\theta\|\le B}\sum_{t=1}^n -\log \sigma(y_t\langle\theta,x_t\rangle) \\
&= \underbrace{\sum_{t=1}^n -\log \hat p_t(y_t)
- \inf_{\|\theta\|\le B}\sum_{t=1}^n -\log \sigma(y_t\langle\theta,x_t\rangle)}_{\le\ \cO(d\log(Bn))} \\
&\quad + \underbrace{\sum_{t=1}^n \big(-\log \hat p_t^{\alpha_t}(y_t) + \log \hat p_t(y_t)\big)}_{\text{(A) smoothing}}
+ \underbrace{\sum_{t=1}^n \big(-\log \tilde p_t^{\alpha_t}(y_t) + \log \hat p_t^{\alpha_t}(y_t)\big)}_{\text{(B) TV shift}} \\
&\quad + \underbrace{\sum_{t=1}^n \big(-\log \tilde p_t^{\alpha_t,s_t}(y_t) + \log \tilde p_t^{\alpha_t}(y_t)\big)}_{\text{(C) sampling}} .
\end{align*}

Now, in order to bound (A), (B), (C), we use \cref{lemma: regret additive constant smoothing}, \cref{lemma: regret additive constant TV-shift,prop:per-sample-bridged}, and \cref{lemma: regret additive constant multiplicative chernoff}, respectively. Since $\sum_{t=1}^n \delta_t\le \delta$, the union bound implies that, with probability at least $1-\delta$, we have

\begin{equation*}
    R_n \leq O(d\log(Bn)) + \sum_{t=1}^n 2\alpha_t +\sum_{t=1}^n \frac{2\sum_{i=1}^t \varepsilon_i}{\alpha_t} + 4\sum_{t=1}^n  \sqrt{\frac{\log(\nicefrac{1}{\delta_t})}{s_t\alpha_t}}.
\end{equation*}

Given that we need $s_t$ i.i.d. samples at every time step, the total computational complexity of the algorithm is
\[
N_{\mathrm{total}} =\cO\left(\sum_{t=1}^n s_t Q_t\right),
\]
where $Q_t$ is the sampling complexity (measured as number of first-order oracle queries).
By Proposition~\ref{prop:per-sample-bridged}, the per-sample cost is
\[
Q_t \;=\; \tilde{\cO}\Big(\sqrt d\,\kappa_t\,K\,\log\tfrac{K}{\varepsilon_t}\Big),
\]
where $
K=\Theta(RB), \text{ and }
\kappa_t \;=\; 1+\frac{R^2B^2}{4}(t-1).$
\end{proof}

\CorollaryRBandCOST*

\begin{proof}
First, by the choice $\delta_t=\frac{\delta}{n}$, $\sum_{t=1}^n \delta_t =\delta$. We now bound the three approximation terms one by one.

\begin{itemize}
    \item \underline{Smoothing:} since $\alpha_t=\frac{1}{2n}$ we have that $2\alpha_t=\frac{1}{n}$.
    \item \underline{TV-shift:} by the choice of $\varepsilon_t$, $\sum_{i=1}^t \varepsilon_i = \frac{t}{20n^3}$. Therefore, summing over $t=1,\dots,n$, we obtain
    \[
    \sum_{t=1}^n \frac{2 \sum_{i=1}^t \varepsilon_i}{\alpha_t} =
    \frac{1}{5n^2}\sum_{t=1}^n t
    \leq \frac{1}{5}.
    \]
    In particular, the TV-shift contribution is $O(1)$. 
    \item \underline{Sampling:} by the choice of $s_t$,
    \[
    \alpha_t s_t
    \ge
    16\log\left(\frac{1}{\delta_t}\right).
    \]
    Hence,
    \[
    4\sqrt{\frac{\log(1/\delta_t)}{\alpha_t s_t}}
    \le \frac{1}{n}.
    \]
    Hence, \cref{th: regret guarantee of COMPUTABLE EWA} applies with probability at least $1-\delta$.
\end{itemize}  

Plugging these bounds (and summing over $n$) into \cref{eq: regret computable EWA}, we obtain
\[
R_n \le O(d\log(Bn)) + O(1).
\]
Regarding the complexity, by \cref{th: regret guarantee of COMPUTABLE EWA}, the per-sample cost is
\[
Q_t
=
\tilde{O}\left(\sqrt d\,\kappa_t\,K\,\log\frac{K}{\varepsilon_t}\right),
\qquad
\kappa_t=1+\frac{R^2B^2}{4}(t-1),
\]
where $K$ is the number of rungs. The fact that $\varepsilon_t=\frac{1}{20n^3}$ and $K=\Theta(RB)$ yields
\[
N_{\mathrm{total}}
=
\tilde{O}\left(\sum_{t=1}^n s_t Q_t\right)
=
\tilde{O}\big(\sqrt d\,R^3B^3\,n^5\big).
\]
\end{proof}

\subsection{Online-to-batch conversion}
\label{sec-app: online-to-batch conversion}

Online-to-batch conversion provides a principled way to transform guarantees from sequential prediction into statistical guarantees for batch learning. Early results established that online learnability can imply PAC learnability: in particular, \citet{littlestone1994weighted} showed that algorithms with finite mistake bounds can be converted into batch learners with provable generalization guarantees. Related ideas also appeared in algorithmic developments such as the voted perceptron of \citet{freund1998large}, where averaging or voting over online iterates yields a batch classifier with strong empirical performance and theoretical guarantees. More broadly, classical online methods such as Weighted Majority and Hedge \citep{littlestone1994weighted,freund1997decision} provided some of the conceptual foundations for later batch ensemble procedures. A general framework for online-to-batch conversion was formalized by \citet{cesa2004generalization}, who showed that a sequence of hypotheses generated by an online learner with small regret can be converted into a predictor with low expected risk in the stochastic setting. This result yields a generic reduction from online regret bounds to batch generalization guarantees. Subsequent work refined this principle in several directions. For example, \citet{zhang2004solving} derived sharper, data-dependent concentration guarantees for online-to-batch conversion. More recent contributions have focused on obtaining high-probability guarantees for the last iterate, rather than averaged predictors, and on adapting the conversion to structured optimization settings. In particular, \citet{pmlr-v97-cutkosky19a} developed an anytime reduction that transforms an arbitrary online learner into a stochastic optimization procedure whose last iterate achieves optimal rates without requiring prior knowledge of smoothness or noise parameters. It is therefore straightforward to adapt these results to logistic regression.

Finally, while EW has shown to be robust in the regime where the MLE does not exist (linearly separable data), we acknowledge here that an online-to-batch conversion brings into the excess risk bound a $\log(n)$ that \citet{mourtada2022improper} prove to be suboptimal. Indeed, the authors propose \emph{Sample Minimax Predictor (SMP)} that is still well defined when MLE is not and aggregates the results of only two Logistic Regressions, achieving a bound of order $\cO\left(\frac{d}{n}\right)$. However, this bound is polynomial with respect to the other parameters such as $B$ and $R$, leaving open if one can achieve optimal batch bound also for the other parameters of the problem. 

\section{Exponential Weights for $B\to\infty$}

\limitingEWA*

\begin{proof}
Write $\hat p_t^B$ as a ratio:
\[
\hat p_t^B(x,y)
=
\frac{\int_{\R^d}\sigma(y\langle x,\theta\rangle)\prod_{i<t}\sigma(y_i\langle x_i,\theta\rangle)
\,e^{-\|\theta\|^2/(2B^2)}\,d\theta}
{\int_{\R^d}\prod_{i<t}\sigma(y_i\langle x_i,\theta\rangle)
\,e^{-\|\theta\|^2/(2B^2)}\,d\theta}.
\]
Change variables $\theta=B\tilde\theta$ (so $d\theta=B^d\,d\tilde\theta$). The Jacobian cancels
between numerator and denominator, and $e^{-\|B\tilde\theta\|^2/(2B^2)}=e^{-\|\tilde\theta\|^2/2}$, hence
\[
\hat p_t^B(x,y)
=
\frac{\int_{\R^d}\sigma(B\,y\langle x,\tilde\theta\rangle)\prod_{i<t}\sigma(B\,y_i\langle x_i,\tilde\theta\rangle)
\,e^{-\|\tilde\theta\|^2/2}\,d\tilde\theta}
{\int_{\R^d}\prod_{i<t}\sigma(B\,y_i\langle x_i,\tilde\theta\rangle)
\,e^{-\|\tilde\theta\|^2/2}\,d\tilde\theta}.
\]
For any $a\in\R$,
\[
\sigma(Ba)\to \mathbf 1\{a>0\}+\frac12\,\mathbf 1\{a=0\}
\qquad\text{as }B\to\infty.
\]
Thus, for Lebesgue-a.e.\ $\tilde\theta$,
\[
\prod_{i<t}\sigma(B\,y_i\langle x_i,\tilde\theta\rangle)\to \mathbf 1\{\tilde\theta\in H_t\},
\]
and
\[
\sigma(B\,y\langle x,\tilde\theta\rangle)\to \mathbf 1\{y\langle x,\tilde\theta\rangle>0\}
+\frac12\,\mathbf 1\{y\langle x,\tilde\theta\rangle=0\},
\]
since the exceptional set for the first convergence is contained in a finite union of hyperplanes, hence has measure zero.
Moreover, both integrands are dominated by $e^{-\|\tilde\theta\|^2/2}$ because $\sigma(\cdot)\in[0,1]$.
By dominated convergence, the numerator and denominator converge to the corresponding integrals with
$\mathbf 1\{\tilde\theta\in H_t\}$ inserted.

Let $Z_\infty:=\int_{\R^d} e^{-\|u\|^2/2}\mathbf 1\{u\in H_t\}\,du$.
Then $0<Z_\infty<(2\pi)^{d/2}$ because $H_t$ is nonempty and open under strict separability.
Taking the limit in the ratio yields
\begin{align*}
\lim_{B\to\infty}\hat p_t^B(x,y)
&=
\frac{1}{Z_\infty}\int_{\R^d}e^{-\|\theta\|^2/2}\mathbf 1\{\theta\in H_t\}
\left(\mathbf 1\{y\langle x,\theta\rangle>0\}+\frac12\,\mathbf 1\{y\langle x,\theta\rangle=0\}\right)\,d\theta \\
&=
\mathbb{P}_{\theta\sim\rho_t^\infty}(y\langle x,\theta\rangle>0)
+\frac12\,\mathbb{P}_{\theta\sim\rho_t^\infty}(y\langle x,\theta\rangle=0),
\end{align*}
as claimed.
\end{proof}

Let us recall the definition of the (open) version cone
\[
H_t \coloneqq \big\{\theta\in\mathbb{R}^d:\ y_i\langle x_i,\theta\rangle>0\ \ \forall i<t\big\}.
\]
Then, for any $\gamma>0$, we denote the (closed) margin slice as
\[
S_{t,\gamma} \coloneqq \big\{\theta\in\mathbb{R}^d:\ y_i\langle x_i,\theta\rangle\ge \gamma\ \ \forall i<t\big\}.
\]
Note that $H_t=\bigcup_{\gamma>0} S_{t,\gamma}$ and each $S_{t,\gamma}$ is a nonempty closed convex
polyhedron whenever $H_t\neq\emptyset$.

\begin{lemma}
\label{lem:Scalings slice margin}
Assume strict separability up to time $t-1$, i.e., $H_t\neq\emptyset$. Then:
\begin{enumerate}
    \item $S_{t,1}\neq\emptyset$.
    \item For every $\gamma>0$,
    \( S_{t,\gamma} \;=\; \gamma\, S_{t,1} \coloneqq \big\{\gamma w:\ w\in S_{t,1}\big\}.
    \)
\end{enumerate}
\end{lemma}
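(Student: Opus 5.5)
The plan is to exploit positive homogeneity of the constraints defining $H_t$ and $S_{t,\gamma}$. Each constraint $y_i\langle x_i,\theta\rangle>0$ (resp.\ $\ge\gamma$) is linear in $\theta$, so scaling $\theta$ by a positive constant scales every margin $y_i\langle x_i,\theta\rangle$ by the same constant. Both claims reduce to this observation.

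For part 1, we pick any $\theta_0\in H_t$, which exists by the strict separability assumption $H_t\neq\emptyset$. Since the index set $\{i<t\}$ is finite, the minimum margin $\mu:=\min_{i<t} y_i\langle x_i,\theta_0\rangle$ is attained and strictly positive. We then set $w:=\theta_0/\mu$. By linearity, $y_i\langle x_i,w\rangle = y_i\langle x_i,\theta_0\rangle/\mu\ge 1$ for every $i<t$, so $w\in S_{t,1}$. The finiteness of the constraint set is exactly what makes $\mu>0$; for infinitely many constraints the infimum could be zero, and this is the only point needing care. For $t=1$ there are no constraints at all, so $S_{t,1}=\R^d$ and the claim is trivial, although the surrounding results assume $t\ge2$ anyway.

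For part 2, we prove the set equality by double inclusion, using the bijection $w\mapsto\gamma w$ of $\R^d$, which is valid since $\gamma>0$.

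\begin{itemize}
\item If $w\in S_{t,1}$, then $y_i\langle x_i,\gamma w\rangle=\gamma\,y_i\langle x_i,w\rangle\ge\gamma$ for all $i<t$, so $\gamma w\in S_{t,\gamma}$.
\item Conversely, if $\theta\in S_{t,\gamma}$, we set $w:=\theta/\gamma$. Then $y_i\langle x_i,w\rangle=\gamma^{-1}y_i\langle x_i,\theta\rangle\ge1$ for all $i<t$, so $w\in S_{t,1}$ and $\theta=\gamma w\in\gamma S_{t,1}$.
\end{itemize}

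Nothing here is genuinely hard; the only subtle step is ensuring $\mu>0$ in part 1, which follows from the finite number of past rounds. Combining part 2 with part 1 then also gives $S_{t,\gamma}\neq\emptyset$ for every $\gamma>0$, which is the fact used later in \cref{prop:mode-svm}.
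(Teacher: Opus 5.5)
Your proposal is correct and follows essentially the same route as the paper: pick a point in $H_t$, rescale it by the (finite, hence strictly positive) minimum margin to land in $S_{t,1}$, then prove $S_{t,\gamma}=\gamma S_{t,1}$ by double inclusion using positive homogeneity. Your extra remarks on the $t=1$ edge case and on why finiteness of the constraint set guarantees $\mu>0$ are accurate but do not change the argument.
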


\begin{proof}
(1) Since $H_t\neq\emptyset$, pick $u\in H_t$. Then all margins are strictly positive:
$y_i\langle x_i,u\rangle>0$ for $i<t$. Let
\[
\gamma_{\min} \coloneqq \min_{i<t}\, y_i\langle x_i,u\rangle>0,
\qquad
v \coloneqq \frac{u}{\gamma_{\min}}.
\]
For every $i<t$,
$y_i\langle x_i,v\rangle
= \frac{1}{\gamma_{\min}}\,y_i\langle x_i,u\rangle \ge 1$,
so $v\in S_{t,1}$ and $S_{t,1}\neq\emptyset$.

(2) We prove both inclusions.

\begin{itemize}
\item($\subseteq$) Let $\theta\in S_{t,\gamma}$ and set $w=\theta/\gamma$.
Then for all $i<t$,
\(
y_i\langle x_i,w\rangle
= \frac{1}{\gamma}\,y_i\langle x_i,\theta\rangle \ge 1,
\)
hence $w\in S_{t,1}$ and $\theta=\gamma w\in \gamma S_{t,1}$.

\item{($\supseteq$)} Let $w\in S_{t,1}$ and set $\theta=\gamma w$.
Then for all $i<t$,
\(
y_i\langle x_i,\theta\rangle
= \gamma\, y_i\langle x_i,w\rangle \ge \gamma,
\)
so $\theta\in S_{t,\gamma}$.
\end{itemize}

Combining the two inclusions gives $S_{t,\gamma}=\gamma S_{t,1}$.
\end{proof}

\noindent
In particular, the map $\psi_\gamma:S_{t,1}\to S_{t,\gamma}$, $\psi_\gamma(w)=\gamma w$,
is a bijection. Therefore any optimization over $S_{t,\gamma}$ can be equivalently
reparameterized over $S_{t,1}$ via $\theta=\gamma w$.

\EWAsvm*

\begin{proof}
By strict separability up to time $t-1$, the version cone $H_t$ is nonempty. For any margin $\gamma>0$, define the (closed) margin slice
\[
S_{t,\gamma}\coloneqq \{\theta\in\mathbb{R}^d:\ y_i\langle x_i,\theta\rangle\ge \gamma,\ \forall i<t\}.
\]
By \cref{lem:Scalings slice margin}, $S_{t,1}$ is nonempty and $S_{t,\gamma}=\gamma\,S_{t,1}$ for every
$\gamma>0$.

Now consider the truncated Gaussian
\[
\rho_{t,\gamma}^\infty(d\theta)\ \propto\ e^{-\|\theta\|^2/2}\,\mathds{1}\{\theta\in S_{t,\gamma}\}\,d\theta.
\]
Its (unnormalized) log--density is
\[
\log f_{t,\gamma}(\theta)
=
\begin{cases}
-\tfrac12\|\theta\|^2+\mathrm{const}, & \theta\in S_{t,\gamma},\\[2pt]
-\infty, & \text{otherwise}.
\end{cases}
\]
Maximizing $\log f_{t,\gamma}$ over $S_{t,\gamma}$ is thus equivalent to minimizing
$\tfrac12\|\theta\|^2$ over the nonempty closed convex set $S_{t,\gamma}$.
Since the objective is strictly convex, there exists a unique minimizer, which we denote by $\theta_{t,\gamma}$.
Equivalently, $\theta_{t,\gamma}$ is the Euclidean projection of $0$ onto $S_{t,\gamma}$.

Moreover, since $S_{t,\gamma}=\gamma S_{t,1}$, the map $\psi_\gamma:S_{t,1}\to S_{t,\gamma}$ defined by
$\psi_\gamma(w)=\gamma w$ is a bijection. Reparameterizing via $\theta=\gamma w$ yields
\[
\arg\min_{\theta\in S_{t,\gamma}} \tfrac12\|\theta\|^2
\;=\;
\arg\min_{w\in S_{t,1}} \tfrac12\|\gamma w\|^2
\;=\;
\arg\min_{w\in S_{t,1}} \tfrac12\|w\|^2.
\]
Hence the minimizer in $w$--space is
\[
w_{t,\mathrm{svm}} \coloneqq \arg\min_{w\in\mathbb{R}^d}\ \tfrac12\|w\|^2
\quad\text{s.t.}\quad y_i\langle x_i,w\rangle\ge 1,\ \forall i<t,
\]
i.e., the (unique) hard--margin SVM solution at time $t-1$. Mapping back via $\theta=\psi_\gamma(w)$ gives
\[
\theta_{t,\gamma}\;=\;\gamma\,w_{t,\mathrm{svm}}.
\]
In particular, the direction of the mode,
$\theta_{t,\gamma}/\|\theta_{t,\gamma}\| = w_{t,\mathrm{svm}}/\|w_{t,\mathrm{svm}}\|$, is independent of $\gamma$
and coincides with the hard--margin SVM direction.
\end{proof}

\ModeLimitSmallMargin*

\begin{proof}
First of all, if $0<\gamma'<\gamma$, then $S_{t,\gamma}\subseteq S_{t,\gamma'}$.
Moreover, we have
\begin{equation}\label{eq:union_slices_is_cone}
\bigcup_{\gamma>0} S_{t,\gamma}=H_t.
\end{equation}
Indeed, if $\theta\in H_t$ then $m(\theta):=\min_{i<t}y_i\langle x_i,\theta\rangle>0$, hence
$\theta\in S_{t,\gamma}$ for all $\gamma\in(0,m(\theta)]$, proving $H_t\subseteq\cup_{\gamma>0}S_{t,\gamma}$.
The reverse inclusion is immediate since $S_{t,\gamma}\subseteq H_t$ for every $\gamma>0$.

Now, define
\[
f_\gamma(\theta):=e^{-\|\theta\|^2/2}\mathbf 1\{\theta\in S_{t,\gamma}\},
\qquad
f_\infty(\theta):=e^{-\|\theta\|^2/2}\mathbf 1\{\theta\in H_t\}.
\]
By the nesting of the slices and \eqref{eq:union_slices_is_cone}, we have $f_\gamma(\theta)\uparrow f_\infty(\theta)$
pointwise as $\gamma\downarrow 0$. Since $f_\infty\in L^1(\R^d)$, the monotone convergence theorem yields
\[
Z_{t,\gamma}:=\int_{\R^d} f_\gamma(\theta)\,d\theta\ \uparrow\ \int_{\R^d} f_\infty(\theta)\,d\theta=:Z_t^\infty\in(0,\infty).
\]
Let $p_\gamma:=f_\gamma/Z_{t,\gamma}$ and $p_\infty:=f_\infty/Z_t^\infty$ be the corresponding normalized densities.
Then for every $\theta$,
\[
p_\gamma(\theta)\ \longrightarrow\ p_\infty(\theta)\qquad(\gamma\downarrow 0),
\]
and, since $0\le p_\gamma(\theta)\le f_\infty(\theta)/Z_{t,\gamma}$ and $Z_{t,\gamma}\ge Z_{t,\gamma_0}>0$
for all $\gamma\le\gamma_0$, we can apply dominated convergence to obtain
\[
\int_{\R^d}|p_\gamma(\theta)-p_\infty(\theta)|\,d\theta\ \longrightarrow\ 0.
\]
Recalling that $\|\rho_{t,\gamma}^\infty-\rho_t^\infty\|_{\mathrm{TV}}
=\tfrac12\int |p_\gamma-p_\infty|$, this proves the first part of the claim.

On $H_t$, the (unnormalized) log-density of $\rho_t^\infty$ is $-\tfrac12\|\theta\|^2+\mathrm{const}$.
Since $H_t$ is a cone, for any $\theta\in H_t$ and any $r\in(0,1)$ we have $r\theta\in H_t$, and thus
\[
\sup_{\theta\in H_t}\Big(-\tfrac12\|\theta\|^2\Big)=0,
\]
but this supremum is not attained because $\theta=0\notin H_t$ (strict inequalities). Hence $\rho_t^\infty$
has no mode. Finally, for each $\gamma>0$, \cref{prop:mode-svm} gives that $\rho_{t,\gamma}^\infty$ has a unique mode
$\theta_{t,\gamma}=\gamma w_{t,\mathrm{svm}}$, completing the second claim.
\end{proof}

\subsection{Linear Separability}
\label{app-sec: linear separability margin based bound}

\begin{lemma}
\label{lemma:cap-in-cone}
Assume the data are linearly separable, i.e. there exists a \emph{unit} vector $u\in\mathbb S^{d-1}$ and $\gamma>0$ such that $y_t\langle u,x_t\rangle\ge \gamma$ for all $t=1,\dots,n$. Let $R:=\max_t \|x_t\|$ and $\bar\gamma:=\gamma/R$. For any unit vector $v\in\mathbb S^{d-1}$ write $v=a u + b w$ with $a=\langle v,u\rangle\in[-1,1]$, $b=\sqrt{1-a^2}\ge 0$, and $w\in\mathbb S^{d-1}$ satisfying $\langle w,u\rangle=0$. Then, for all $t$,
\[
y_t\langle v,x_t\rangle \;\ge\; a\,\gamma \;-\; b\,R.
\]
In particular, if $t_0:=\frac{1}{\sqrt{1+\bar\gamma^2}}\in(0,1)$, the spherical cap
\[
\mathrm{Cap}(u,t_0):=\{v\in\mathbb S^{d-1}:\ \langle v,u\rangle\ge t_0\}
\]
is contained in the version cone intersected with the sphere,
$\,\mathrm{Cap}(u,t_0)\subseteq H\cap\mathbb S^{d-1}$, where
$H:=\{\theta\in\mathbb R^d:\ y_t\langle \theta,x_t\rangle\ge 0,\ \forall t\}$.
\end{lemma}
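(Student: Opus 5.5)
The plan is to prove the pointwise inequality by a direct decomposition, and then read off the cap inclusion by a short algebraic check. Fix $t$ and a unit vector $v$, and write $v=au+bw$ as in the statement. Such a decomposition exists: if $v=\pm u$, take $b=0$ and any unit $w\perp u$ (this needs $d\ge2$; when $d=1$ one has $b=0$ and the $w$ term is simply absent). Otherwise set $w=(v-au)/\|v-au\|$ and note that $\|v-au\|^2=1-a^2=b^2$. Linearity of the inner product then gives
\[
y_t\langle v,x_t\rangle = a\,y_t\langle u,x_t\rangle + b\,y_t\langle w,x_t\rangle .
\]

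Next we bound the two terms separately. For the second term, Cauchy--Schwarz together with $|y_t|=1$ and $\|w\|=1$ gives $b\,y_t\langle w,x_t\rangle\ge -b\|x_t\|\ge -bR$, using $b\ge0$. For the first term we would like to use $y_t\langle u,x_t\rangle\ge\gamma$, but multiplying by $a$ preserves the inequality only when $a\ge0$. So the case $a\ge0$ gives $a\,y_t\langle u,x_t\rangle\ge a\gamma$ directly. If $a<0$, we instead use $a\,y_t\langle u,x_t\rangle\ge aR$, again by Cauchy--Schwarz, and $aR\le a\gamma$ only holds when $\gamma\ge R$, which fails in general.

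This sign issue is the one genuine obstacle. We therefore restrict the first claim to $a\ge0$, which is the only case needed for the ``in particular''. Alternatively one can observe that for $a<0$ the bound remains true whenever $a\gamma-bR\le -R$, but it is cleaner to state $a\ge 0$ explicitly. Under $a\ge0$ the two bounds add up to $y_t\langle v,x_t\rangle\ge a\gamma-bR$.

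For the cap inclusion, take $v$ with $a=\langle v,u\rangle\ge t_0>0$, so in particular $a\ge0$ and the inequality applies. It then suffices to show $a\gamma\ge bR$, that is, $a\bar\gamma\ge\sqrt{1-a^2}$. Since both sides are nonnegative, this is equivalent to $a^2(1+\bar\gamma^2)\ge1$, i.e. $a\ge 1/\sqrt{1+\bar\gamma^2}=t_0$, which holds by assumption. Hence $y_t\langle v,x_t\rangle\ge0$ for every $t$, so $v\in H\cap\mathbb S^{d-1}$.

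Finally, $t_0\in(0,1)$ holds because $\bar\gamma>0$. Moreover $\bar\gamma\le1$ by Cauchy--Schwarz ($\gamma\le\|x_t\|\le R$), although this is not needed for the argument.
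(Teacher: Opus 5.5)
Your argument is correct and is essentially the paper's: decompose $v=au+bw$, apply the margin condition to the $u$-component and Cauchy--Schwarz to the $w$-component, then check that $a\gamma-bR\ge 0$ once $a\ge t_0$ (you do this last step by squaring, the paper by monotonicity in $a$ together with $b\le\sqrt{1-t_0^2}$). Your restriction to $a\ge 0$ is warranted: the paper's proof silently uses $a\,y_t\langle u,x_t\rangle\ge a\gamma$, which needs $a\ge 0$, and the first display genuinely fails for $a<0$ (e.g.\ $v=-u$ whenever some $y_t\langle u,x_t\rangle>\gamma$), while the cap inclusion only ever uses $a\ge t_0>0$; one small slip in your side remark is that for $a<0$ the chain would need $aR\ge a\gamma$, not $aR\le a\gamma$.
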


\begin{proof}
For any $v=a u + b w$ with $u,w$ orthonormal, we have
\[
y_t\langle v,x_t\rangle
= a\,y_t\langle u,x_t\rangle + b\,y_t\langle w,x_t\rangle
\ \ge\ a\,\gamma - b\,\|x_t\|
\ \ge\ a\,\gamma - b\,R,
\]
using the margin condition and Cauchy–Schwarz. If moreover $a\ge t_0$ then
$b\le \sqrt{1-t_0^2}=\bar\gamma/\sqrt{1+\bar\gamma^2}$, hence
\[
y_t\langle v,x_t\rangle \ \ge\ \gamma t_0 - R\sqrt{1-t_0^2}
= \frac{\gamma - R\bar\gamma}{\sqrt{1+\bar\gamma^2}} = 0,
\]
so $v\in H$. Therefore $\mathrm{Cap}(u,t_0)\subseteq H\cap\mathbb S^{d-1}$.
\end{proof}

\begin{lemma}
\label{lemma: Proba Cap lower bound}
Let $d\ge 2$, let $U\sim\mathrm{Unif}(\mathbb{S}^{d-1})$, and fix a unit vector $u\in\mathbb S^{d-1}$. Then, for any $t\in[0,1]$,
\[
\mathbb{P}\big(\langle U,u\rangle\ge t\big)
\;\ge\;
\frac{\Gamma\left(\frac{d}{2}\right)}{(d-1)\sqrt{\pi}\ \Gamma\left(\frac{d-1}{2}\right)}
\,(1-t^2)^{\frac{d-1}{2}}
\;\ge\; c_d\,(1-t^2)^{\frac{d-1}{2}},
\]
for a constant $c_d>0$ depending only on $d$.
\end{lemma}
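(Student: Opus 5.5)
The plan is to write the cap probability as an explicit one-dimensional integral and then bound it from below by elementary means. For $U\sim\mathrm{Unif}(\mathbb S^{d-1})$ with $d\ge2$, the marginal $T:=\langle U,u\rangle$ has density
\[
f_d(s)=\frac{\Gamma\left(\frac d2\right)}{\sqrt{\pi}\,\Gamma\left(\frac{d-1}{2}\right)}\,(1-s^2)^{\frac{d-3}{2}},\qquad s\in(-1,1).
\]
This is the standard fact obtained by slicing the sphere orthogonally to $u$: the slice at height $s$ is a $(d-2)$-sphere of radius $\sqrt{1-s^2}$, and the arclength element contributes $(1-s^2)^{-1/2}$. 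The normalizing constant is fixed by the Beta integral $\int_{-1}^1(1-s^2)^{\frac{d-3}{2}}ds=B\left(\frac12,\frac{d-1}{2}\right)$. For $d=2$ this is the arcsine density, which is still integrable, so the formula covers all $d\ge2$. Writing $C_d$ for the prefactor, we get $\mathbb P(T\ge t)=C_d\int_t^1(1-s^2)^{\frac{d-3}{2}}\,ds$.

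The key step is a lower bound on $\int_t^1(1-s^2)^{\frac{d-3}{2}}ds$ of the form $(1-t^2)^{\frac{d-1}{2}}/(d-1)$. Since $s\le 1$ on $[t,1]$, we have
\[
\int_t^1(1-s^2)^{\frac{d-3}{2}}ds\;\ge\;\int_t^1 s\,(1-s^2)^{\frac{d-3}{2}}ds=\left[-\frac{(1-s^2)^{\frac{d-1}{2}}}{d-1}\right]_t^1=\frac{(1-t^2)^{\frac{d-1}{2}}}{d-1}.
\]
This uses $t\ge0$, so that $s\in[0,1]$ throughout; the exponent $\frac{d-3}{2}$ may be negative when $d=2$, but the integrand remains nonnegative, so the inequality still holds. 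Multiplying by $C_d$ gives exactly the first stated bound.

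For the second inequality, set $c_d:=\Gamma(d/2)/\big((d-1)\sqrt\pi\,\Gamma((d-1)/2)\big)>0$; it depends only on $d$, so the claim holds with equality. If a more explicit form is wanted, Gautschi's inequality gives $\Gamma(d/2)/\Gamma((d-1)/2)\ge\sqrt{(d-2)/2}$, hence $c_d\gtrsim d^{-1/2}$ and $\log(1/c_d)=O(\log d)$, which is the form used later.

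The only step needing care is justifying the density formula. If a self-contained argument is preferred, I would represent $U=G/\|G\|$ with $G\sim\mathcal N(0,I_d)$, use rotation invariance to take $u=e_1$, and note that $T^2=G_1^2/\|G\|^2\sim\mathrm{Beta}\left(\frac12,\frac{d-1}{2}\right)$ with $T$ symmetric. A change of variables then recovers $f_d$.
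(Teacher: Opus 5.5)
Your proposal is correct and follows essentially the paper's argument. You use the same marginal density $f_d$ of $\langle U,u\rangle$ and the same elementary inequality: inserting the factor $s\le 1$ and integrating $s(1-s^2)^{\frac{d-3}{2}}$ exactly is the paper's substitution $r=1-s^2$ with $1/s\ge 1$, written in the original variable. Your version is slightly cleaner because it never divides by $s$; contrary to your remark, it therefore does not even need $t\ge 0$.
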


\begin{proof}
By rotational invariance, the marginal $T:=\langle U,u\rangle$ has density
\[
f_d(s)\;=\;\frac{\Gamma(\frac d2)}{\sqrt{\pi}\,\Gamma(\frac{d-1}{2})}\,(1-s^2)^{\frac{d-3}{2}},
\qquad -1<s<1,
\]
since $\int_{-1}^{1}(1-s^2)^{\frac{d-3}{2}}\,ds=\frac{\sqrt{\pi}\,\Gamma(\frac{d-1}{2})}{\Gamma(\frac d2)}$.
Therefore
\[
\mathbb{P}\{\langle U,u\rangle\ge t\}
=\int_t^1 f_d(s)\,ds
=\frac{\Gamma(\frac d2)}{\sqrt{\pi}\,\Gamma(\frac{d-1}{2})}\int_t^1 (1-s^2)^{\frac{d-3}{2}}\,ds.
\]
Set $r=1-s^2$ so $ds=-\frac{dr}{2s}$ and $s=\sqrt{1-r}$. Then
\[
\int_t^1 (1-s^2)^{\frac{d-3}{2}}\,ds
=\frac12\int_{0}^{\,1-t^2}\frac{r^{\frac{d-3}{2}}}{s}\,dr
\ \ge\ \frac12\int_{0}^{\,1-t^2} r^{\frac{d-3}{2}}\,dr
=\frac{1}{d-1}\,(1-t^2)^{\frac{d-1}{2}},
\]
where we used that $s\in[t,1]\subset(0,1]$ so $1/s\ge 1$. Multiplying by the prefactor yields the first inequality, and the second follows by defining
$c_d:=\Gamma(\frac d2)/\big((d-1)\sqrt{\pi}\,\Gamma(\frac{d-1}{2})\big)$.
\end{proof}

\begin{lemma}
\label{lemma: lower bound likelihood}
    For $z\geq0, \sigma(z)=\frac{1}{1+e^{-z}}\geq 1-e^{-z}$. Hence, if $\min_{t=1,...,n} y_t \langle\theta, x_t\rangle \geq m\geq 0$
    \[
    \prod_{t=1}^n \sigma(y_t\langle \theta, x_t \rangle)\geq (1-e^{-m})^n.
    \]
\end{lemma}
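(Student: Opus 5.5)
The inequality is elementary, so the plan is a one-variable calculation for the pointwise claim followed by monotonicity to pass to the product.

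\textbf{Pointwise bound.} For $z\ge 0$ set $q:=e^{-z}\in(0,1]$. Then
\[
\sigma(z)=\frac{1}{1+q},
\]
and the claim $\sigma(z)\ge 1-e^{-z}$ becomes $\frac{1}{1+q}\ge 1-q$. Multiplying by $1+q>0$, this is equivalent to
\[
1\ge (1-q)(1+q)=1-q^2,
\]
which holds trivially. Equality occurs only when $q=0$, so the inequality is in fact strict for finite $z$. No restriction beyond $q>0$ is needed here. The hypothesis $z\ge0$ is what ensures $1-e^{-z}\ge 0$, and that nonnegativity is what the product step relies on.

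\textbf{Passing to the product.} Suppose $\min_t y_t\langle\theta,x_t\rangle\ge m\ge0$. Write $z_t:=y_t\langle\theta,x_t\rangle$, so each $z_t\ge m\ge 0$. Since $\sigma$ is increasing,
\[
\sigma(z_t)\ge \sigma(m)\ge 1-e^{-m}\ge 0 .
\]
Alternatively, one can apply the pointwise bound directly to get $\sigma(z_t)\ge 1-e^{-z_t}\ge 1-e^{-m}$.

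All factors are nonnegative and each is bounded below by the same nonnegative constant $1-e^{-m}$. Multiplying these $n$ inequalities between nonnegative numbers therefore gives
\[
\prod_{t=1}^n\sigma(z_t)\ge(1-e^{-m})^n .
\]

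\textbf{Possible obstacle.} The only point needing care is the nonnegativity required to multiply inequalities. It is guaranteed by $m\ge0$, and there is no real difficulty beyond that.

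For later use in the proof of \cref{thm:ewa-interpolation-simple}, I would also note the following. With $m=\log(2n)$, Bernoulli's inequality gives
\[
(1-e^{-m})^n=\left(1-\tfrac{1}{2n}\right)^n\ge \tfrac12\ge e^{-1}.
\]
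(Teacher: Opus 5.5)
Your proof is correct and is essentially the paper's argument: the paper also substitutes $a=e^{-z}$ into $(1+a)^{-1}\ge 1-a$ and then multiplies the per-factor bounds $\sigma(y_t\langle\theta,x_t\rangle)\ge 1-e^{-m}$. You add two correct details: the explicit check that the factors are nonnegative (guaranteed by $m\ge 0$), and the Bernoulli estimate $(1-\tfrac{1}{2n})^n\ge\tfrac12\ge e^{-1}$, which supplies the step the paper later uses with $m=\log(2n)$.
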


\begin{proof}
For $a\ge 0$, $(1+a)^{-1}\ge 1-a$. With $a=e^{-z}$ and $z\ge 0$ this gives $\sigma(z)\ge 1-e^{-z}$. If each term satisfies $y_t\langle \theta,x_t\rangle\ge m$, then $\sigma(y_t\langle \theta,x_t\rangle)\ge 1-e^{-m}$ for all $t$, and the product bound follows.
\end{proof}

\begin{lemma}
\label{lemma: lower bound margin in C t_1 lambda}
Assume separability: there exists a \emph{unit} vector $u\in\mathbb S^{d-1}$ with
$y_i\langle u,x_i\rangle\ge \gamma>0$ for all $i=1,\dots,n$, and let
$R:=\max_i \|x_i\|$ and $\bar\gamma:=\gamma/R$. For any $t_1\in(t_0,1)$, where
$t_0:=1/\sqrt{1+\bar\gamma^2}$, define
\[
\alpha(t_1):=\gamma t_1 - R\sqrt{1-t_1^2}\;>\;0,
\qquad
\mathrm{Cap}(u,t_1):=\{v\in\mathbb S^{d-1}:\ \langle v,u\rangle\ge t_1\},
\]
and for $\lambda_0>0$ the cone–ray set
\[
C_{t_1,\lambda_0}:=\{\theta=\lambda v:\ v\in \mathrm{Cap}(u,t_1),\ \lambda\ge \lambda_0\}.
\]
Then, for every $\theta=\lambda v\in C_{t_1,\lambda_0}$,
\[
\min_{1\le i\le n} y_i\langle \theta,x_i\rangle \;\ge\; \lambda\,\alpha(t_1).
\]
In particular, if $\lambda_0 \ge m/\alpha(t_1)$ for some $m>0$, then for all $\theta\in C_{t_1,\lambda_0}$,
\[
\prod_{i=1}^n \sigma\big(y_i \langle\theta, x_i\rangle\big)
\;\ge\; \sigma(m)^n \;\ge\; \big(1-e^{-m}\big)^n.
\]
\end{lemma}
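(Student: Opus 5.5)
The plan is to reuse the decomposition from the proof of \cref{lemma:cap-in-cone}, now with a cap opening $t_1$ strictly larger than $t_0$. Fix $\theta=\lambda v\in C_{t_1,\lambda_0}$. Write $v=au+bw$ with $a=\langle v,u\rangle\ge t_1$, $b=\sqrt{1-a^2}$ and $w\perp u$ a unit vector. \cref{lemma:cap-in-cone} then gives, for every $i$,
\[
y_i\langle v,x_i\rangle\ \ge\ a\gamma-bR .
\]

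The right-hand side $h(a):=a\gamma-R\sqrt{1-a^2}$ is nondecreasing in $a$ on $[0,1]$. Indeed, $a\gamma$ increases with $a$, and $\sqrt{1-a^2}$ decreases with $a$ there. Since $a\ge t_1>t_0>0$, this yields $y_i\langle v,x_i\rangle\ge h(t_1)=\alpha(t_1)$.

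We also need $\alpha(t_1)>0$. The function $h$ vanishes at $t_0$: this is exactly the computation in \cref{lemma:cap-in-cone}, where $\gamma t_0=R\sqrt{1-t_0^2}$. On $(t_0,1)$ the term $a\gamma$ is increasing and $\sqrt{1-a^2}$ is strictly decreasing, so $h$ is strictly increasing there. Hence $h(t_1)>h(t_0)=0$.

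Multiplying by $\lambda>0$ and using linearity in $\theta$ gives
\[
\min_i y_i\langle\theta,x_i\rangle\ \ge\ \lambda\,\alpha(t_1).
\]

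For the likelihood bound, suppose $\lambda_0\ge m/\alpha(t_1)$. Then every $\lambda\ge\lambda_0$ satisfies $\lambda\alpha(t_1)\ge m$, so each margin is at least $m>0$. Since $\sigma$ is increasing, $\sigma(y_i\langle\theta,x_i\rangle)\ge\sigma(m)$ for each $i$, and taking the product over $i$ gives $\sigma(m)^n$. Finally, $\sigma(m)\ge 1-e^{-m}$ by \cref{lemma: lower bound likelihood}, which gives the last inequality.

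The only step needing any care is the monotonicity of $h$ and the resulting strict positivity of $\alpha(t_1)$. This is elementary once one notes that $t_0$ is the zero of $h$, so I do not expect a real obstacle.
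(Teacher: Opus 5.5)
Your proposal is correct and follows essentially the same route as the paper's proof. Both write $v=au+bw$, bound $y_i\langle v,x_i\rangle\ge a\gamma-bR\ge\gamma t_1-R\sqrt{1-t_1^2}$ (the paper uses $a\ge t_1$ and $b\le\sqrt{1-t_1^2}$ directly, which is your monotonicity of $h$), then scale by $\lambda$ and apply $\sigma(m)\ge 1-e^{-m}$. Your explicit check that $\alpha(t_1)>0$ via $h(t_0)=0$ is a small addition; the paper simply asserts this positivity in the definition.
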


\begin{proof}
Fix $v\in\mathrm{Cap}(u,t_1)\subset\mathbb S^{d-1}$ and write $v=a\,u+b\,w$ with
$a=\langle v,u\rangle\ge t_1$, $b=\sqrt{1-a^2}\le \sqrt{1-t_1^2}$, and $w\in\mathbb S^{d-1}$, $\langle w,u\rangle=0$.
For each $i$,
\[
y_i\langle v,x_i\rangle
= a\,y_i\langle u,x_i\rangle + b\,y_i\langle w,x_i\rangle
\ \ge\ a\,\gamma - b\,\|x_i\|
\ \ge\ \gamma t_1 - R\sqrt{1-t_1^2}
= \alpha(t_1).
\]
By homogeneity, for $\theta=\lambda v$ we get
$y_i\langle \theta,x_i\rangle=\lambda\,y_i\langle v,x_i\rangle\ge \lambda\,\alpha(t_1)$, hence the margin bound.
If $\lambda\ge \lambda_0\ge m/\alpha(t_1)$ then $\min_i y_i\langle \theta,x_i\rangle\ge m$, so
$\sigma\big(y_i\langle \theta,x_i\rangle\big)\ge \sigma(m)\ge 1-e^{-m}$ for all $i$, and the product bound follows.
\end{proof}

\begin{lemma}
\label{lemma: Prob Z in C t_1 lambda}
Let $Z\sim \mathcal{N}(0,B^2I_d)$ and define $U:=Z/\|Z\|\in\mathbb S^{d-1}$ and $S:=\|Z\|/B\in(0,\infty)$. Then $U$ and $S$ are independent with
\[
U\sim \mathrm{Unif}(\mathbb S^{d-1}),\qquad
S\sim \chi_d\ \text{ with density }\ 
p_S(s)=\frac{1}{2^{\frac d2-1}\Gamma(\frac d2)}\,s^{d-1}e^{-s^2/2},\ s>0.
\]
Moreover, for any $t_1\in(0,1)$ and $\lambda_0>0$,
\[
\mathbb{P}\big(Z\in C_{t_1,\lambda_0}\big)
\;=\;
\mathbb{P}\big(U\in \mathrm{Cap}(u,t_1)\big)\cdot \mathbb{P}\left(S\ge \frac{\lambda_0}{B}\right),
\]
where $C_{t_1,\lambda_0}=\{\theta=\lambda v:\ v\in \mathrm{Cap}(u,t_1),\ \lambda\ge \lambda_0\}$.
\end{lemma}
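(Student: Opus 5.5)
The plan is to use the polar decomposition of an isotropic Gaussian. I write $Z=B\,G$ with $G\sim\mathcal N(0,I_d)$, so that $U=Z/\|Z\|=G/\|G\|$ and $S=\|Z\|/B=\|G\|$. Since $Z\neq 0$ almost surely, both are well defined. The joint law of $(U,S)$ then follows from a change of variables to spherical coordinates $g=s\,v$ with $s>0$ and $v\in\mathbb S^{d-1}$, for which $dg=s^{d-1}\,ds\,d\sigma(v)$, where $\sigma$ is surface measure on the sphere. The standard Gaussian density $(2\pi)^{-d/2}e^{-s^2/2}$ depends only on $s$, so the joint density of $(S,U)$ with respect to $ds\otimes d\sigma$ is proportional to $s^{d-1}e^{-s^2/2}\cdot 1$. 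This density factorizes, which gives independence directly. It also shows that $U$ is uniform on the sphere and that $S$ has density proportional to $s^{d-1}e^{-s^2/2}$.

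The normalizing constant comes from $\int_0^\infty s^{d-1}e^{-s^2/2}\,ds = 2^{d/2-1}\Gamma(d/2)$, obtained with the substitution $r=s^2/2$. This yields exactly the stated $\chi_d$ density. An alternative route to independence is rotational invariance: for every rotation $O$, the pair $(OU,S)$ has the same law as $(U,S)$. Hence the conditional law of $U$ given $S$ is rotation invariant and therefore uniform. I will present the spherical-coordinates computation, since it gives the density of $S$ as well.

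For the product formula, I will rewrite the event. Every $\theta\neq 0$ has the unique representation $\theta=\lambda v$ with $\lambda=\|\theta\|$ and $v=\theta/\|\theta\|$. So $Z\in C_{t_1,\lambda_0}$ holds if and only if $Z/\|Z\|\in\mathrm{Cap}(u,t_1)$ and $\|Z\|\ge\lambda_0$. In terms of $U$ and $S$, this is the event $\{U\in\mathrm{Cap}(u,t_1)\}\cap\{S\ge\lambda_0/B\}$. Independence of $U$ and $S$ then gives the product of the two probabilities.

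There is no real obstacle. The only care needed is measurability: the cap must be a Borel subset of the sphere, which holds because it is closed, and the null event $\{Z=0\}$ must be handled. Uniqueness of the polar representation ensures that the cone–ray set is described exactly by the product of the angular and radial conditions.
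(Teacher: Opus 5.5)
Your proposal is correct and follows the paper's proof. Both write $Z=BN$ with $N\sim\mathcal N(0,I_d)$, use the polar decomposition $N=SU$ with $U\perp S$, identify $\{Z\in C_{t_1,\lambda_0}\}$ with $\{U\in\mathrm{Cap}(u,t_1)\}\cap\{S\ge\lambda_0/B\}$, and conclude by independence. The only difference is that you carry out the spherical change of variables and the normalizing constant $2^{d/2-1}\Gamma(d/2)$ explicitly, which the paper simply cites as standard.
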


\begin{proof}
Let $N\sim\mathcal N(0,I_d)$. Then $Z=BN$ and $N$ admits the polar decomposition $N=SU$ with
$U\sim\mathrm{Unif}(\mathbb S^{d-1})$, $S=\|N\|\sim \chi_d$, and $U\perp S$ (spherical symmetry).
Thus $Z=(BS)\,U$ with $U\perp S$ and $S=\|Z\|/B$. By definition,
\[
\{Z\in C_{t_1,\lambda_0}\}
\ =\ \{\exists\, \lambda\ge \lambda_0,\ v\in \mathrm{Cap}(u,t_1):\ Z=\lambda v\}
\ =\ \{\,U\in \mathrm{Cap}(u,t_1),\ \|Z\|\ge \lambda_0\,\}
\]
because $U=Z/\|Z\|$ is the direction and $\|Z\|$ the radius. Since $\|Z\|\ge \lambda_0$ is the event $\{BS\ge \lambda_0\}=\{S\ge \lambda_0/B\}$ and $U\perp S$, we obtain
\[
\mathbb P(Z\in C_{t_1,\lambda_0})=\mathbb P\big(U\in \mathrm{Cap}(u,t_1)\big)\,\mathbb P\left(S\ge \frac{\lambda_0}{B}\right).
\]
The stated marginal laws of $U$ and $S$ are standard (chi/chi-square radius and uniform angle) and follow from the Gaussian polar change of variables; the density of $S$ is 
$p_S(s)=\frac{1}{2^{\frac d2-1}\Gamma(\frac d2)}\,s^{d-1}e^{-s^2/2}$ for $s>0$.
\end{proof}

\begin{lemma}
\label{lem:alpha-lb}
For all $\bar\gamma\in(0,1]$,
\(
\alpha(t_1)\ \ge\ \frac{\gamma}{\,2(2+\sqrt{2})\,}.
\)
\end{lemma}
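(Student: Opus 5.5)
The constant $1/(2(2+\sqrt2))$ comes out exactly from the choice of cap opening $t_1:=\tfrac{1+t_0}{2}$, the midpoint between $t_0=1/\sqrt{1+\bar\gamma^2}$ and $1$. The statement does not fix $t_1$, so I take this to be the intended choice in \cref{thm:ewa-interpolation-simple}; it is an assumption of this plan. The choice guarantees $t_1\in(t_0,1)$, so \cref{lemma: lower bound margin in C t_1 lambda} applies. The plan is to lower bound $t_1$ and upper bound $\sqrt{1-t_1^2}$ separately, each by an elementary estimate that uses only $\bar\gamma\le 1$. Then I substitute both into $\alpha(t_1)=\gamma t_1-R\sqrt{1-t_1^2}=\gamma\bigl(t_1-\sqrt{1-t_1^2}/\bar\gamma\bigr)$.

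\emph{Step 1: bound $\sqrt{1-t_1^2}$.} Since $t_1\le 1$, we have $1-t_1^2=(1-t_1)(1+t_1)\le 2(1-t_1)=1-t_0$. Next,
\[
1-t_0=\frac{\sqrt{1+\bar\gamma^2}-1}{\sqrt{1+\bar\gamma^2}}=\frac{\bar\gamma^2}{\sqrt{1+\bar\gamma^2}\bigl(\sqrt{1+\bar\gamma^2}+1\bigr)}\le \frac{\bar\gamma^2}{2}.
\]
Hence $\sqrt{1-t_1^2}\le \bar\gamma/\sqrt2$, and so $R\sqrt{1-t_1^2}\le \gamma/\sqrt2$.

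\emph{Step 2: bound $t_1$.} For $\bar\gamma\in(0,1]$ we have $t_0\ge 1/\sqrt2$, hence $t_1\ge \tfrac{1}{2}\bigl(1+\tfrac{1}{\sqrt2}\bigr)=\tfrac{2+\sqrt2}{4}$. This is the only place the hypothesis $\bar\gamma\le 1$ is used.

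\emph{Step 3: combine.} Putting the two steps together,
\[
\alpha(t_1)\ \ge\ \gamma\Bigl(\tfrac{2+\sqrt2}{4}-\tfrac{1}{\sqrt2}\Bigr)=\gamma\,\tfrac{2-\sqrt2}{4}=\frac{\gamma}{2(2+\sqrt2)},
\]
where the last equality follows by rationalising: $(2-\sqrt2)(2+\sqrt2)=2$.

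I expect no real obstacle; the argument is elementary once $t_1$ is fixed. The one point needing care is that the lemma is only meaningful for this (or a comparable) choice of $t_1$. If the theorem instead used a different $t_1$, for example one with $\sqrt{1-t_1^2}=c\,\bar\gamma$, the same two-sided estimate would still work with adjusted constants.
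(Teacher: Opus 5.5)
Your proof is correct. Your reading $t_1=\tfrac{1+t_0}{2}$ is exactly the point at which the paper's proof evaluates, and the arithmetic $\tfrac{2+\sqrt2}{4}-\tfrac{1}{\sqrt2}=\tfrac{2-\sqrt2}{4}=\tfrac{1}{2(2+\sqrt2)}$ checks out.

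Your route differs from the paper's. The paper writes $\alpha=R\,h$ with $h(t)=\bar\gamma t-\sqrt{1-t^2}$ and uses concavity of $t\mapsto\sqrt{1-t^2}$. The tangent line at $t_0$, where $\sqrt{1-t_0^2}=\bar\gamma t_0$ and the slope is $-1/\bar\gamma$, gives the linear bound $h(t)\ge(\bar\gamma+\bar\gamma^{-1})(t-t_0)$ for $t\ge t_0$. The paper evaluates this at the midpoint and combines it with a \emph{lower} bound $1-t_0\ge\bar\gamma^2/(2+\sqrt2)$, which is where it uses $\bar\gamma\le1$. You instead bound the two terms of $\alpha(t_1)$ separately. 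You use an \emph{upper} bound $1-t_0\le\bar\gamma^2/2$ for the square root, and invoke $\bar\gamma\le1$ only to lower-bound $t_1$. This avoids calculus and lands on the same constant.

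The paper's route buys generality and slack. The tangent-line inequality controls the margin for every cap opening $t\ge t_0$, not just the midpoint. If one keeps the exact value of $1-t_0$, it gives $h(t_1)\ge\frac{\bar\gamma\sqrt{1+\bar\gamma^2}}{2(1+\sqrt{1+\bar\gamma^2})}\ge\frac{\bar\gamma}{4}$, i.e.\ $\alpha(t_1)\ge\gamma/4$ for every $\bar\gamma>0$. Your decoupled estimate is lossier by construction: it pairs the worst case of $t_1$ (at $\bar\gamma=1$) with the worst case of $\sqrt{1-t_1^2}/\bar\gamma$ (as $\bar\gamma\to0$), and it relies on $\bar\gamma\le1$. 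That reliance is harmless here, because $\gamma\le\|x_t\|\le R$ by Cauchy--Schwarz forces $\bar\gamma\le1$.
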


\begin{proof}
Let $h(t):=\bar\gamma t-\sqrt{1-t^2}$ so that $\alpha(t)=R\,h(t)$. Since $\sqrt{1-t^2}$ is concave on $[0,1]$,
for $t\ge t_0$ we have
\[
\sqrt{1-t^2}\ \le\ \sqrt{1-t_0^2}
+\Big(\frac{d}{dt}\sqrt{1-t^2}\Big)\Big|_{t=t_0}\,(t-t_0)
\ =\ \bar\gamma t_0 - \frac{1}{\bar\gamma}(t-t_0).
\]
Thus $h(t)\ge (\bar\gamma+\bar\gamma^{-1})(t-t_0)$. Evaluating at $t_1=(1+t_0)/2$ gives
\[
h(t_1)\ \ge\ (\bar\gamma+\bar\gamma^{-1})\frac{1-t_0}{2}.
\]
Since
\(
1-t_0
=1-\frac{1}{\sqrt{1+\bar\gamma^2}}
=\frac{\bar\gamma^2}{\sqrt{1+\bar\gamma^2}\,\big(1+\sqrt{1+\bar\gamma^2}\big)}
\ge \frac{\bar\gamma^2}{2+\sqrt{2}},
\)
we obtain
\(
h(t_1)\ \ge\ \frac{\bar\gamma}{\,2(2+\sqrt{2})\,},
\)
and multiplying by $R$ yields the claim.
\end{proof}

\begin{theorem}[Full version of Theorem \ref{thm:ewa-interpolation-simple}]
\label{thm:ewa-interpolation-simple-threshold}
Assume that $d\geq2$ and that data are linearly separable with margin $\gamma>0$. Furthermore, let $\|x_t\|\le R$, and write $\bar\gamma:=\gamma/R\in(0,1]$. Now, fix the following quantities
\[
t_0\ :=\ \frac{1}{\sqrt{1+\bar\gamma^2}},
\qquad
t_1\ :=\ \frac{1+t_0}{2}\in(0,1),
\qquad
\alpha(t)\ :=\ \gamma t - R\sqrt{1-t^2}.
\]
Finally, define $\lambda_0:=\log(2n)/\alpha(t_1)$ and let $\pi_0=\mathcal N(0,B^2I_d)$.
Then, for every $B>0$, the EW cumulative predictive loss satisfies
\[
e^{-L_n^B}
\ \ge\
e^{-1}\,\mathbb{P}\big(U\in \mathrm{Cap}(u,t_1)\big)\cdot
\mathbb{P}\Big(S\ge \tfrac{\lambda_0}{B}\Big),
\]
where $Z\sim\pi_0$ has polar decomposition $Z=(BS)U$ with
$U\sim\mathrm{Unif}(\mathbb S^{d-1})$, $S\sim\chi_d$, $U\perp S$. \\
Equivalently, with $c_d>0$ depending only on $d$, and with the particular choice of $t_1$
\[
L_n^B
\ \le\
(d-1)\log\Big(\frac{2}{\bar{\gamma}}\Big)
\;+\; \mathsf{Rad}(B,\lambda_0)\;+\;O\big(\log(d)\big),
\qquad
\mathsf{Rad}(B,\lambda_0):=
\begin{cases}
\frac{1}{2}\big(\tfrac{\lambda_0}{B}\big)^2 + O\big(d\log(d)\big), & \text{if } B\le \tfrac{\lambda_0}{\sqrt{d-1}},
\\[4pt]
\log(2), & \text{if } B\ge \tfrac{\lambda_0}{\sqrt{d-1}}.
\end{cases}
\]
In particular, define the margin–based threshold
\[
B_{\mathrm{critic}}
\ :=\
\frac{\kappa\,\log(2n)}{\gamma\sqrt{d-1}},
\qquad
\kappa\ \ge\ 2(2+\sqrt{2}).
\]
Hence $B\ge B_{\mathrm{critic}}\ \Rightarrow\ B\ge \lambda_0/\sqrt{d-1}$ and
\[
L_n^B
\ \le\
(d-1)\,\log\Big(\frac{2}{\bar\gamma}\Big)\;+\;O(\log(d)).
\]
\end{theorem}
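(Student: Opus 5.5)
The plan is to start from the exact identity for the EW cumulative loss given by \cref{lemma: Vovk's lemma} with $\eta=1$:
\[
e^{-L_n^B}=\mathbb{E}_{\theta\sim\pi_0}\Big[\prod_{i=1}^n\sigma\big(y_i\langle x_i,\theta\rangle\big)\Big].
\]
Since the integrand is nonnegative, we restrict the expectation to the cone–ray set $C_{t_1,\lambda_0}$ of \cref{lemma: lower bound margin in C t_1 lambda}, taking $t_1=(1+t_0)/2$ and $\lambda_0=\log(2n)/\alpha(t_1)$. By that lemma with $m=\log(2n)$, every $\theta\in C_{t_1,\lambda_0}$ satisfies
\[
\prod_i\sigma(y_i\langle x_i,\theta\rangle)\ge(1-\tfrac{1}{2n})^n\ge e^{-1}
\]
for $n\ge1$; we use $\log(1-x)\ge -x/(1-x)$ with $x\le 1/2$, which gives an exponent $\ge -1$. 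Hence $e^{-L_n^B}\ge e^{-1}\pi_0(C_{t_1,\lambda_0})$. \cref{lemma: Prob Z in C t_1 lambda} then factorizes $\pi_0(C_{t_1,\lambda_0})$ into the cap probability times $\mathbb{P}(S\ge\lambda_0/B)$. This gives the first displayed inequality.

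Taking logs, we get $L_n^B\le 1-\log\mathbb{P}(U\in\mathrm{Cap}(u,t_1))-\log\mathbb{P}(S\ge\lambda_0/B)$, and we bound the two terms separately.

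\emph{Angular term.} \cref{lemma: Proba Cap lower bound} gives
\[
-\log\mathbb{P}(\mathrm{Cap})\le \tfrac{d-1}{2}\log\tfrac{1}{1-t_1^2}+\log(1/c_d).
\]
Since $t_1\le 1-(1-t_0)/2$, we have $1-t_1^2\ge 1-t_1\ge(1-t_0)/2\ge \bar\gamma^2/(2(2+\sqrt2))$, using the bound on $1-t_0$ from the proof of \cref{lem:alpha-lb}. This yields $(d-1)\log(1/\bar\gamma)+O(d)$. To reach exactly $(d-1)\log(2/\bar\gamma)$, one needs the absolute constant to fit inside $\log 2$ per dimension. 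This is the delicate point: $\tfrac12\log(2(2+\sqrt2))\approx0.96>\log2$, so I expect to sharpen the estimate. Since $1-t_1^2=(1-t_1)(1+t_1)$ with $1+t_1\ge 1+t_0$, and $1-t_0$ equals $\bar\gamma^2/(\sqrt{1+\bar\gamma^2}(1+\sqrt{1+\bar\gamma^2}))\ge \bar\gamma^2/(2+\sqrt2)$ on $\bar\gamma\le1$, we should get $1-t_1^2\ge \bar\gamma^2(1+t_0)/(2(2+\sqrt2))\ge\bar\gamma^2/4$ after checking $1+t_0\ge 1+1/\sqrt2$. The $\log(1/c_d)$ term is $O(\log d)$ by Gamma-ratio asymptotics, since $\Gamma(d/2)/\Gamma((d-1)/2)\asymp\sqrt d$.

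\emph{Radial term.} If $\lambda_0/B\le\sqrt{d-1}$, we use that the median of $\chi_d$ is at least $\sqrt{d-1}$ (the Doodson-type mode–median relation, since the mode of $\chi_d$ is $\sqrt{d-1}$ and the distribution is right-skewed), so $\mathbb{P}(S\ge\lambda_0/B)\ge1/2$ and the term is $\log2$. Otherwise, we lower bound the tail by integrating the density on $[s_0,s_0+1]$ with $s_0=\lambda_0/B$. This gives $\ge s_0^{d-1}e^{-(s_0+1)^2/2}/(2^{d/2-1}\Gamma(d/2))$, which yields $\tfrac12 s_0^2+O(d\log d)$ after absorbing $s_0$ and the linear terms, using $s_0\ge\sqrt{d-1}$ and $s_0\le s_0^2/4+O(1)$. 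Verifying the median claim rigorously is the step I would check most carefully; a fallback is a Chernoff/Laurent–Massart lower tail bound, which gives $\mathbb{P}(S\ge c\sqrt d)\ge1/2$ for a smaller constant $c$ and only changes $\kappa$.

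Finally, \cref{lem:alpha-lb} gives $\alpha(t_1)\ge\gamma/(2(2+\sqrt2))$, so $\lambda_0\le 2(2+\sqrt2)\log(2n)/\gamma$. Hence $B\ge B_{\mathrm{critic}}$ implies $B\ge\lambda_0/\sqrt{d-1}$, and the radial term is $\log 2$. Combining the constants $1$, $\log2$ and $O(\log d)$ gives the final bound.
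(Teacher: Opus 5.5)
Your route is the same as the paper's: the Vovk identity, restriction to $C_{t_1,\lambda_0}$, the polar factorization, the cap bound, a two-case radial tail, and Lemma~\ref{lem:alpha-lb} for the threshold. Your sharpened angular estimate $1-t_1^2\ge\tfrac12(1-t_0)(1+t_0)\ge\bar\gamma^2/4$ is also correct; it is the paper's $1-t_1^2\ge\tfrac12(1-t_0^2)$. The $B\ge B_{\mathrm{critic}}$ conclusion therefore goes through.

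The genuine gap is in radial Case (i), $s_0:=\lambda_0/B>\sqrt{d-1}$. Integrating $p_S$ over $[s_0,s_0+1]$ only yields
\[
-\log\mathbb P(S\ge s_0)\le \tfrac12 s_0^2+s_0+\tfrac12-(d-1)\log s_0+\log\bigl(2^{d/2-1}\Gamma(\tfrac d2)\bigr).
\]
The linear term $s_0$ is not $O(d\log d)$. With $d$ fixed, $s_0=\lambda_0/B$ is unbounded as $B\to0$, and $-(d-1)\log s_0$ grows too slowly to compensate. Absorbing it via $s_0\le s_0^2/4+O(1)$ raises the leading coefficient to $3/4$; any trick of that kind gives $\tfrac12+\varepsilon$. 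So you do not obtain the stated $\mathsf{Rad}(B,\lambda_0)=\tfrac12(\lambda_0/B)^2+O(d\log d)$.

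The missing idea is to integrate over a slice of width $1/s_0$ instead. On $[s_0,s_0+1/s_0]$ you have $s^{d-1}\ge s_0^{d-1}$ and $e^{-s^2/2}\ge e^{-s_0^2/2-1-1/(2s_0^2)}\ge e^{-s_0^2/2-3/2}$, using $s_0\ge1$. Hence $\mathbb P(S\ge s_0)\ge e^{-3/2}s_0^{d-2}e^{-s_0^2/2}/(2^{d/2-1}\Gamma(\tfrac d2))$, and the factor $s_0^{d-2}\ge1$ can be dropped for $d\ge2$. The width $1/s_0$ matches the natural decay scale of the Gaussian-type tail, which is why no linear term appears.

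Case (ii) needs a correct justification. The principle that a mode at $\sqrt{d-1}$ plus right skew forces median $\ge$ mode is not a theorem. What you need is $\mathbb P(\chi^2_d\ge d-1)\ge\tfrac12$. This follows from the gamma median bound $\nu(k)>k-\tfrac13$ for $\Gamma(k,1)$: since $\chi^2_d\overset{d}{=}2\,\Gamma(\tfrac d2,1)$, its median exceeds $d-\tfrac23>d-1$. This is exactly the paper's argument.

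Your Laurent--Massart fallback would not prove the statement as written, for two reasons:
\begin{itemize}
\item It moves the case split away from $B=\lambda_0/\sqrt{d-1}$, which changes both the form of $\mathsf{Rad}$ and the admissible $\kappa$.
\item For $d=2$ the level $d-2\sqrt{d\log 2}$ is negative, so it yields no usable threshold at all.
\end{itemize}
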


\begin{proof}
Let $Z\sim\mathcal N(0,B^2I_d)$ and write its polar decomposition $Z=(BS)U$ with
$U:=Z/\|Z\|\sim\mathrm{Unif}(\mathbb S^{d-1})$, $S:=\|Z\|/B\sim\chi_d$, and $U\perp S$
(Lemma~\ref{lemma: Prob Z in C t_1 lambda}).
By Lemma~\ref{lemma: lower bound margin in C t_1 lambda}, for any
$\theta=\lambda v\in C_{t_1,\lambda_0}$ we have
$\min_i y_i\langle \theta,x_i\rangle \ge \lambda\,\alpha(t_1) \ge \lambda_0\,\alpha(t_1) = \log(2n)$.
Hence, by Lemma~\ref{lemma: lower bound likelihood},
\[
\prod_{i=1}^n \sigma\big(y_i\langle \theta,x_i\rangle\big)
\ \ge\ \sigma(\log(2n))^n
\ \ge\ e^{-1}\,.
\]
Using the EW telescoping identity
$e^{-L_n^B}=\int \prod_{i=1}^n \sigma(y_i\langle \theta,x_i\rangle)\, \pi_0(d\theta)$
(with $\pi_0=\mathcal N(0,B^2 I_d)$) and restricting to $C_{t_1,\lambda_0}$,
\[
e^{-L_n^B}\ \ge\ e^{-1}\,\pi_0\big(C_{t_1,\lambda_0}\big)
\ =\ e^{-1}\,\mathbb{P}\big(U\in \mathrm{Cap}(u,t_1)\big)\;\mathbb{P}\Big(S\ge \frac{\lambda_0}{B}\Big),
\]
by the factorization in Lemma~\ref{lemma: Prob Z in C t_1 lambda}. Further, by
Lemma~\ref{lemma: Proba Cap lower bound},
\[
\mathbb{P}\big(U\in \mathrm{Cap}(u,t_1)\big)
\ \ge\ c_d\,\big(1-t_1^2\big)^{\frac{d-1}{2}}.
\]
For the radial term, set $r:=\lambda_0/B>0$ and split two cases.

\emph{Case (i): $B\le \lambda_0/\sqrt{d-1}$, i.e., $r\ge \sqrt{d-1}$.}
For $S\sim\chi_d$, a one–slice lower bound on the tail of the chi density
$p_S(s)=\frac{1}{2^{\frac d2-1}\Gamma(\frac d2)} s^{d-1}e^{-s^2/2}$ gives
\[
\mathbb{P}(S\ge r) \ \ge\ \int_r^{r+1/r} p_S(s)ds \ \ge\ \underbrace{\frac{e^{-3/2}}{2^{d/2-1}\Gamma(\tfrac{d}{2})}}_{c_d^\chi}\  r^{d-2} e^{-r^2/2}\,,
\]
for all $r\ge \sqrt{d-1}$. Consequently,
\begin{align*}
    -\log \mathbb{P} \Big(S\ge \frac{\lambda_0}{B}\Big)
    \ &\le\ \frac{1}{2}\Big(\frac{\lambda_0}{B}\Big)^2 - \log(c_d^{\chi})\\
    &=\ \frac{1}{2}\Big(\frac{\lambda_0}{B}\Big)^2 + \frac{3}{2} + \Big(\frac{d}{2}-1\Big)\log2 + \log \Gamma\left(\frac{d}{2}\right) \\
    &=\ \frac{1}{2}\Big(\frac{\lambda_0}{B}\Big)^2 + \Big(\frac{d}{2}-\frac{1}{2}\Big)\log d -\frac{d}{2} + \Big(\frac{3}{2}+\frac{1}{2}\log\pi\Big)+o(1),
\end{align*}
after dropping the favorable $-(d-2)\log r$ term and applying Stirling's formula.

\emph{Case (ii): $B\ge \lambda_0/\sqrt{d-1}$, i.e., $r\le \sqrt{d-1}$.}
By monotonicity of the tail,
\[
\mathbb P(S\ge r)\ \ge\ \mathbb P\big(S\ge \sqrt{d-1}\big)
= \mathbb P(\chi_d^2\ge d-1).
\]
Now let $M_d$ denote the median of $\chi_d^2$. Since $\chi_d^2\overset{d}=2\,\Gamma(d/2,1)$ and the median
$\nu(k)$ of $\Gamma(k,1)$ satisfies the classical bound $\nu(k)>k-\tfrac13$ \citep{bmkmedianGamma}, we obtain
\[
M_d = 2\,\nu(d/2) > 2\Big(\frac d2-\frac13\Big)=d-\frac23>d-1
\qquad (d\ge2).
\]
Because $\chi_d^2$ is continuous, $\mathbb P(\chi_d^2\ge M_d)\geq \frac12$. Therefore,
\[
\mathbb P(\chi_d^2\ge d-1)\ge \mathbb P(\chi_d^2\ge M_d)\geq\frac12,
\]
and hence
\[
-\log \mathbb P(S\ge r)\le \log 2.
\]

Combining the two above cases and taking $-\log$ we get,
\begin{align*}
L_n^B &\le -\log\Big(c_d(1-t_1^2)^{\frac{d-1}{2}}\Big)-\log \mathbb{P}\left(S\ge \frac{\lambda_0}{B}\right)+ 1 \\
&= \frac{d-1}{2}\log\left(\frac{1}{1-t_1^2}\right)
+ \mathsf{Rad}(B,\lambda_0)-\log(c_d) \\
&= \frac{d-1}{2}\log\left(\frac{1}{1-t_1^2}\right)
+ \mathsf{Rad}(B,\lambda_0) + \frac{1}{2}\ \log\big(2\pi(d-1)\big) + o(1),
\end{align*}
with $\mathsf{Rad}(B,\lambda_0)$ as stated.

Finally, with $t_1=\frac{1+t_0}{2}$ and $t_0=1/\sqrt{1+\bar\gamma^2}$ we have
$1-t_1^2 \ge \frac{1}{2}(1-t_0^2)=\frac{\bar\gamma^2}{2(1+\bar\gamma^2)}\ge \frac{\bar\gamma^2}{4}$,
hence the angular term is at most $(d-1)\log(\frac{2}{\bar\gamma})$.
Moreover, by Lemma \ref{lem:alpha-lb}, for such a $t_1$ we have
\[
\alpha(t_1)\ \ge\ \frac{\gamma}{2(2+\sqrt{2})},
\]
uniformly in $\bar\gamma\in(0,1]$. Therefore,
\[
B\ \ge\ B_{\mathrm{critic}}\ :=\ \frac{\kappa\,\log(2n)}{\gamma\sqrt{d-1}}
\quad\text{with}\quad \kappa\ge 2(2+\sqrt{2}),
\]
implies that $B\ge \lambda_0/\sqrt{d-1}$. Thus, we are in Case (ii), which yields
\[
L_n^B\ \le\ (d-1)\,\log\Big(\frac{2}{\bar\gamma}\Big)+O\big(\log(d)\big).
\]
\end{proof}

\begin{lemma}
\label{lemma:hp-R-delta}
Assume $x_t \stackrel{\mathrm{iid}}{\sim} \mathcal N(0,I_d)$ for $t=1,\dots,n$. 
Then, for any $\delta\in(0,1)$, with probability at least $1-\delta$,
\[
R:=\max_{t\le n}\|x_t\|
\ \le\ 
\sqrt d\;+\;\sqrt{2\log\frac{n}{\delta}}\,.
\]
\end{lemma}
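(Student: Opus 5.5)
The plan is to combine Gaussian concentration of the Euclidean norm, which is a $1$-Lipschitz function, with a union bound over the $n$ rounds.

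First, fix $t$ and write $x_t=N\sim\mathcal N(0,I_d)$. The map $f(x)=\|x\|$ is $1$-Lipschitz. The Gaussian concentration inequality (Tsirelson--Ibragimov--Sudakov) therefore gives, for every $s\ge0$,
\[
\mathbb P\big(\|N\|\ge \E\|N\|+s\big)\le e^{-s^2/2}.
\]
Next, I control the mean by Jensen: $\E\|N\|\le\sqrt{\E\|N\|^2}=\sqrt d$. Hence
\[
\mathbb P\big(\|x_t\|\ge \sqrt d+s\big)\le e^{-s^2/2}.
\]

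Second, I take a union bound over $t=1,\dots,n$:
\[
\mathbb P\Big(\max_{t\le n}\|x_t\|\ge\sqrt d+s\Big)\le n\,e^{-s^2/2}.
\]
Choosing $s=\sqrt{2\log(n/\delta)}$ makes the right-hand side exactly $\delta$. This choice is legitimate because $n/\delta>1$ when $n\ge1$ and $\delta<1$, so the square root is of a positive number. On the complementary event, which has probability at least $1-\delta$, we get $R\le\sqrt d+\sqrt{2\log(n/\delta)}$.

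The only step that is not pure bookkeeping is the Gaussian concentration inequality for Lipschitz functions. I would quote it as a standard result. If a self-contained argument is preferred, it can instead be derived from the Gaussian log-Sobolev inequality through the Herbst argument, or from the Laurent--Massart $\chi^2_d$ tail bound applied to $\|N\|^2$. The Laurent--Massart route gives $\|N\|^2\le d+2\sqrt{ds}+2s\le(\sqrt d+\sqrt{2s})^2$ with probability at least $1-e^{-s}$. Setting $s=\log(n/\delta)$ then yields the same bound directly.
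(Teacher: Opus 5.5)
Your proposal is correct and follows essentially the same argument as the paper: Gaussian concentration for the $1$-Lipschitz map $x\mapsto\|x\|$, the bound $\mathbb{E}\|x_t\|\le\sqrt d$, a union bound over $t\le n$, and the choice $s=\sqrt{2\log(n/\delta)}$. Your Laurent--Massart alternative is also valid, since $d+2\sqrt{ds}+2s\le(\sqrt d+\sqrt{2s})^2$.
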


\begin{proof}
For $X\sim\mathcal N(0,I_d)$ the map $x\mapsto\|x\|$ is $1$-Lipschitz, hence for any $u>0$,
\(
\mathbb P(\|X\|\ge \mathbb E\|X\|+u)\le e^{-u^2/2}.
\)
Since $\mathbb E\|X\|\le \sqrt d$, for each $t$ we have
\(
\mathbb P(\|x_t\|\ge \sqrt d + u)\le e^{-u^2/2}.
\)
By a union bound over $t=1,\dots,n$,
\(
\mathbb P(R\ge \sqrt d + u)\le n\,e^{-u^2/2}.
\)
Choose $u=\sqrt{2\log(n/\delta)}$ to get $\mathbb P(R\ge \sqrt d + u)\le \delta$.
\end{proof}

\begin{lemma}
\label{lemma:strong-signal-separability-delta}
Let $x_t \stackrel{\mathrm{iid}}{\sim}\mathcal N(0,I_d)$ for $t=1,\dots,n$ and suppose the labels follow the logistic model
\[
\mathbb P(y_t=1\mid x_t)=\sigma(\langle x_t,\theta^*\rangle),\qquad \sigma(t)=\frac{1}{1+e^{-t}},
\]
with $\|\theta^*\|=B$ and $y_t\in\{\pm1\}$ for all $t$. Define $u:=\theta^*/\|\theta^*\|$.
Fix $\delta_1,\delta_2\in(0,1)$ and assume
\[
B \;\ge\; \frac{2n}{\sqrt{2\pi}\,\delta_1}\,.
\]
Then, with probability at least $1-\delta_1-\delta_2$, the dataset is linearly separable by the hyperplane normal to $u$, and its (unnormalized) margin satisfies
\[
\gamma\ :=\ \min_{1\le t\le n}\, y_t\,\langle u, x_t\rangle
\ \ge\ \frac{\delta_2}{2n}\,.
\]
\end{lemma}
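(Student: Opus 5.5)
The plan is to control two bad events separately: $\mathcal E_{\mathrm{lab}}$, where some label disagrees with $\mathrm{sign}\langle u,x_t\rangle$, and $\mathcal E_{\mathrm{mar}}$, where some $|\langle u,x_t\rangle|$ is smaller than $\delta_2/(2n)$. Each gets a union bound over $t$. Write $g_t:=\langle u,x_t\rangle$; by rotational invariance of $\mathcal N(0,I_d)$, the $g_t$ are i.i.d. $\mathcal N(0,1)$. Off $\mathcal E_{\mathrm{lab}}\cup\mathcal E_{\mathrm{mar}}$, every $t$ has $y_t=\mathrm{sign}(g_t)$, so $y_t\langle u,x_t\rangle=|g_t|\ge \delta_2/(2n)>0$. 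This gives both separability by the hyperplane normal to $u$ and the margin bound.

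\textbf{Margin event.} The standard Gaussian density is at most $1/\sqrt{2\pi}$, so $\mathbb P(|g_t|<\eta)\le 2\eta/\sqrt{2\pi}\le \eta$. Take $\eta=\delta_2/(2n)$. Then $\mathbb P(|g_t|<\eta)\le \delta_2/(2n)$, and a union bound gives $\mathbb P(\mathcal E_{\mathrm{mar}})\le \delta_2/2\le\delta_2$.

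\textbf{Label-noise event.} This is the main step. Conditionally on $x_t$, the label is wrong with probability $\sigma(-B|g_t|)=1/(1+e^{B|g_t|})\le e^{-B|g_t|}$. Taking expectation over $g_t$,
\[
\mathbb P\big(y_t\neq\mathrm{sign}(g_t)\big)\le \mathbb E\big[e^{-B|g_t|}\big]=2\int_0^\infty \frac{e^{-Bs}e^{-s^2/2}}{\sqrt{2\pi}}\,ds\le \frac{2}{\sqrt{2\pi}}\int_0^\infty e^{-Bs}\,ds=\frac{2}{\sqrt{2\pi}\,B}.
\]
A union bound over $n$ rounds gives $\mathbb P(\mathcal E_{\mathrm{lab}})\le 2n/(\sqrt{2\pi}B)$. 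The hypothesis $B\ge 2n/(\sqrt{2\pi}\delta_1)$ turns this into $\mathbb P(\mathcal E_{\mathrm{lab}})\le\delta_1$. The only care needed is to bound the conditional error probability before integrating over $x_t$, rather than first conditioning on the margin event.

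Combining the two events, with probability at least $1-\delta_1-\delta_2$ every sign is correct and every $|g_t|\ge\delta_2/(2n)$. Hence $y_t\langle u,x_t\rangle\ge \delta_2/(2n)$ for all $t$, which is the claim. One could also choose $\delta_1=\delta_2=\delta/3$ and use \cref{lemma:hp-R-delta} with $\delta/3$ to get the constants in \cref{cor:ewa-largeB-delta-unified}, but that is not needed here.
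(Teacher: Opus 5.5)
Your proof is correct and follows essentially the same route as the paper: a union bound over the label-flip events using $\mathbb P(y_t\neq\mathrm{sign}\langle u,x_t\rangle)\le \mathbb E[e^{-B|Z|}]\le 2/(\sqrt{2\pi}B)$, together with a union bound on the Gaussian small-ball event for the margin. The differences are cosmetic. You bound $e^{-s^2/2}\le 1$ inside the integral instead of completing the square and applying the Mills-ratio inequality, which gives the same constant. You also use the density bound $1/\sqrt{2\pi}$ to get $\delta_2/2$, where the paper uses the cruder $\mathbb P(|\langle u,x_t\rangle|\le\varepsilon)\le 2\varepsilon$.
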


\begin{proof}
Write $z_t:=\langle x_t,\theta^*\rangle$ and $Z\sim\mathcal N(0,1)$.
Conditional on $x_t$, the flip event $\{y_t \neq \operatorname{sign}(z_t)\}$ has probability
\(
\mathbb P(y_t \neq \operatorname{sign}(z_t)\mid x_t)=\sigma(-|z_t|)\le e^{-|z_t|}.
\)
Unconditionally,
\[
p_{\mathrm{flip}}
=\mathbb P(y_t \neq \operatorname{sign}(z_t))
\le \mathbb E[e^{-|z_t|}]
= \mathbb E[e^{-B|Z|}]
= \frac{2}{\sqrt{2\pi}} \int_0^\infty e^{-B z} e^{-z^2/2}\,dz
= \frac{2}{\sqrt{2\pi}}\, e^{B^2/2}\int_B^\infty e^{-t^2/2}\,dt.
\]
Using $\int_a^\infty e^{-t^2/2}\,dt \le a^{-1}e^{-a^2/2}$ for $a>0$ gives
\(
p_{\mathrm{flip}}\le \tfrac{2}{\sqrt{2\pi}}\cdot \tfrac{1}{B}.
\)
A union bound over $t=1,\dots,n$ yields
\[
\mathbb P\Big(\exists\,t:\ y_t \neq \operatorname{sign}\langle x_t,\theta^*\rangle\Big)
\ \le\ n\,p_{\mathrm{flip}}
\ \le\ \frac{2n}{\sqrt{2\pi}\,B}
\ \le\ \delta_1,
\]
by the assumed lower bound on $B$. Denote this “no flip” event by $\mathcal E_1$.

On $\mathcal E_1$ we have $y_t=\operatorname{sign}\langle x_t,\theta^*\rangle$ for all $t$, hence the sample is separated by $u$ with margin
\(
\gamma=\min_{t} |\langle u,x_t\rangle|.
\)
Since $\langle u, x_t\rangle \sim \mathcal N(0,1)$, for any $\varepsilon\in(0,1)$,
\(
\mathbb P(|\langle u,x_t\rangle|\le \varepsilon) \le 2\varepsilon.
\)
By a union bound over $t$,
\[
\mathbb P\Big(\min_{1\le t\le n} |\langle u,x_t\rangle| \le \varepsilon\Big)
\ \le\ 2n\varepsilon.
\]
Choosing $\varepsilon=\tfrac{\delta_2}{2n}$ gives
\(
\mathbb P(\gamma \le \delta_2/(2n)) \le \delta_2.
\)
Let $\mathcal E_2$ be the complementary event, so that on $\mathcal E_2$ we have $\gamma\ge \tfrac{\delta_2}{2n}$.

Finally,
\(
\mathbb P(\mathcal E_1\cap \mathcal E_2)\ge 1-\delta_1-\delta_2,
\)
and on $\mathcal E_1\cap\mathcal E_2$ the claims hold.
\end{proof}

\EWArblargeBiidUnified*

\begin{proof}
Define the events
\[
\mathcal E_1:=\Big\{\forall t,\ y_t=\operatorname{sign}\langle x_t,\theta^*\rangle\Big\},\quad
\mathcal E_2:=\Big\{\min_{t\le n}|\langle u,x_t\rangle|\ge \tfrac{\delta_2}{2n}\Big\},\quad
\mathcal E_3:=\Big\{\max_{t\le n}\|x_t\|\le \sqrt d+\sqrt{2\log\tfrac{n}{\delta_3}}\Big\},
\]
with $u:=\theta^*/\|\theta^*\|$. By Lemma~\ref{lemma:strong-signal-separability-delta} and the first term in
\eqref{eq:unified-B}, $\mathbb P(\mathcal E_1)\ge 1-\delta_1$. On $\mathcal E_1$ the sample is separated by $u$ and
$\gamma=\min_t y_t\langle u,x_t\rangle=\min_t |\langle u,x_t\rangle|$. Again by
Lemma~\ref{lemma:strong-signal-separability-delta}, $\mathbb P(\mathcal E_2)\ge 1-\delta_2$. By
Lemma~\ref{lemma:hp-R-delta}, $\mathbb P(\mathcal E_3)\ge 1-\delta_3$. A union bound yields
\[
\mathbb P(\mathcal E_1\cap \mathcal E_2\cap \mathcal E_3)\ \ge\ 1-\delta_1-\delta_2-\delta_3.
\]

On $\mathcal E:=\mathcal E_1\cap \mathcal E_2\cap \mathcal E_3$ we have
$\gamma\ge \delta_2/(2n)$ and $R\le \sqrt d+\sqrt{2\log(n/\delta_3)}$. To activate the $B$-independent branch of
Theorem~\ref{thm:ewa-interpolation-simple-threshold} it suffices to verify $B\ge \lambda_0/\sqrt{\,d-1\,}$, where
$\lambda_0=\frac{\log(2n)}{\alpha(t_1)}$ and
$\alpha(t)=\gamma t - R\sqrt{1-t^2}$, $t_1=\tfrac{1+t_0}{2}$, $t_0=1/\sqrt{1+\bar\gamma^2}$, $\bar\gamma=\gamma/R$.
By Lemma~\ref{lem:alpha-lb}, $\alpha(t_1)\ge \gamma/[2(2+\sqrt{2})]$, hence on $\mathcal E$
\[
\frac{\lambda_0}{\sqrt{\,d-1\,}}
\;=\;\frac{\log(2n)}{\alpha(t_1)\sqrt{\,d-1\,}}
\;\le\; \frac{2(2+\sqrt{2})\,\log(2n)}{\gamma\,\sqrt{\,d-1\,}}
\;\le\; \frac{4(2+\sqrt{2})\,n\,\log(2n)}{\delta_2\,\sqrt{\,d-1\,}}.
\]
By the second term in \eqref{eq:unified-B}, $B$ exceeds this quantity, so the $B$-independent branch applies and yields
\[
L_n^B\ \le\ (d-1)\,\log\Big(\frac{2}{\bar\gamma}\Big)+O\big(\log(d)\big),
\qquad \bar\gamma=\gamma/R.
\]
Finally, using $\gamma\ge \delta_2/(2n)$ and $R\le \sqrt d+\sqrt{2\log(n/\delta_3)}$, a simple algebra gives \eqref{eq:ewa-delta-final-unified}. Since $R_n\le L_n^B$ in the separable case, the same upper bound holds for $R_n$. Fixing $\delta_1=\delta_2 = \delta_3 = \tfrac{\delta}{3}$ completes the proof.
\end{proof}

\section{Additional experiments}
\label{sec-app: Experiments}

We now validate the theoretical results on a simple synthetic dataset. Similarly to \citet{rudi_binary_efficient_logreg}, we evaluate the performance of EW on the \emph{adversarial} 1-D dataset presented by \citet{hazan2014logistic}. In particular, the data $\{(x_t,y_t)\}_{t=1}^n$ are generated in an i.i.d. fashion as follows
\begin{equation*}
    (x_t, y_t) = 
    \begin{cases}
        (1-\frac{\sqrt{\varepsilon}}{2B}, \ 1), \ \text{w.p. } \frac{\sqrt{\varepsilon}}{2B}+\chi\frac{\varepsilon}{B} \\
        (\frac{\sqrt{\varepsilon}}{2B}, \ -1), \ \text{otherwise}
    \end{cases}.
\end{equation*}
We set $n\in\{75, 100, 150, 200, 300, 400\}$, $B=\log(n)$, $\varepsilon=0.01$ and $\chi\in\{\pm 1\}$. The experiment is averaged over $70$ simulations for $\chi=1$ and another $70$ for $\chi=-1$. 

\begin{figure}[h] 
  \centering
  \includegraphics[width=0.6\columnwidth]{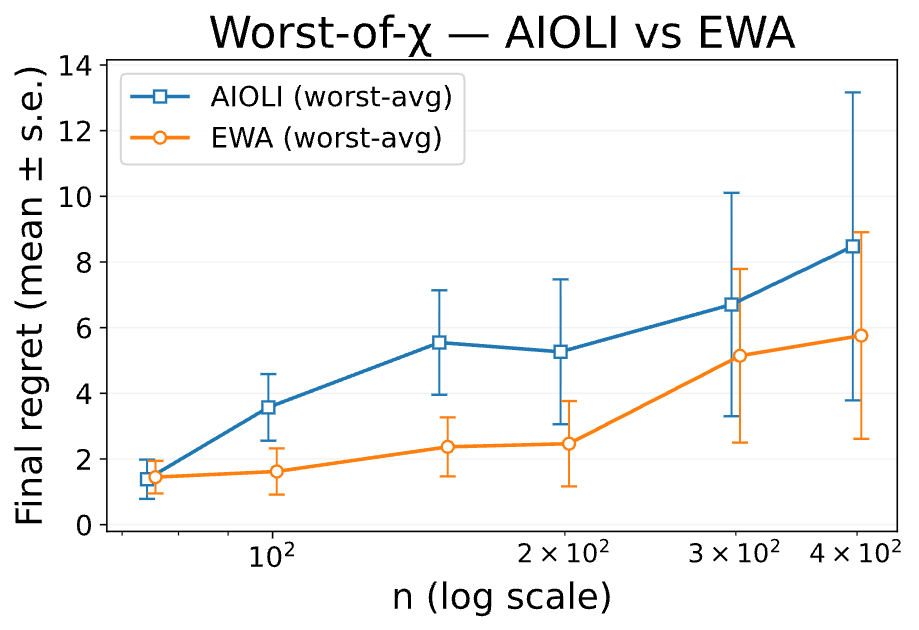}
  \caption{Worst-of-$\chi$ average regret with $\pm$ s.e. bars over $70$ runs on the adversarial data generating process presented in \citet{hazan2014logistic}.}
  \label{fig:aioli-ewa}
\end{figure}

We then plot in Figure \ref{fig:aioli-ewa} the worst of these two average regrets obtained by our EW (orange) and AIOLI (blue) \citep{rudi_binary_efficient_logreg} according to the value of $n$ (log-scale) and together with the corresponding standard errors. We observe that EW is slightly better than AIOLI; furthermore, the implementation of EW we adopted is computationally less expensive than the worst-case implementation proposed in the paper as we discuss in the next subsection.

\subsection{EW (cheaper) implementation}
To obtain a performance comparable to AIOLI (see Figure \ref{fig:aioli-ewa}), we relax the \emph{worst-case} sampling prescription used in the theory. In particular, instead of running the theoretically prescribed number of independent chains, we approximate the EW predictive distribution with a \emph{single} adaptive MALA chain targeting
\[
\rho_t(\theta)\propto \exp\Big(-\frac{\theta^2}{2B^2}\Big)\prod_{i<t}\sigma(y_i\,\theta x_i).
\]
Accordingly, the samples used in this subsection are Markov-dependent, so the i.i.d. concentration argument underlying the theorem does not apply. At round \(t\), we warm-start the chain at the last state from round \(t-1\) (or from \(\mathcal N(0,B^2)\) at \(t=1\)), and initialize the step size as
\[
h_t=\Big(10^{-3}+\frac14 R^2(t-1)+B^{-2}\Big)^{-1}.
\]
We then run a short pilot phase (typically 4--6 proposals) and adapt \(h_t\) multiplicatively until the acceptance rate falls in \([0.55,0.80]\). After that, we use a small constant burn-in (e.g., 8--12 MALA steps) and retain \(S=24\) samples by continuing the same chain, optionally with thinning \(\tau\in\{1,2\}\). The predictive probability is estimated by
\[
\hat p_t=\frac1S\sum_{k=1}^S \sigma\big(y_t\,\theta^{(k)}x_t\big),
\]
and the incurred loss is \(-\log \hat p_t\).

This implementation should be viewed purely as a practical heuristic. It is distinct from the worst-case construction analyzed in \cref{th: regret guarantee of COMPUTABLE EWA}: there, the quantity \(s_t\) denotes the number of \emph{independent} output samples required at round \(t\), and the corresponding complexity bound is stated in terms of the number of first-order oracle queries. Here, by contrast, we run one adaptive chain and retain several \emph{dependent} samples; accordingly, we discuss its arithmetic runtime separately.

Let \(M_t\) denote the total number of MALA transitions performed at round \(t\), including pilot adaptation, burn-in, thinning, and the transitions needed to produce the retained samples used in the Monte Carlo average. In our one-dimensional strongly log-concave setting, the posterior changes only mildly from one round to the next, so a warm-started chain typically mixes rapidly in practice. This motivates using fixed values of \(S\), \(\tau\), and burn-in, together with a constant transition budget \(M_t\). \\
At round \(t\), evaluating \(\nabla \log \rho_t(\theta)\) and \(\log \rho_t(\theta)\) requires one pass over the prefix \(\{(x_i,y_i)\}_{i=1}^{t-1}\); in \(d\) dimensions this costs \(\Theta((t-1)d)\) arithmetic operations, and in the one-dimensional setting considered here it is simply \(\Theta(t-1)\). Since one MALA transition requires a constant number of such evaluations, each transition costs \(\Theta((t-1)d)\). Therefore, if we perform \(M_t\) transitions at round \(t\), the arithmetic cost of round \(t\) is
\[
\Theta\big(M_t (t-1)d\big),
\]
and the total arithmetic cost up to time \(n\) is
\[
T_{\mathrm{EW}}(n)
=
\Theta\Bigg(d\sum_{t=1}^n M_t (t-1)\Bigg).
\]

With a fixed practical budget \(M_t\le M\) independent of \(t\), this yields
\[
T_{\mathrm{EW}}(n)=\Theta(d\,M\,n^2)=\Theta(d\,n^2),
\]
where the last equality hides only the constant \(M\). In particular, in one dimension we obtain \(\Theta(n^2)\). More generally, if \(M_t=\Theta(t^\alpha)\), then
\[
T_{\mathrm{EW}}(n)=\Theta\big(d\,n^{2+\alpha}\big).
\]

We stress that this arithmetic-runtime estimate is \emph{not} directly comparable to the worst-case \(\widetilde O(B^3n^5)\) guarantee in \cref{th: regret guarantee of COMPUTABLE EWA}, since the latter is a bound on total first-order oracle complexity for the independent-sample implementation analyzed in the theory. Rather, the point of this subsection is that a warm-started single-chain heuristic can be implemented much more cheaply in practice while preserving similar predictive behavior in our experiments.

\end{document}